\newtheorem{theorem}{Theorem}[section]
\newtheorem{lemma}[theorem]{Lemma}
\newtheorem{definition}[theorem]{Definition}
\newtheorem{conjecture}[theorem]{Conjecture}
\newcommand{\reals}{{\mathbb{R}}}
\newcommand{\naturals}{{\mathbb{N}}}
\newcommand{\eexp}{{\rm e}}
\newcommand{\diff}{{\rm d}}
\newcommand{\expect}[1]{\mathbb{E}\left[ #1 \right]}
\newcommand{\prob}[1]{ \mathbb{P}\left\{ #1 \right\} }
\newcommand{\var}{\mathsf{var}}
\newcommand{\Bern}{{\rm Bern}}
\newcommand{\Binom}{{\rm Binom}}
\newcommand{\Pois}{{\rm Pois}}
\newcommand{\eg}{e.g.\xspace}
\newcommand{\ie}{i.e.\xspace}
\newcommand{\indc}[1]{{\mathbf{1}_{\left\{{#1}\right\}}}}
\newcommand{\calE}{{\mathcal{E}}}
\newcommand{\calN}{{\mathcal{N}}}
\newcommand{\calO}{{\mathcal{O}}}
\newcommand{\calT}{{\mathcal{T}}}
\newcommand{\ER}{Erd\H{o}s-R\'enyi\xspace}
\renewcommand{\tilde}{\widetilde}
\renewcommand{\hat}{\widehat}
\begin{document}

\title{Density Evolution in the
Degree-correlated Stochastic Block Model}

\author{ 
Elchanan Mossel \and Jiaming Xu\thanks{
E. Mossel is with Department of Statistics, The Wharton School, University of Pennsylvania, Philadelphia, PA and with
the Departments of Statistics and Computer Science, U.C. Berkeley, Berkeley CA, \texttt{mossel@wharton.upenn.edu}.
J.~Xu is with the Simons Institute for the Theory of Computing, University of California, Berkeley, Berkeley, CA, 
\texttt{jiamingxu@berkeley.edu}.}
}




\maketitle

\begin{abstract}
There is a recent surge of interest in identifying the sharp recovery thresholds for cluster recovery under the
stochastic block model. In this paper, we address the more refined question of how many vertices that will be misclassified
on average. We consider the binary form of the stochastic block model,
where $n$ vertices are partitioned into  two clusters
with edge probability
$a/n$ within the first cluster, $c/n$ within the second cluster, and
$b/n$ across clusters.  Suppose that as $n \to \infty$,
$a= b+ \mu \sqrt{ b} $,  $c=b+  \nu \sqrt{ b} $ for two fixed
constants $\mu, \nu$, and $b \to \infty$ with $b=n^{o(1)}$.
When the cluster sizes are balanced and $\mu \neq \nu$,
we show that
the minimum fraction of misclassified vertices on average is given by  $Q(\sqrt{v^*})$,
where $Q(x)$ is the Q-function for standard normal,
$v^*$ is the unique fixed point of $v= \frac{(\mu-\nu)^2}{16} + \frac{ (\mu+\nu)^2 }{16} \mathbb{E}[ \tanh(v+ \sqrt{v} Z)]$,
and $Z$ is standard normal. Moreover, the minimum  misclassified  fraction on average is attained by a local algorithm, namely
belief propagation, in time linear in the number of edges. Our proof techniques are based on connecting the cluster recovery problem to
tree reconstruction problems, and analyzing the density evolution of belief propagation on trees with Gaussian approximations.
\end{abstract}

\section{Introduction}
The problem of cluster recovery under the stochastic block model has
been intensely studied in statistics~\cite{HLL:1983,
SN:1997, BC:2009,cai2014robust,Zhangzhou15,Gao15}, computer science (where it is known as the planted partition
problem)~\cite{DF:1989,JS:1998,CK:2001,McSherry:2001,Coja-Oghlan05,Coja-oghlan10,Chen12, anandkumar2013tensormixed,ChenXu14}, and theoretical
statistical physics~\cite{DKMZ:2011, ZKRZ:2012, DKMZ:2011a}. In the simplest binary form, the stochastic block model assumes that
$n$ vertices are partitioned into two clusters with edge probability $a/n$ within the first cluster, $c/n$ within the second cluster, and $b/n$
across the two clusters. The goal is to reconstruct the underlying clusters from the observation of the graph.
Different reconstruction goals can be considered depending on how the model parameters $a,b,c$ scale with $n$ (See
\cite{AbbeSandon15} for more discussions):
\begin{itemize}
\item \emph{Exact recovery (strong consistency)}. If the average degree is $\Omega(\log n)$,
it is possible to exactly recover the clusters (up to a permutation of cluster indices) with high probability.
In the case with two equal-sized clusters, and  $a=c=\alpha \log n /n$ and $b=\beta \log n /n$ for two fixed $\alpha, \beta>0$,
 a sharp exact recovery threshold $(\sqrt{\alpha}-\sqrt{\beta})^2\ge 2$ has been found in \cite{Mossel14,Abbe14} and it is further shown that semi-definite programming can achieve the sharp threshold in \cite{HajekWuXuSDP14,Bandeira15}. The threshold for two unequal-sized clusters is proved in \cite{HajekWuXuSDP15}.
Exact recovery threshold with a fixed number of clusters  has been identified in \cite{HajekWuXuSDP15,YunProutiere14,ABBK}, and
more generally in \cite{AbbeSandon15,PerryWein15} with heterogeneous cluster sizes and edge probabilities.

\item \emph{Weak recovery (weak consistency)}. If the average degree is $\Omega(1)$, one can hope for
misclassifying  only $o(n)$ vertices  with high probability, which is
 known as weak recovery or weak consistency.
 In the setting with two approximately equal-sized clusters and $a=c$, it is shown in \cite{yun2014adaptive,Mossel14} that weak recovery is possible if and only $(a-b)^2/(a+b) \to \infty.$

\item \emph{Correlated recovery (non-trivial detection)}. If the average degree is $\Theta(1)$,
exact recovery or weak recovery becomes hopeless as
the resulting graph under the stochastic block model will have at least a constant fraction of
isolated vertices.
 Moreover, it is easy to see that even
vertices with constant degree cannot be labeled accurately given all the other
vertices' labels are revealed.
Thus one goal in the sparse graph regime is to find a partition positively correlated with the true one (up to a permutation
of cluster indices), which is also called non-trivial detection. In the setting with two approximately equal-sized clusters and $a=c$,
it was first conjectured in~\cite{DKMZ:2011} and
later proven in~\cite{MONESL:15,MNS:2013b,Massoulie:2013} that correlated recovery is feasible if and only if $(a-b)^2>2(a+b)$.
A spectral method based on the non-backtracking matrix is recently shown to achieve the sharp threshold in \cite{BordenaveLelargeMassoulie:2015dq}.
\end{itemize}

In practice, one may be interested in the finer question of how many vertices that will be misclassified on expectation or with high probability.
In the two equal-sized clusters setting, previous results on exact recovery, weak recovery, and correlated recovery
 provide conditions under which the minimum fraction of misclassified vertices on average
is $o(1/n)$, $o(1)$, and strictly smaller than $1/2$, respectively.
By assuming $(a-b)^2/(a+b) \to \infty$, recent work \cite{Zhangzhou15,Gao15} showd that the expected misclassified fraction decays to zero exponentially fast and gives a sharp characterization of the decay exponent under a minimax framework. However, all these previous results do not shed light on the important question of when it is possible to misclassify only $\epsilon$ fraction of vertices on
expectation, for any finite $\epsilon \in (0,1/2).$
To the best of our knowledge, it is an open problem
to find a closed-form expression of the expected misclassified fraction in terms of the model parameters. In this paper, we give such a simple formula
in the special case of two approximately equal-sized clusters.
Specifically, suppose that
\begin{align}
a=b+ \sqrt{b} \mu, \qquad c=b+ \sqrt{b} \nu, \qquad
b \to \infty, \qquad b =n^{o(1)},
\label{eq:asymptotics}
\end{align}
for two fixed constants $\mu, \nu$. We further assume that
$\mu \neq \nu$ so that the vertex degrees are statistically correlated with the cluster structure,
and hence the name of the \emph{degree-correlated stochastic block model}.
We show that the minimum  fraction of misclassified vertices on average is given by $Q(\sqrt{v^*})$,
where $Q(x)$ is the Q-function for standard normal,
$v^\ast$ is the unique fixed point of $v= \frac{(\mu-\nu)^2}{16} + \frac{ (\mu+\nu)^2 }{16} \expect{ \tanh(v+ \sqrt{v} Z)}$,
and $Z$ is standard normal. Moreover,
the minimum expected misclassified fraction can be attained by a local algorithm, namely
belief propagation (BP) algorithm (See Algorithm \prettyref{alg:MP_commun}), in time $O(n b^2)$.  The local belief propagation algorithm can be viewed as an iterative algorithm
which improves on the misclassified fraction on average step by step; running  belief propagation for one iteration reduces to the simple
thresholding algorithm based on vertex degrees.
It is crucial to assume $\mu \neq \nu$ for the above results to hold,
otherwise it is well-known  (see e.g.~\cite{KaMoSc:14}) that no local algorithm
can even achieve the non-trivial detection. Nevertheless, under a slightly stronger assumption that $b \to \infty$ and $b=o(\log n)$,
we show that if $\mu =\nu$ with $|\mu|>2$,
local belief propagation combined with a global algorithm capable of non-trivial detection when $|\mu|>2$,
attains the minimum expected misclassified fraction $Q(\sqrt{\overline{v} })$ in polynomial-time, where $\overline{v}$
is the largest fixed point of $v=\frac{ \mu^2 }{4} \expect{ \tanh(v+ \sqrt{v} Z)}.$

When the clusters sizes are unbalanced, \ie, one cluster is of size approximately $\rho n$ for $\rho \in (0,1/2)$, 
we give a lower bound on the minimum expected misclassified fraction, and an upper bound
 attained by the local belief propagation algorithm. However, we are unable to prove that the upper bound matches the lower bound.
 In fact, numerical experiments suggest that there exists a gap between the upper and lower bound when the cluster sizes are very
 unbalanced, \ie, $\rho$ is close to $0$.

Our proofs are mainly based on two useful techniques introduced in previous work. First, in the regime \prettyref{eq:asymptotics}, the observed graph
is locally tree-like, so we connect the cluster recovery problem to reconstruction problems on trees, and for the tree problems, the optimal estimator can be computed by
belief propagation algorithm. Such connection has been investigated before in \cite{MONESL:15, MNS:2013a, MosselXu15}.
Second, we characterize the  density evolution of belief propagation on trees with Gaussian approximations, and
as a result, we get a recursion with the largest fixed point
corresponding to a lower bound on the minimum expected misclassified fraction, and
the smallest fixed point correspond to the expected misclassified fraction attained by the local belief propagation  algorithm.
Density evolution  has been widely used for the analysis of multiuser detection \cite{MontanariTse06} and sparse
graph codes \cite{Urbanke08,Mezard09}, and more recently
has been introduced for the analysis of finding a single community in a sparse graph \cite{Montanari:15OneComm}.
As a final piece, we prove that in the balanced cluster case, the recursion has a unique fixed point
using the ideas of symmetric random variables   \cite{Urbanke08,Montanari05} and the first-order stochastic dominance,
thus establishing the tightness of the lower bound and the optimality of the local BP simultaneously.

We  point out that  local algorithm by itself is a thriving research area  (see~\cite{LyonsNazarov:11,HaLoSz:12,GamarnikSudan:14} and the references therein).
Intuitively, local algorithms are one type of algorithms that make decision for each vertex
just based  on the neighborhood of small radius around the vertex; these algorithms are by design easy to run in a distributed fashion. Under the context of community detection,
local algorithms determine which community each vertex lies in just based on  the local neighborhood around each vertex (see \cite{Montanari:15OneComm} for a formal
definition). Recent work \cite{MosselXu15} shows that with the aid of extra noisy label information on cluster structure,
the local algorithms can be optimal in minimizing the expected misclassified fraction in the stochastic block model.
In comparison, we show that when the vertex degrees are correlated with the cluster structure,
the local algorithms can be optimal even without the extra noisy label information.

In closing, we compare our results with the recent results in~\cite{Montanari:15OneComm,HajekWuXu_one_beyond_spectral15}, which studied the problem of
finding a single community of size $\rho n$ in a sparse graph. When $\nu=0$, \ie $b=c$, the stochastic block model considered in this paper, specializes to
the single community model studied in~\cite{Montanari:15OneComm}, and the recursion  of density evolution derived in this paper
reduces to the recursion derived in \cite[Eq. (36)]{Montanari:15OneComm}. It is shown in \cite{Montanari:15OneComm,HajekWuXu_one_beyond_spectral15} that
the local algorithm is strictly suboptimal comparing to the global exhaustive search when $\rho \to 0$. In contrast, we show that
if $\rho=1/2$, the local algorithm is optimal in minimizing the expected fraction of misclassified vertices as long as $\mu \neq \nu$,
and give a sharp characterization of the minimum expected misclassified fraction.

\paragraph*{Parallel Independent Work}
The problem of cluster recovery
under the degree-correlated stochastic block model with multiple clusters was
independently studied in \cite{ZhangMooreNewman15}.
Based on the cavity method and numerical simulations, it
is shown that with at most four clusters of unequal sizes but same in and out degrees,
the non-trivial detection threshold phenomenon disappears, making
the minimum fraction of misclassified vertices on average a continuous function of model
parameters. In comparison, in the regime \prettyref{eq:asymptotics} with two equal-sized clusters and $\mu \neq \nu$,
we give a more precise answer, showing that the fraction of misclassified vertices on average is
$Q(\sqrt{v^\ast})$, where $v^\ast$ is the unique fixed point of $v= \frac{(\mu-\nu)^2}{16} + \frac{ (\mu+\nu)^2 }{16} \expect{ \tanh(v+ \sqrt{v} Z)}$.
Moreover, it is shown in \cite{ZhangMooreNewman15} that with more than four clusters of unequal sizes, there exists a regime where
two stable fixed points coexist, with the smaller one corresponding to the performance of
local belief propagation, and the larger one corresponding to the performance of belief propagation
initialized based on the true cluster structure. We find that the same phenomenon also happens
in the case of two clusters with very unbalanced sizes and different in and out degrees (See  \prettyref{sec:open} for
details).

We recently became aware of the work \cite{DeshapandeAbbeMontanari15},
who studied the problem of cluster recovery under the
stochastic block model in the symmetric setting with two equal-sized clusters and $a=c$.
By assuming that $\frac{(a-b)^2}{2(a+b) (1- (a+b)/2n)} \to \mu$ for a fixed constant $\mu$ and
$(a+b)(1-(a+b)/2n) \to \infty$,
a sharp characterization of the per-vertex mutual information between the vertex labels and
the graph is given in terms of $\mu$ and $\overline{v}$,
where $\overline{v}$ is the largest fixed point  of  $v=\frac{ \mu^2 }{4} \expect{ \tanh(v+ \sqrt{v} Z)}$.
In comparison, we show that
the minimum fraction of vertices misclassified on expectation is given by $Q(\sqrt{\overline{v}})$ and it is attainable
in polynomial time with an additional technical assumption that $b=o(\log n)$.
Interestingly, the point (a) of Lemma 6.1 in \cite{DeshapandeAbbeMontanari15} is a special case of
 \prettyref{lmm:hmonotone} with $\rho=1/2$ in our paper.
The proof of  Lemma 6.1 given in \cite{DeshapandeAbbeMontanari15} and the proof of  \prettyref{lmm:hmonotone}  given in this paper
are similar: both used the ideas of symmetric random variables   \cite{Urbanke08,Montanari05}. One slight difference is that  to prove the concavity of the mapping in the recursion when $\rho=1/2$, we used the first-order stochastic dominance, while \cite{DeshapandeAbbeMontanari15} computes the second-order derivative.

\section{Model and Main Results}

We consider the binary stochastic block model with $n$ vertices partitioned into two clusters, where
each vertex is independently assigned into the first cluster  with probability $\rho \in (0,1)$ and the second cluster
with probability $\bar \rho \triangleq 1-\rho $.\footnote{Notice that the cluster sizes are random and concentrate on $\rho n$ and $(1-\rho) n$. 
A slightly different model assumes that the vertices are partitioned into two clusters of deterministic sizes, exactly given by $\rho n$ and $(1-\rho )n$. 
The two models behave similarly, but for ease of analysis, we focus on the random cluster size model in this paper.} Each pair of vertices is connected independently with probability $a/n$ if two
vertices are in the first cluster, with probability $c/n$ if they are in the second cluster, and
with probability $b/n$ if they are in two different clusters.  Let $G=(V,E)$ denote the observed graph and  
$A$ denote the adjacency matrix of the graph $G$. Let $\sigma$
denote the underlying vertex labeling such that $\sigma_i=+$ if vertex $i$ is in the first cluster and $\sigma_i=- $ otherwise.
The model parameters
$\{\rho, a, b, c\}$ are assumed to be known, and the goal is to estimate the vertex labeling $\sigma$ from the observation of $G. $
More precisely, we have the following definition.
\begin{definition}
The {\em reconstruction problem on the graph} is the  problem of inferring $\sigma$
 from the observation of $G$. The expected fraction of vertices  misclassified by an estimator $\hat{\sigma}$ is given by
\begin{align}
p_{G}(\hat{\sigma})= \frac{1}{n}  \sum_{i=1}^n \prob{\sigma_i \neq \hat{\sigma}_i}. \label{eq:estimationaccuracy}
\end{align}
Let $p^\ast_{G}$  denote the minimum expected misclassified fraction among all
possible estimators based on $G$.
\end{definition}
The optimal estimator in minimizing the error probability $ \prob{\sigma_i \neq \hat{\sigma}_i}$ is the maximum a posterior (MAP) estimator, which is given by
$
 2 \times  \indc{ \prob{\sigma_i=+  | G } \ge  \prob{\sigma_i=-  |G} } -1,
$
and the minimum error probability is given by $ \frac{1}{2}- \frac{1}{2} \expect{| \prob{\sigma_i=+  | G} -  \prob{\sigma_i=-  |G } |}$.
Hence, the minimum expected misclassified fraction $p^\ast_{G}$ is given by
\begin{align}
p_{G} ^\ast  & =  \frac{1}{2} - \frac{1}{2n}  \sum_{i=1}^n \expect{ \big| \prob{\sigma_i=+  | G } -  \prob{\sigma_i=-  |G } \big| } \nonumber  \\
&=  \frac{1}{2} - \frac{1}{2} \expect{ | \prob{\sigma_i=+  | G } -  \prob{\sigma_i=-  |G } |}, \label{eq:optimalaccuracygraph}
\end{align}
where the second equality holds due to the symmetry among vertices. In the special case with $\rho=1/2$ and $a=c$, 
 the two clusters are symmetric; thus $p_{G} ^\ast=1/2$ and one can only hope to estimate $\sigma$ up to a global flip of 
 sign. In general, computing the MAP estimator is computationally intractable and it is unclear
whether the minimum expected misclassified fraction  $p^\ast_{G}$ can be achieved by some estimator computable in polynomial-time.

Throughout this paper, we assume that $\rho$ is fixed and focus on the regime \prettyref{eq:asymptotics}. As the average degree is $n^{o(1)}$, it is well-known that a local neighborhood of a vertex is a tree with high probability. Thus, it is natural
to study the local algorithms. More precisely, we consider a local belief propagation algorithm to approximate the MAP estimator in
the next subsection.

\subsection{Local Belief Propagation Algorithm}
Our local belief propagation algorithm is given in Algorithm \prettyref{alg:MP_commun}. Specifically, let $\partial i$ denote the set of neighbors of $i$,
and $F(x) = \frac{1}{2} \log \left(  \frac{ \eexp^{2x } \rho a   + \bar \rho b  }{ \eexp^{2x } \rho b +\bar \rho c} \right)$.
Let $d_+= \rho a+ \bar \rho b$ and $d_-= \rho b + \bar \rho c$ denote the expected vertex degree in the first and second cluster,
respectively. Define
the message transmitted from vertex $i$ to vertex $j$ at $t$-th iteration as
\begin{align}
R_{i \to j}^t  = \frac{- d_+ + d_- }{2}  + \sum_{\ell \in \partial i \backslash \{j\} }  F (R^{t-1}_{\ell \to i} ), \label{eq:mp_commun}
\end{align}
with initial conditions $R_{i \to j}^{0} =0$ for all $i \in [n]$ and $j \in \partial i$.
Then we  define the belief of vertex $u$ at $t$-th iteration $R_u^t$ to be
\begin{align}
R_{u}^t  = \frac{- d_+ + d_- }{2}  + \sum_{\ell \in \partial u }  F(R^{t-1}_{\ell \to u} ) \label{eq:mp_combine_commun}.
\end{align}
\begin{algorithm}[htb]
\caption{Belief propagation for cluster recovery}\label{alg:MP_commun}
\begin{algorithmic}[1]
\STATE Input: $n \in \naturals,$ $\rho \in (0,1)$, $a/b, c/b$,  adjacency matrix $A \in \{0,1\}^{n\times n}$,  and $t \in \naturals$.
\STATE Initialize: Set  $R^{0}_{i \to j}=0$ for all $i \in [n]$ and $ j \in \partial i$.
\STATE Run $t-1$ iterations of message passing as in \prettyref{eq:mp_commun} to compute $R^{t -1 }_{i\to j}$ for all $i \in [n]$ and $j \in \partial i$.
\STATE Compute  $R_{i}^{t}$ for all $i \in [n]$ as per \prettyref{eq:mp_combine_commun}.
\STATE Return $\hat{\sigma}_{\rm BP}^t$ with $\hat{\sigma}^t_{\rm BP} (i)= 2 \times \indc{ R^{t}_i \ge -\varphi }  -1,$ where $\varphi= \frac{1}{2} \log \frac{\rho}{1-\rho}$.
\end{algorithmic}
\end{algorithm}

As we will show in \prettyref{sec:connectiongraphtree}, the message passing  as in \prettyref{eq:mp_commun} and \prettyref{eq:mp_combine_commun}
exactly computes the log likelihood ratio for a problem of inferring $\sigma_u$ on a suitably defined tree model with root $u$.
Moreover, in the regime \prettyref{eq:asymptotics}, there exists a coupling such that the local neighborhood of a fixed vertex $u$ is
the same as the tree model rooted at $u$ with high probability.
These two observations together
suggest that $R_{u}^t $ is a good approximation of $\frac{1}{2} \log \frac{\prob{G| \sigma_u=+} }{\prob{G | \sigma_u=-}}$, and thus
we can estimate $\sigma_u$ by truncating $R_u^t$ at the optimal threshold $-\varphi=\frac{1}{2} \log \frac{1-\rho}{\rho}$, according to the MAP rule.

We can see from \prettyref{eq:mp_commun} that in each BP iterations,
each vertex  $i$ needs to compute $|\partial i|$ outgoing messages.
To this end, $i$ can first compute $R_{i}^t$ according to \prettyref{eq:mp_combine_commun},
and then subtract $F(R_{j \to i}^{t-1} )$ from $R_i^t$ to get $R_{i \to j}^t$
for every neighbor $j$ of $i$. In this way,
each BP iteration runs in time $O( m )$, where $m$ is the total number of edges.
Hence $\hat{\sigma}_{\rm BP}^t$ can be computed in time $O (t m).$

Finally, notice that  Algorithm \prettyref{alg:MP_commun} needs to know the parameters $\{\rho, a/b, c/b\}$.
For the main results of this paper continue to hold,  the values of the parameters are only needed to know up to $o(1)$ additive errors.   
In fact, there exists a fully data-driven procedure to consistently estimate those parameters, see \eg, \cite{HajekWuXuSDP15}[Appendix B].

\subsection{Main Results}\label{sec:mainresults}
The following theorem characterizes the expected fraction of vertices misclassified by $\hat{\sigma}_{\rm BP}^t$ as $n \to \infty$;
it also gives a lower bound on the minimum expected misclassified fraction as $n \to \infty$. Furthermore, in the case $\rho=1/2$ and $\mu \neq \nu$, $\hat{\sigma}_{\rm BP}^t$  achieves the lower bound as $t \to \infty$ after $n \to \infty.$

\begin{theorem}\label{thm:optimalaccuracy}
Assume $\rho \in (0,1)$ is fixed and consider the regime \prettyref{eq:asymptotics}.
 Let $$
 h(v) = \expect{ \tanh ( v + \sqrt{v} Z + \varphi )   },
 $$
 where $Z \sim \calN(0,1)$ and $\varphi= \frac{1}{2} \log \frac{\rho}{1-\rho}$.
Let $\lambda=  \frac{\rho( \mu+\nu)^2}{8} $ and $\theta= \frac{\rho(\mu-\nu)^2 }{8} +  \frac{(1 - 2\rho ) \nu^2}{4}$.
 Define $\underline{v}$ and $\overline{v}$ to be the smallest and largest fixed point of
 $$v= \theta+ \lambda h(v),$$
 respectively\footnote{The existence of fixed points of $v \mapsto  \theta+ \lambda h(v)$ follows from
 Brouwer's fixed-point theorem and the fact that $h(v) \le 1$. }. Define $(v_t: t \ge 0)$ recursively by $v_0=0$ and $
v_{t+1} = \theta+ \lambda h(v_t).$
 Let $\hat{\sigma}_{\rm BP}^t$ denote the estimator given by Belief Propagation applied for $t$ iterations, as defined in Algorithm \prettyref{alg:MP_commun}.
  Then $\lim_{t \to \infty} v_t = \underline{v},$ $(\rho \mu - \bar \rho \nu )^2/4   \le \underline{v}\le \overline{v}  \le (\rho \mu^2 + \bar \rho \nu^2)/4,$
  and
 \begin{align*}
\lim_{n \to \infty} p_{G } (\hat{\sigma}_{\rm BP}^t ) & = \rho Q \left(\frac{ v_t  + \varphi }{ \sqrt{v_t } } \right) + (1-\rho) Q \left(\frac{ v_t - \varphi }{ \sqrt{v_t } } \right) ,  \\
 \liminf_{n \to \infty}p_{G}^\ast & \ge \rho Q \left(\frac{ \overline{v}  + \varphi }{ \sqrt{\overline{v}  } } \right) +  (1-\rho) Q \left(\frac{ \overline{v}  - \varphi }{ \sqrt{\overline{v}  } } \right) ,
 \end{align*}
 where $Q(x)=\int_{x}^{+\infty} \frac{1}{\sqrt{2\pi}} \eexp^{-y^2/2} \diff y$.
Moreover, if $\rho=1/2$ and $\mu \neq \nu$, then $\underline{v}=\overline{v}= v^\ast$,
 and thus
 \begin{align*}
 \lim_{t \to \infty} \lim_{n \to \infty} p_{G}(\hat{\sigma}_{\rm BP}^t )= \lim_{n \to \infty}p_{G}^\ast =  Q(\sqrt{ v^\ast }),
 \end{align*}
 where $v^\ast$ is the unique fixed point of  $v= \frac{(\mu-\nu)^2}{16} + \frac{ (\mu+\nu)^2 }{16} \expect{ \tanh(v+ \sqrt{v} Z)}.$
\end{theorem}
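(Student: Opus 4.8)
The plan is to transfer the reconstruction problem on $G$ to one on a multi-type Galton--Watson tree, on which belief propagation is exact; to run the density evolution of these messages in the regime $b\to\infty$ via a Gaussian (central-limit) approximation, obtaining the scalar recursion $v_{t+1}=\theta+\lambda h(v_t)$; and, when $\rho=1/2$, to show this recursion has a unique fixed point, so that its smallest and largest fixed points coincide. By the vertex symmetry already exploited in \prettyref{eq:optimalaccuracygraph} it suffices to work with a single fixed vertex $u$: $p_G(\hat\sigma^t_{\rm BP})=\prob{\hat\sigma^t_{\rm BP}(u)\neq\sigma_u}$, and $p_G^\ast=\tfrac12-\tfrac12\expect{|\prob{\sigma_u=+\mid G}-\prob{\sigma_u=-\mid G}|}$. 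Since the expected degree is $n^{o(1)}$, the radius-$t$ neighbourhood of $u$ has $n^{o(1)}=o(n)$ vertices and is a tree with probability $1-o(1)$; a standard coupling (as in \cite{MONESL:15,MNS:2013a,MosselXu15}) identifies it with the multi-type Poisson Galton--Watson tree $T$ in which a vertex labelled $\pm$ has $\mathrm{Poisson}(d_\pm)$ children, each labelled with the conditional probabilities induced by $a,b,c$. As spelled out in \prettyref{sec:connectiongraphtree}, on $T$ the updates \prettyref{eq:mp_commun}--\prettyref{eq:mp_combine_commun} compute exactly the log-likelihood ratio of $\sigma_u$ given the depth-$t$ truncation of $T$ (up to the factor $\tfrac12$ built into $F$), so $\hat\sigma^t_{\rm BP}(u)$ is the corresponding Bayes-optimal label and $\lim_n p_G(\hat\sigma^t_{\rm BP})$ equals its Bayes error. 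For the lower bound, reveal in addition the true labels at depth $r$ of $T$: on the tree-like event, $\sigma_u$ and the part of the graph beyond depth $r$ are conditionally independent given those labels, so revealing them can only decrease the Bayes error, whence $\liminf_n p_G^\ast\ge(\text{Bayes error on }T\text{ with a genie boundary at depth }r)$ for every fixed $r$.

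For the density evolution, condition on the label of an internal vertex $i$ of $T$. Then $R^t_{i\to j}$ is the constant $\tfrac{-d_++d_-}{2}$ plus a $\mathrm{Poisson}(\Theta(b))$-indexed sum of i.i.d.\ terms $F(R^{t-1}_{\ell\to i})$. Writing $a/b=1+\mu/\sqrt b$, $c/b=1+\nu/\sqrt b$ and Taylor expanding, each term is $O(1/\sqrt b)$, its mean over the children is $O(1/\sqrt b)$, and after summing over $\Theta(b)$ children the $O(\sqrt b)$ drift cancels against $\tfrac{-d_++d_-}{2}$, leaving an $O(1)$ mean and $O(1)$ variance. A Lindeberg-type central limit theorem then shows that, conditionally on $\sigma_i=\pm$, $R^t_{i\to j}$ (and likewise $R^t_u$) converges as $b\to\infty$ to a Gaussian; tracking the label bias among the children and expanding $F$ to order $1/\sqrt b$, a direct computation shows this Gaussian has the symmetric form $R^t_u\mid\sigma_u=\pm\sim\calN(\pm v_t,v_t)$ with variance equal to $|\mathrm{mean}|$ (a property preserved by induction on $t$) and $v_{t+1}=\theta+\lambda h(v_t)$, $v_0=0$. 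Thresholding $R^t_u$ at $-\varphi$ then yields $\lim_n p_G(\hat\sigma^t_{\rm BP})=\rho Q\!\big(\tfrac{v_t+\varphi}{\sqrt{v_t}}\big)+(1-\rho)Q\!\big(\tfrac{v_t-\varphi}{\sqrt{v_t}}\big)$. Repeating the computation with the genie boundary starts the recursion from a known label, i.e.\ effectively from $v=+\infty$ (equivalently from $\theta+\lambda$ after one inward step, since $h(+\infty)=1$), and iterates it $r$ times. Since $h$ is non-decreasing (a larger $v$ corresponds to a strictly more informative Gaussian observation, for which the overlap is larger), the map $v\mapsto\theta+\lambda h(v)$ is non-decreasing, so the genie recursion decreases to the largest fixed point $\overline v$ as $r\to\infty$; combined with the previous paragraph this gives $\liminf_n p_G^\ast\ge\rho Q\!\big(\tfrac{\overline v+\varphi}{\sqrt{\overline v}}\big)+(1-\rho)Q\!\big(\tfrac{\overline v-\varphi}{\sqrt{\overline v}}\big)$. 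The same monotonicity gives $0=v_0\le v_1=\theta+\lambda h(0)$, hence $v_t\uparrow\underline v$, the smallest fixed point. Finally $\underline v\ge v_1=\theta+\lambda\tanh\varphi=\theta+\lambda(2\rho-1)$, which simplifies to $(\rho\mu-\bar\rho\nu)^2/4$, while $h\le1$ forces every fixed point to be at most $\theta+\lambda=(\rho\mu^2+\bar\rho\nu^2)/4$.

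When $\rho=1/2$ we have $\varphi=0$, $\theta=(\mu-\nu)^2/16>0$ (using $\mu\neq\nu$), $\lambda=(\mu+\nu)^2/16$, and $h(v)=\expect{\tanh(v+\sqrt v Z)}$ with $h(0)=0$, $h$ increasing and $h\le1$. The crucial input is that $h$ is \emph{concave} on $[0,\infty)$, which I would prove by representing $v+\sqrt v Z$ as one half of the log-likelihood ratio of a scalar Gaussian channel, invoking the symmetric-random-variable calculus of \cite{Urbanke08,Montanari05}, and comparing the posteriors at two noise levels via first-order stochastic dominance to show the increments of $h$ are decreasing; this is \prettyref{lmm:hmonotone}. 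Granting concavity, $\Phi(v):=\theta+\lambda h(v)$ is concave, increasing and bounded with $\Phi(0)=\theta>0$, so $\Phi(v)-v$ is concave, strictly positive at $v=0$ and tends to $-\infty$; a concave function is unimodal, so it has exactly one zero on $[0,\infty)$, i.e.\ $v=\Phi(v)$ has a unique solution $v^\ast$. Therefore $\underline v=\overline v=v^\ast$, and combining with the density evolution: $\lim_n p_G(\hat\sigma^t_{\rm BP})=Q(\sqrt{v_t})\to Q(\sqrt{v^\ast})$ as $t\to\infty$ (since $v_t\uparrow v^\ast$ and $v\mapsto Q(\sqrt v)$ is continuous), while $\liminf_n p_G^\ast\ge Q(\sqrt{\overline v})=Q(\sqrt{v^\ast})$; since also $p_G^\ast\le p_G(\hat\sigma^t_{\rm BP})$ for every $t$, we conclude $\lim_n p_G^\ast=Q(\sqrt{v^\ast})=\lim_{t\to\infty}\lim_{n\to\infty}p_G(\hat\sigma^t_{\rm BP})$.

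The heaviest technical work lies in the density-evolution step: making the central-limit approximation uniform over the slowly growing tree, controlling the label-dependent random degrees and the label bias among the children, and verifying that the symmetric (variance-equals-mean) Gaussian form is preserved \emph{exactly} by the recursion so that the message state genuinely collapses to the single scalar $v_t$. The conceptually decisive point, however, is the concavity of $h$ when $\rho=1/2$, since that is precisely what forces $\underline v=\overline v$ and hence the optimality of local belief propagation.
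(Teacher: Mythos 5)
Your proposal is correct and follows essentially the same route as the paper: coupling the local neighborhood to a Poisson Galton--Watson tree, exactness of BP on the tree, a genie-aided oracle bound obtained by revealing the boundary labels at depth $t$, Gaussian density evolution yielding the recursion $v_{t+1}=\theta+\lambda h(v_t)$ (from $v_0=0$ for the free boundary, from $w_1=\theta+\lambda$ for the genie boundary), monotonicity of $h$ to send $v_t\uparrow\underline v$ and $w_t\downarrow\overline v$, and concavity of $h$ when $\varphi=0$ to force a unique fixed point. Your ``concave $\Phi(v)-v$ is positive at $0$ and tends to $-\infty$'' argument is a cosmetic variant of the paper's $\lambda h'(\underline v)<1$ derivation, and the one slightly loose statement -- that conditioning on the depth-$t$ labels makes $\sigma_u$ exactly independent of the rest of $G$ on the tree-like event -- is in the paper only an asymptotic independence (Lemma~\ref{lmm:asymptoticalIndependence}, which must control the information carried by non-edges), but this does not change the structure of the argument.
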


If $\rho \mu \neq \bar \rho \nu$ so that the vertex degrees are statistically correlated with the cluster structure,
we have $\underline{v}>0$ and thus  
$\lim_{t \to \infty} \lim_{n \to \infty} p_{G}(\hat{\sigma}_{\rm BP}^t ) \ge \min\{ \rho, 1-\rho\}$.
Hence, the local application of BP strictly outperforms the trivial estimator, which always guesses the label
of all vertices to be $+1$ if $\rho\ge 1/2$ and $-1$ if $\rho<1/2$.
In the balanced case $\rho=1/2$, the local BP achieves the minimum expected misclassified fraction. Numerical experiments
further indicate that the local BP is still optimal in the unbalanced case provided that $\rho$ is not close to $0$ or $1$; however,
we do not have a proof (See \prettyref{sec:open} for more discussions).

If $\rho \mu=\bar \rho \nu$, then $\underline{v} =0$ and thus  
$$
\rho Q \left(\frac{ \underline{v}  + \varphi }{ \sqrt{\underline{v}  } } \right) + (1-\rho) Q \left(\frac{ \underline{v}  -\varphi }{ \sqrt{\underline{v}  } } \right) = \min\{ \rho, 1-\rho\}.
$$
In this case, our local application of BP  cannot do better than the trivial estimator.
In fact, the local neighborhoods are statistically uncorrelated
with the cluster structure, and one can further argue that no local algorithm  can achieve non-trivial detection (see e.g.~\cite{KaMoSc:14}). Although local algorithms are bound to fail, there might still exist some efficient global algorithms which achieve the minimum expected misclassified fraction. The following theorem shows that this is indeed the case when $\rho =1/2$, $\mu=\nu$ and $b=o(\log n)$.

\begin{theorem}\label{thm:symmetry}
Assume $\rho=1/2$, $a=c=b+\sqrt{b} \mu $ for some fixed constant $\mu$, and $b \to \infty$ such that $b=o(\log n)$.
For an estimator $\hat{\sigma}$ based on graph $G$, define the fraction of vertices misclassified by $\hat{\sigma}$ as
\begin{align}
\calO(\hat{\sigma}, \sigma)  = \frac{1}{n} \min \left\{   \sum_{i=1}^n \indc{ \sigma_i \neq \hat{\sigma}_i }, \;  \sum_{i=1}^n \indc{ \sigma_i \neq  -\hat{\sigma}_i } \right\}. \label{eq:defoverlap}
\end{align}
If $|\mu|>2$, then
\begin{align}
\lim_{n \to \infty} \inf_{\hat{\sigma}}   \expect{  \calO(\hat{\sigma}, \sigma) }= Q \left(\sqrt{ \overline{v} } \right), \label{eq:correlationmin}
\end{align}
where the infimum ranges over all possible estimators $\hat{\sigma}$ based on graph $G$; $\overline{v} >0$ is the largest fixed point of $v= \frac{ \mu^2 }{4} \expect{ \tanh(v+ \sqrt{v} Z)}.$ Moreover, there is a polynomial-time estimator such
that for every $\epsilon>0$, it misclassifies at most $Q \left( \sqrt{ \overline{v} }  \right)-\epsilon$ fraction of vertices on expectation.
\end{theorem}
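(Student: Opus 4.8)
The plan is to prove the two inequalities $\liminf_n \inf_{\hat\sigma}\expect{\calO(\hat\sigma,\sigma)}\ge Q(\sqrt{\overline v})$ and (achievability) $\le Q(\sqrt{\overline v})$ separately, both by reducing to the scalar recursion $v_{t+1}=\lambda h(v_t)$, which is exactly the recursion of \prettyref{thm:optimalaccuracy} in the present case ($\rho=1/2$ and $\mu=\nu$ force $\theta=0$, $\varphi=0$, $\lambda=\mu^2/4$). The structural fact I would isolate first, from the concavity of $v\mapsto\lambda h(v)$ established in \prettyref{lmm:hmonotone}: when $|\mu|>2$ we have $\lambda>1$, so $v=0$ is an unstable fixed point, $\overline v$ is the unique positive fixed point, the iteration from any $v_0>0$ increases to $\overline v$, and the iteration from $v_0=+\infty$ decreases to $\overline v$. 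This is why, although $\underline v=0$ (so plain local BP is stuck at the trivial fixed point), both the information-theoretic optimum and a suitably warm-started belief propagation land on the same value $\overline v$.

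For the lower bound I would combine the tree coupling of \prettyref{sec:connectiongraphtree} with a genie argument and the symmetry-breaking reduction used in \cite{DeshapandeAbbeMontanari15}. First, $\inf_{\hat\sigma}\expect{\calO}$ equals, up to $o_n(1)$, the Bayes error of estimating $\sigma_u$ in the problem obtained from $G$ by revealing one reference label to fix the global sign. Then, on the probability-$(1-o(1))$ event that the radius-$t$ neighborhood $B_t(u)$ of $u$ is a tree, revealing the \emph{true} labels at depth exactly $t$ renders everything outside $B_t(u)$ conditionally independent of $\sigma_u$, so this Bayes error is at least the Bayes error for the root of the depth-$t$ broadcast tree with revealed leaves; by the Gaussian density-evolution analysis the latter tends to $Q(\sqrt{v^{(t)}})$, where $v^{(t)}$ is the $t$-th iterate of $v\mapsto\lambda h(v)$ started from $v_0=+\infty$, and $v^{(t)}\downarrow\overline v$. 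Letting $n\to\infty$ and then $t\to\infty$ gives the lower bound.

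For the upper bound I would use a two-stage polynomial-time estimator: run a global non-backtracking spectral method \cite{BordenaveLelargeMassoulie:2015dq}, which under $|\mu|>2$ (equivalently the detection threshold $\mu^2/4>1$) returns $\hat\sigma^{(0)}$ with average agreement at least $\frac12+\gamma$ for some fixed $\gamma>0$; then run $t=t(n)\to\infty$ iterations of the belief propagation of Algorithm \prettyref{alg:MP_commun}, but with the messages initialized to a small multiple of $\hat\sigma^{(0)}$ instead of to $0$. The warm start places the density evolution at some $v_0>0$, from which the iterates climb to $\overline v$, so the estimator misclassifies a fraction tending to $Q(\sqrt{\overline v})$; the hypothesis $b=o(\log n)$ is used precisely so that $t(n)$ may diverge while the radius-$t$ neighborhoods remain tree-like and the local density-evolution analysis of \prettyref{thm:optimalaccuracy} still applies. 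The main obstacle is the data-dependence of the warm start: the tree analysis requires the initialization near $u$ to be essentially independent of the edges inside $B_t(u)$, whereas $\hat\sigma^{(0)}$ is a global functional of $G$. I would handle this by the standard graph-splitting device — thin $G$ into conditionally independent $G_1$ (for the spectral estimate) and $G_2$ (for BP) — arranging the split so that $G_1$ stays above the detection threshold and $G_2$ stays above the Kesten--Stigum threshold, and, for the residual range of $\mu$, by a stability estimate showing the spectral output is insensitive to $O(1)$ local edge changes. Verifying this decoupling, and that the Gaussian approximation of \prettyref{thm:optimalaccuracy} survives the warm-started recursion, is where the technical work concentrates; the fixed-point bookkeeping ($\underline v$ versus $\overline v$, convergence from both sides) is then immediate from \prettyref{lmm:hmonotone}.
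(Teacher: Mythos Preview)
Your overall architecture is correct — tree coupling, Gaussian density evolution, and the fixed-point analysis via \prettyref{lmm:hmonotone} showing that $\overline v$ is the unique positive fixed point and is attracting from any $v_0>0$ — but both halves take routes different from the paper, and in each case the paper's route avoids a difficulty your route encounters.

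For the lower bound, your first step (``$\inf_{\hat\sigma}\expect{\calO}$ equals, up to $o_n(1)$, the Bayes error of estimating $\sigma_u$ given $G$ and one revealed reference label'') is a nontrivial assertion that you do not justify; the relation between the min-over-global-flip overlap and a single-vertex Bayes error with an anchor label is not immediate. The paper proceeds differently: it writes $\calO(\hat\sigma,\sigma)=\tfrac12-\big|\tfrac1n\sum_i\indc{\hat\sigma_i\neq\sigma_i}-\tfrac12\big|$, applies Cauchy--Schwarz, and bounds the resulting two-point correlation $\expect{(\indc{\hat\sigma_i\neq\sigma_i}-\tfrac12)(\indc{\hat\sigma_j\neq\sigma_j}-\tfrac12)}$ by $(\tfrac12-p^*_{T^t})^2+o(1)$, using the tree coupling and the asymptotic conditional independence of $\sigma_i,\sigma_j$ given the leaf labels on their respective depth-$t$ neighborhoods (\prettyref{lmm:treelowerbound_sym} and \prettyref{lmm:correlationbound}). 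No anchor-label reduction is needed.

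For the upper bound, the decoupling you identify as the main obstacle is exactly what Algorithm~\ref{alg:BPplusCorrelated} (taken from \cite{MNS:2013a}) resolves, but not by graph splitting or by a stability estimate for the spectral method. Instead, for each vertex $u$ separately, the correlated-recovery algorithm is run on the subgraph with $G_u^{t-1}$ \emph{removed}, and its output on $\partial G_u^t$ supplies $\alpha_u$-noisy leaf labels that are, by construction, independent of the edges inside $G_u^t$. The density evolution then applies verbatim with initial value $u_1=(1-2\alpha)^2\mu^2/4>0$ (\prettyref{lmm:treeupperbound_sym} and \prettyref{lmm:gaussian_sym}), and this works for every $|\mu|>2$ with no ``residual range.'' Your graph-splitting plan faces a genuine trade-off: to recover the correct fixed point $\overline v$ you would need the BP graph to carry essentially the full signal, which near $|\mu|=2$ leaves the detection graph below threshold; the leave-neighborhood-out device sidesteps this entirely, since only $n^{o(1)}$ vertices are withheld. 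A separate point: the hypothesis $b=o(\log n)$ is not what guarantees the tree coupling (that only needs $b^t=n^{o(1)}$, which holds for any fixed $t$ under $b=n^{o(1)}$); it is used to guarantee the existence of a high-degree anchor vertex $u_*$ that synchronizes the global sign across the many per-vertex runs of the correlated-recovery algorithm.
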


In contrast to \prettyref{eq:estimationaccuracy}, the fraction of vertices misclassified by $\hat{\sigma}$ is defined up to
a global flip of signs of  $\hat{\sigma}$ in \prettyref{eq:defoverlap}. This is because in the case $\rho=1/2$ and $a=c$,  due to the symmetry between $+$ and $-$, $\sigma$ and $-\sigma$ have the same distribution conditional on graph $G$.
Thus, it is impossible to reliably estimate the sign of $\sigma$ based on graph $G$.

Note that $|\mu|=2$ corresponds to the Kesten-Stigum bound \cite{KestenStigum:66}.  It is shown in \cite{MONESL:15} that
if $|\mu|<2$, correlated recovery is impossible and thus the minimum expected misclassified fraction is $0$;
Remarkably, \cite{Massoulie:2013,MNS:2013b,BordenaveLelargeMassoulie:2015dq} prove that correlated recovery is efficiently
achievable if $|\mu|>2$. Our results further show that in this case with $b \to \infty$ and $b=o(\log n)$,
the minimum  expected misclassified fraction is
$Q \left(\sqrt{ \overline{v} } \right)$ and it can also be attained in polynomial-time. The proof of \prettyref{thm:symmetry}
is mainly based on two observations. First, it is shown in  \cite{MNS:2013a} that the local BP is able to improve a clustering that is
slightly better than a random guess to achieve the minimum expected misclassified fraction if $|\mu|>C$ for a universal constant $C>0$.
Second, we find that if $|\mu|>2$, the recursion $v= \frac{ \mu^2 }{4} \expect{ \tanh(v+ \sqrt{v} Z)}$ derived in the density evolution
analysis of local BP has only two fixed points: $0$ and $\overline{v}>0$,
where $0$ is unstable and $\overline{v}$ is stable. These two results together establish that
if $|\mu|>2$, then running  the local BP for $t$ iterations with a correlated initialization provided by a non-trivial detection algorithm is able to attain the minimum expected misclassified fraction $Q \left(\sqrt{ \overline{v} } \right)$ as $t \to \infty$.

\subsection{Proof Ideas}
The proof of \prettyref{thm:optimalaccuracy} is based on two useful tools. First, we connect the cluster recovery problem
to the reconstruction problem on trees.
Second, we use the density evolution with Gaussian approximations to give a sharp characterization of
error probabilities of tree reconstruction problems,
in terms of fixed points of a recursion.

\begin{itemize}
\item To bound from below the minimum expected misclassified fraction, we bound the error probability of inferring $\sigma_u$ for a specific vertex $u$. Following~\cite{MONESL:15}, we consider an oracle estimator, which in addition to the graph structure, the exact labels of all vertices at distance exactly $t$ from $u$ are also revealed. As in~\cite{MONESL:15}, it is possible to show that the best estimator in this case is given by BP for $t$ levels initialized using the exact labels at distance $t$. In contrast, the expected fraction of vertices misclassified by the local BP algorithm
     approximately equals to the error probability of inferring $\sigma_u$ solely based on  the neighborhood of vertex $u$ of radius $t$, without having access to the exact labels of vertices at distance $t$.

\item We characterize the density evolution of the local BP and the BP lower bound using Gaussian approximations, and get a recursion with the largest fixed
point corresponding to the BP lower bound, and the smallest fixed point corresponding to the expected fraction of vertices misclassified by the local BP as
$ t \to \infty.$ In the balanced cluster cases, we further show that there is a unique fixed point for the recursion, and thus the BP lower bound matches the expected fraction of vertices misclassified by the local BP.

%
%

\end{itemize}


 \subsection{Numerical Experiments and Open Problems}\label{sec:open}

 \begin{figure}[ht]
\centering
\includegraphics[width=3.5in]{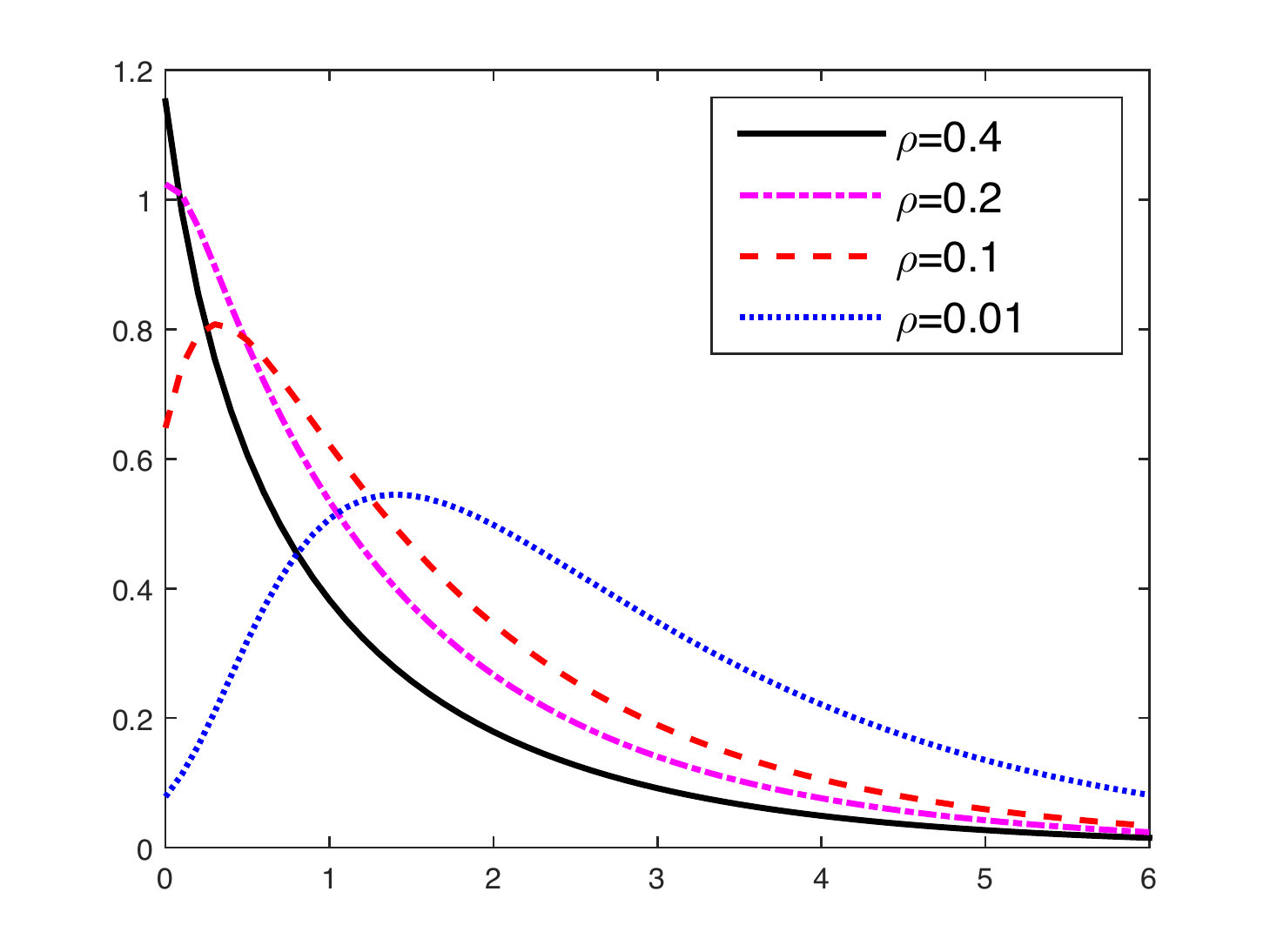}
\caption{Numerical calculations of $h'(v)$ (y axis) versus $v \in [0, 6]$ (x axis) with different $\rho.$ It
shows that $h(v)$ is concave when $\rho \ge 0.2$ and $h(v)$ becomes convex for $v$ small when $\rho \le 0.1$.}
\label{fig:h_derivative}
\end{figure}

\begin{figure}
\centering
\begin{minipage}{0.5\textwidth}
  \centering
  \includegraphics[width=3in]{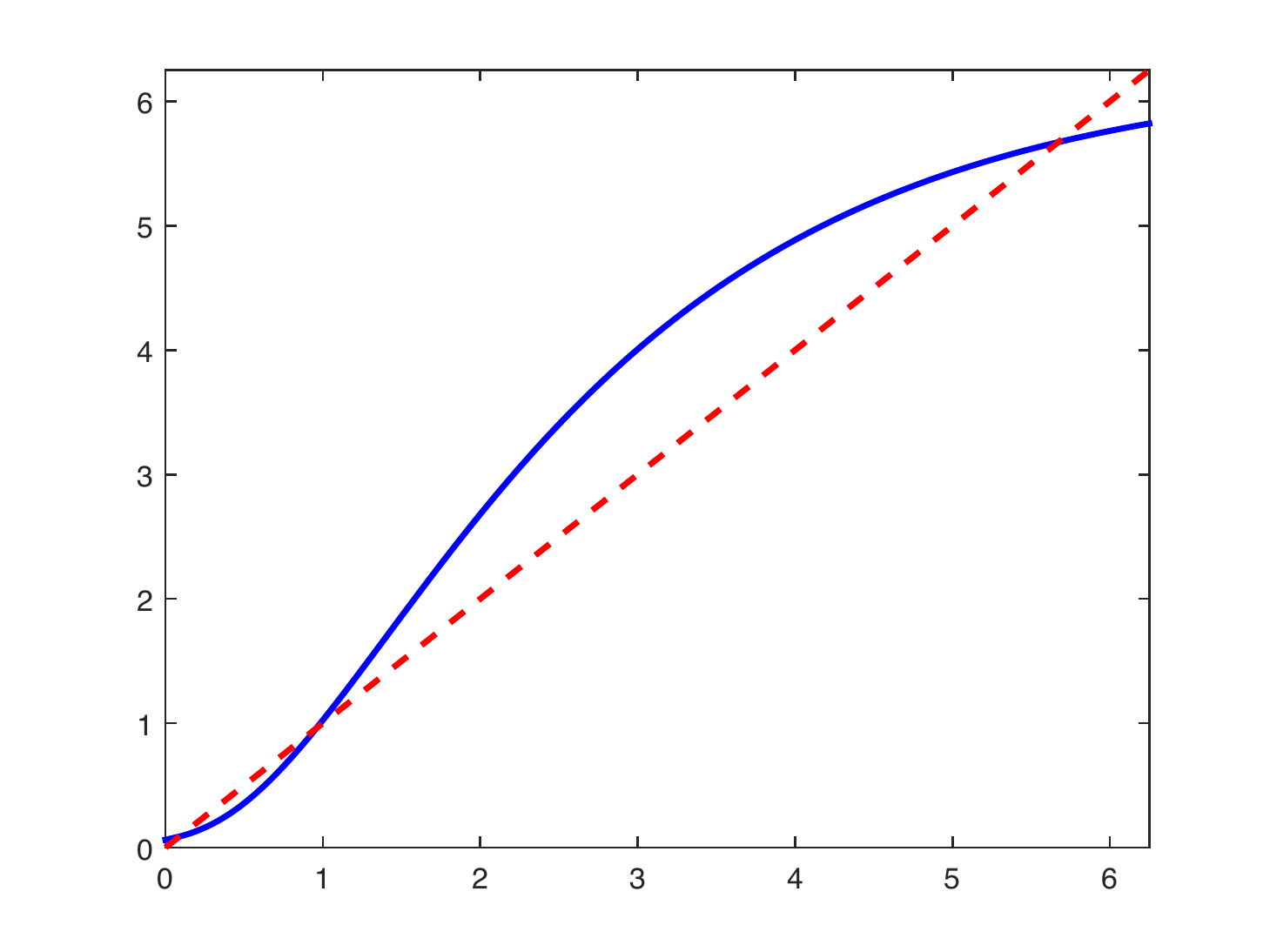}
\end{minipage}%
\begin{minipage}{0.5\textwidth}
  \centering
  \includegraphics[width=3in]{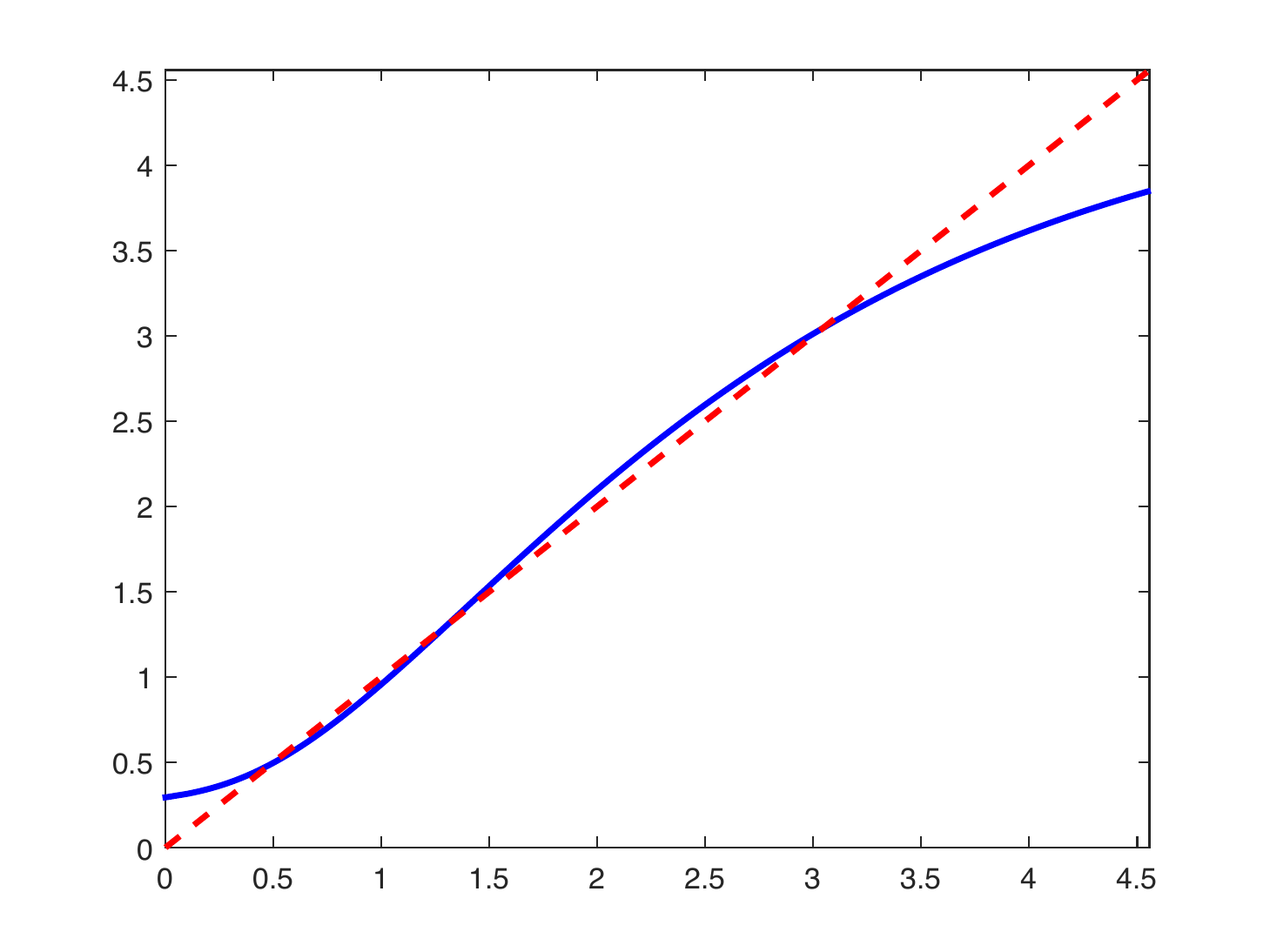}
\end{minipage}
\caption{The plot of $\theta+ \lambda h(v)$ (y axis) versus $v$ (x axis) in the case $\rho =0.01$.
Left frame:  $\mu=50$ and $\nu=0$; right frame: $\mu=40$ and $\nu=1.5$. It shows that $v=\theta+ \lambda h(v)$
has three fixed points: The smallest one is $\underline{v}$
corresponding to the performance of local BP; the largest one is $\overline{v}$ corresponding the lower bound on
expected misclassified fraction; the intermediate one is unstable.
}
\label{fig:recursion}
\end{figure}

In the case with $\rho=1/2$ and $\mu \neq \nu$, we show that
 $v= \theta + \lambda h(v)$ has a unique fixed point and thus the local BP is optimal; the key
 idea is to prove that $h(v)$ is concave in this case. Numerical calculations, depicted in Fig.~\ref{fig:h_derivative}, show that $h(v)$ is still concave if $\rho \ge 0.2$,
suggesting that the local BP is still optimal in roughly balanced cluster size cases.
However,
$h(v)$  becomes convex for $v$ small when $\rho \le 0.1$.





It is intriguing to investigate when $v= \theta + \lambda h(v)$ has a unique fixed point.
If $\rho =0.01$,
numerical experiments, depicted in Fig.~\ref{fig:recursion}, shows that $v= \theta + \lambda h(v)$  may have multiple
fixed points, suggesting that the local BP may be suboptimal.
However, in the case with $\mu=\nu$ and $\rho \neq 1/2$, numerical experiments indicate that there is
always a unique fixed point.
 \begin{conjecture}
 If $\mu =\nu$, then $v= \theta + \lambda h(v)$ has a unique fixed point for all $\rho \in (0,1/2) \cup (1/2,1)$.
 \end{conjecture}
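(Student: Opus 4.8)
The plan is to observe that, when $\mu=\nu$, the recursion $v=\theta+\lambda h(v)$ is exactly the replica–symmetric fixed‑point equation of a scalar Gaussian channel, and then to prove uniqueness from the crudest (linear) bound on the minimum mean‑square error — which neatly avoids the convexity issue that makes the $\rho=\tfrac12$ argument delicate.

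\textbf{Reduction to a scalar channel.} First I would record that $\mu=\nu$ gives $\lambda=\tfrac{\rho\mu^2}{2}$, $\theta=\tfrac{(1-2\rho)\mu^2}{4}$, and $e^{2\varphi}=\tfrac{\rho}{1-\rho}$, so that $\tanh(x+\varphi)=\tfrac{\rho e^{2x}-(1-\rho)}{\rho e^{2x}+(1-\rho)}$. Substituting into $h$ and collecting terms (a short, routine calculation) yields, with $W:=v+\sqrt v\,Z$,
\[
\theta+\lambda h(v)=\frac{\mu^2}{4}\Bigl(1-4\rho(1-\rho)\,\Expect\bigl[(\rho e^{2W}+1-\rho)^{-1}\bigr]\Bigr).
\]
Let $Y_v=\sqrt v\,\sigma+Z$ be the scalar channel with $\sigma\in\{\pm1\}$, $\prob{\sigma=+1}=\rho$, $Z\sim\calN(0,1)$. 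Its posterior satisfies $\prob{\sigma=-1\mid Y_v}=\tfrac{1-\rho}{\rho e^{2\sqrt v\,Y_v}+1-\rho}$, and conditionally on $\sigma=+1$ one has $\sqrt v\,Y_v\eqdistr W$; hence $4\rho(1-\rho)\Expect[(\rho e^{2W}+1-\rho)^{-1}]=4\Expect[\indc{\sigma=+1}\prob{\sigma=-1\mid Y_v}]=\Expect[4\pi(1-\pi)]$ with $\pi:=\prob{\sigma=+1\mid Y_v}$, and $4\pi(1-\pi)=1-(2\pi-1)^2=1-(\Expect[\sigma\mid Y_v])^2$. So, for $\mu=\nu$, the recursion is exactly $v=\tfrac{\mu^2}{4}q_\rho(v)$, where
\[
q_\rho(v):=\Expect\bigl[(\Expect[\sigma\mid Y_v])^2\bigr]=1-\mathrm{mmse}_\rho(v),\qquad \mathrm{mmse}_\rho(v):=\Expect\bigl[(\sigma-\Expect[\sigma\mid Y_v])^2\bigr]
\]
(the last equality uses $\sigma^2=1$). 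As a check, for $\rho=\tfrac12$ this reads $v=\tfrac{\mu^2}{4}\Expect[\tanh^2(v+\sqrt v\,Z)]=\tfrac{\mu^2}{4}\Expect[\tanh(v+\sqrt v\,Z)]$ by the Nishimori identity, matching the $\mu=\nu$ case of \prettyref{thm:optimalaccuracy}. Thus the conjecture is equivalent to: $v\mapsto q_\rho(v)/v$ is strictly decreasing on $(0,\infty)$.

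\textbf{Monotonicity and conclusion.} I claim $v\,q_\rho'(v)<q_\rho(v)$ for every $v>0$. By the standard identity $\mathrm{mmse}_\rho'(v)=-\Expect[\var(\sigma\mid Y_v)^2]$ for the scalar Gaussian channel (Gaussian integration by parts), $q_\rho'(v)=\Expect[\var(\sigma\mid Y_v)^2]\le\Expect[\var(\sigma\mid Y_v)]=\mathrm{mmse}_\rho(v)$, strictly for $v>0$ since $\var(\sigma\mid Y_v)\in(0,1)$ a.s. Moreover, comparing with the linear (Wiener) estimator and using $s^2:=\var(\sigma)=4\rho(1-\rho)\le1$, one gets $\mathrm{mmse}_\rho(v)\le\tfrac{s^2}{1+s^2v}\le\tfrac1{1+v}$, i.e.\ $q_\rho(v)\ge\tfrac v{1+v}$. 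Combining,
\[
v\,q_\rho'(v)<v\,\mathrm{mmse}_\rho(v)=v\bigl(1-q_\rho(v)\bigr)\le v\Bigl(1-\tfrac v{1+v}\Bigr)=\tfrac v{1+v}\le q_\rho(v),
\]
so $\tfrac{d}{dv}(q_\rho(v)/v)<0$ on $(0,\infty)$. Since $q_\rho$ is smooth and nondecreasing with $q_\rho(0)=(2\rho-1)^2>0$ (here $\rho\neq\tfrac12$ is used) and $q_\rho(v)\to1$, the map $q_\rho(v)/v$ is a strictly decreasing continuous bijection of $(0,\infty)$ onto $(0,\infty)$; hence $q_\rho(v)/v=4/\mu^2$ has a unique solution $v^\ast>0$, while $v=0$ is not a fixed point because $\theta+\lambda h(0)=\tfrac{\mu^2}{4}(2\rho-1)^2>0$. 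This gives the conjectured uniqueness.

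\textbf{On the difficulty.} I do not see a real obstacle; essentially all the work is the bookkeeping of the reduction. The tempting route is to mimic the $\rho=\tfrac12$ proof by writing the fixed points as critical points of a potential $\propto\tfrac v2-\tfrac{v^2}{\mu^2}-I_\rho(v)$ ($I_\rho$ the scalar mutual information) and arguing concavity — but that breaks precisely when $\mathrm{mmse}_\rho$ is not convex, equivalently when $h$ is not concave, which happens for small $\rho$ (cf.\ \prettyref{lmm:hmonotone} and \prettyref{fig:h_derivative}). The point above is that convexity is not needed: the elementary estimate $q_\rho(v)\ge\tfrac v{1+v}$ from linear estimation already forces the normalized overlap $q_\rho(v)/v$ to be monotone. (The same reduction also reproduces the $\rho=\tfrac12$ picture of \prettyref{thm:symmetry}: there $q_{1/2}(0)=0$, so $q_{1/2}(v)/v$ decreases from $q_{1/2}'(0)=\var(\sigma)=1$, leaving the fixed point $v=0$ always and a second positive one exactly when $|\mu|>2$.)
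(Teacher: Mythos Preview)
The paper does not prove this statement at all: it is presented as an open conjecture supported only by numerical evidence, precisely because the concavity argument used for the $\rho=\tfrac12$ case (\prettyref{lmm:hmonotone}) breaks down when $\rho$ is far from $\tfrac12$. Your argument is a genuine proof of the conjecture, and it works for the reason you identify: by recasting the recursion as $v=\tfrac{\mu^2}{4}q_\rho(v)$ for the scalar Gaussian overlap $q_\rho(v)=1-\mathrm{mmse}_\rho(v)$, uniqueness becomes equivalent to strict monotonicity of $q_\rho(v)/v$, and this follows from the chain $v\,q_\rho'(v)<v\,\mathrm{mmse}_\rho(v)\le \tfrac{v}{1+v}\le q_\rho(v)$, which uses only the elementary LMMSE bound rather than any curvature information. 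The identity $q_\rho'(v)=\mathbb{E}[\var(\sigma\mid Y_v)^2]$ that you invoke is indeed equivalent to the paper's formula \prettyref{eq:hderivative} for $h'(v)$ after the change of measure you set up (so a reader could verify it without appealing to outside results), and the strictness of $q_\rho'(v)<\mathrm{mmse}_\rho(v)$ holds because $\var(\sigma\mid Y_v)=4\pi(1-\pi)\in(0,1)$ almost surely for $v>0$.

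In short: where the paper's route (prove $h$ concave, hence at most one fixed point) is obstructed for small $\rho$ by the numerics in \prettyref{fig:h_derivative}, you sidestep the obstruction completely. The trade-off is that your argument is specific to $\mu=\nu$ (the scalar-channel rewriting uses this), whereas the concavity route, when it applies, is insensitive to the value of $\theta$; but that is exactly the scope of the conjecture.
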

 Notice that in the case with $\mu = \nu$ and $\rho =1/2$, $\theta=0$, $\lambda=\mu^2/4$, and $h(v)=\expect{ \tanh(v+ \sqrt{v} Z)}$.
We have shown in \prettyref{lmm:hmonotone} that $h$ is non-decreasing, concave, and $\lim_{v \to 0} h'(v) =1$. Thus if $|\mu|<2$,
there is a unique fixed point at zero, which is stable; if $|\mu|>2$, there are two fixed points: one is  zero which is unstable and the other is $\overline{v}>0$ which is stable.

 \subsection{Notation and Organization of the Paper}
For any positive integer $n$, let $[n]=\{1, \ldots, n\}$.
For any set $T \subset [n]$, let $|T|$ denote its cardinality and $T^c$ denote its complement.
We use standard big $O$ notations,
e.g., for any sequences $\{a_n\}$ and $\{b_n\}$, $a_n=\Theta(b_n)$
if there is an absolute constant $c>0$ such that $1/c\le a_n/ b_n \le c$.
Let $\Bern(p)$ denote the Bernoulli distribution with mean $p$ and
$\Binom(n,p)$ denote the binomial distribution with $n$ trials and success probability $p$.
All logarithms are natural and we use the convention $0 \log 0=0$.
We say a sequence of events $\calE_n$ holds with high probability if $\prob{\calE_n} \to 1$.

The rest of this paper is organized as follows.
\prettyref{sec:tree} focuses on the inference problems on  the tree model.
The analysis of the belief propagation algorithm on trees
and the proofs of our main theorems are given in \prettyref{sec:densityevolution}.
The  proofs of auxiliary lemmas can be found in \prettyref{app:additional}.

\section{Inference Problems on Galton-Watson Tree Model} \label{sec:tree}

In this section, we first introduce the inference problems
on Galton-Watson trees, and then relate it to the cluster recovery problem under the stochastic block model.

\begin{definition}\label{def:treemodel}
For a vertex $u$, we denote by $(T_u, \tau)$ the following Poisson two-type branching process tree rooted at $u$, where $\tau$ is a $
\{\pm\}$ labeling of the vertices of $T$.
Let $\tau_u =+$ with probability $\rho$ and
$\tau_u=-1$ with probability $\bar \rho$, where $\bar \rho=1- \rho$.
Now recursively for each vertex $i$ in $T_u$, given its label $\tau_i =+$, $i$ will have  $L_i \sim \Pois(\rho a )$ children $j$ with
$\tau_j= + $ and $M_i \sim \Pois(\bar \rho b )$ children $j$ with $\tau_j = -$; given its label $\tau_i=-1$, $i$ will have $L_i \sim \Pois(\rho b)$
children $j$ with $\tau_j=+$ and $M_i \sim \Pois(\bar \rho c)$ children $j$ with $\tau_j=-$.
\end{definition}

For any vertex $i$ in $T_u$, let $T_i^t$ denote the subtree of $T_u$ of depth $t$ rooted at vertex $i$,
and $\partial T_i^t$ denote the set of vertices at the boundary of $T_i^t$.
With a bit abuse of notation, let $\tau_{A}$ denote the vector consisting of labels of vertices in $A$,
where $A$ could be either a set of vertices or a subgraph in $T_u$.
We first consider the problem of estimating the label of root $u$ given the observation of $T_u^t$ and $\tau_{\partial T_u^t}.$
Notice that the labels of vertices in $T_u^{t-1}$ are not observed.

\begin{definition}\label{def:treeestimationexact}
The \emph{detection problem on the tree with exact information at the boundary} is the  problem
of inferring $\tau_u$ from the observation of $ T_u^t$ and $\tau_{\partial T_u^t}.$
The error probability for an estimator
$\hat{\tau}_u(T_u^t, \tau_{\partial T_u^t})$ is defined by
\begin{align*}
p_{T^t} (\hat{\tau}_u) = \rho  \prob{ \hat{\tau}_u=- | \tau_u=+} + \bar \rho  \prob{ \hat{\tau}_u=+ | \tau_u=-}.
\end{align*}
Let $p_{T^t} ^\ast$ denote the minimum error probability among all estimators based on $ T_u^t$ and $\tau_{\partial T_u^t}.$
\end{definition}
The optimal estimator in minimizing $p_{T^t}$, is the maximum a posterior (MAP) estimator,
which can be expressed in terms of log likelihood ratio:
\begin{align*}
\hat{\tau}_{\rm ML} = 2 \times  \indc{\Lambda_u^t  \ge -\varphi } -1,
\end{align*}
where
\begin{align*}
\Lambda^t_i \triangleq \frac{1}{2} \log \frac{ \prob{ T_i^t, \tau_{\partial T_i^t}   | \tau_i=+ }} { \prob{ T_i^t, \tau_{ \partial T_i^t}   |  \tau_i=-} },
\end{align*}
for all $i$ in $T_u$, and $\varphi=\frac{1}{2} \log \frac{\rho}{1-\rho}.$
Thus, the minimum error probability $p_{T^t} ^\ast$  is given by
\begin{align}
p_{T^t} ^\ast=\frac{1}{2} - \frac{1}{2} \expect{ \big|  \prob{ \tau_u=+ | T_u^t, \tau_{\partial T_u^t}   } - \prob{ \tau_u=- | T_u^t,  \tau_{\partial T_u^t } } \big| } .  \label{eq:optimalestimationaccuracytree}
\end{align}

We then consider the problem of estimating $\tau_u$ given observation of $ T_u^t$. Notice that in this case the true labels of vertices in $T_u^{t}$ are not observed.
\begin{definition}\label{def:treeestimation}
The \emph{detection problem on the tree} is the problem
of inferring $\tau_u$ from the observation of $ T_u^t.$ The error probability for an estimator
$\hat{\tau}_u(T_u^t )$ is defined by
\begin{align*}
q_{T^t} (\hat{\tau}_u) = \rho  \prob{ \hat{\tau}_u= - | \tau_u=+} + \bar \rho  \prob{ \hat{\tau}_u= + | \tau_u=-}.
\end{align*}
Let $q_{T^t} ^\ast$ denote the minimum error probability among all estimators based on $ T_u^t.$
\end{definition}
In passing, we remark that the only difference between \prettyref{def:treeestimationexact} and \prettyref{def:treeestimation} is that  the exact labels at the boundary of the tree is revealed to  estimators in the former
and hidden in the latter.
The optimal estimator in minimizing $q_T^t$, is the maximum a posterior  (MAP) estimator, which can be expressed in terms of the log likelihood ratio:
\begin{align*}
\hat \tau_{\rm MAP} =2 \times \indc{\Gamma^t_u \ge - \varphi  } -1 ,
\end{align*}
where
\begin{align*}
\Gamma^t_i  \triangleq  \frac{1}{2} \log \frac{\prob{ \calT^t_i | \tau_u =+ } }{ \prob{ \calT^t_i | \tau_u =- } }.
\end{align*}
for all $i$ in $T_u$, and $\varphi=\frac{1}{2} \log \frac{\rho}{1-\rho}.$ The minimum error probability $q_{T^t} ^\ast$  is given by
\begin{align}
q_{T^t} ^\ast= \frac{1}{2} - \frac{1}{2} \expect{ \big|  \prob{ \tau_u=+ | T_u^t   } - \prob{ \tau_u=- | T_u^t } \big| },  \label{eq:optimalestimationaccuracytreefree}
\end{align}
If $ d_+=d_-$, then the distribution of $T_u^t$ conditional on $\tau_u=+$ is the same
as that conditional on $\tau_u=-$. Thus, $\Gamma^t_u=0$ and the MAP estimator reduces to the trivial estimator, which always guesses the label
to be $+$ if $\rho \ge 1/2$ and $-$ if $\rho <1/2$, and $q_{T^t}^\ast=\min\{\rho, \bar \rho\}.$
If $d_+ \neq d_-$, then $T_u$ becomes statistically correlated with $\tau_u$,
and it is possible to do better than the trivial estimator based on $T_u^t$.

For the tree model, the likelihoods can be computed exactly via a belief propagation algorithm.
The following lemma gives a recursive formula to compute $\Lambda_i^t$ and $\Gamma_i^{t};$
no approximations are needed. Let $\partial i$ denote the set of children of vertex $i$.

 \begin{lemma}\label{lmm:recursion}
 Recall $F(x) = \frac{1}{2} \log \left(  \frac{ \eexp^{2x } \rho a   + \bar \rho b  }{ \eexp^{2x } \rho b +\bar \rho c} \right)$.
For $t \ge 0$,
 \begin{align}
 \Lambda^{t+1}_{i }  & =  \frac{- d_+ + d_- }{2} + \sum_{  j \in \partial i } F( \Lambda_j^t ),  \label{eq:recursionLambda}\\
\Gamma^{t+1}_{i }   & = \frac{- d_+ + d_- }{2}  + \sum_{ j \in \partial i } F( \Gamma_j^t ), \label{eq:recursionGamma}
 \end{align}
 with $\Lambda_i^0= \infty$ if $\tau_i=+$ and $\Lambda_i^0 = -\infty$ if $\tau_i=-$;  $\Gamma^0_{i}  = 0$ for all $i$.
 \end{lemma}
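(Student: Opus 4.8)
The plan is to recognize $\eexp^{2\Lambda^{t}_i}$ and $\eexp^{2\Gamma^{t}_i}$ as the likelihood ratios by which they are defined and to unwind them one level of the tree at a time, using the branching property. I would first dispose of the base case $t=0$: then $T_i^0$ is the single vertex $i$ with $\partial T_i^0=\{i\}$, so for $\Lambda$ the observed data is $\tau_i$ itself and the likelihood ratio is $\infty$ or $0$ according as $\tau_i=+$ or $\tau_i=-$, giving $\Lambda^0_i=\pm\infty$; for $\Gamma$ the bare vertex carries no information, the ratio is $1$, and $\Gamma^0_i=0$. Then I would carry out the inductive step for $\Lambda$ in detail and note that the one for $\Gamma$ is word for word the same with the boundary labels omitted throughout.

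For the inductive step, write $D_i^{t}:=(T_i^{t},\tau_{\partial T_i^{t}})$, so that $\eexp^{2\Lambda^{t}_i}$ is, by definition, the likelihood ratio of $D_i^{t}$ under $\tau_i=+$ against $\tau_i=-$. The key is the conditional law of the children's subtrees. Conditioned on $\tau_i=+$, vertex $i$ has $\Pois(\rho a)$ children of label $+$ and, independently, $\Pois(\bar\rho b)$ children of label $-$, and --- by the branching property --- the subtrees-with-boundary $D_j^{t-1}$, $j\in\partial i$, are mutually independent, each distributed as $\calL_+$ or $\calL_-$ according to that child's label, where $\calL_\pm$ is the law of a depth-$(t-1)$ subtree-with-boundary rooted at a vertex of label $\pm$. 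By Poisson superposition, $\{D_j^{t-1}:j\in\partial i\}$ is then a Poisson process on the countable (hence standard Borel) space of finite depth-$(t-1)$ subtrees-with-boundary with intensity $\mu_+=\rho a\,\calL_++\bar\rho b\,\calL_-$ of total mass $d_+$; conditioned on $\tau_i=-$ it is the analogous object with intensity $\mu_-=\rho b\,\calL_++\bar\rho c\,\calL_-$ of total mass $d_-$, and $\mu_+\ll\mu_-$ because $a,b,c>0$. Applying the Radon--Nikodym formula for Poisson processes and then evaluating the per-child factor, I would get
\begin{align*}
\eexp^{2\Lambda^{t}_i}&=\eexp^{d_- - d_+}\prod_{j\in\partial i}\frac{\diff\mu_+}{\diff\mu_-}\big(D_j^{t-1}\big), \\
\frac{\diff\mu_+}{\diff\mu_-}\big(D_j^{t-1}\big)&=\frac{\rho a\,\eexp^{2\Lambda^{t-1}_j}+\bar\rho b}{\rho b\,\eexp^{2\Lambda^{t-1}_j}+\bar\rho c}=\eexp^{2F(\Lambda^{t-1}_j)},
\end{align*}
where the second line is obtained by writing $\diff\mu_\pm$ via densities of $\calL_\pm$ against a common dominating measure, dividing through by the density of $\calL_-$, recognizing that density ratio at $D_j^{t-1}$ as $\eexp^{2\Lambda^{t-1}_j}$ (exactly the definition of $\Lambda^{t-1}_j$), and finally invoking the definition of $F$. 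Combining the two lines and taking $\frac{1}{2}\log$ gives $\Lambda^{t}_i=\frac{-d_+ + d_-}{2}+\sum_{j\in\partial i}F(\Lambda^{t-1}_j)$, which is \prettyref{eq:recursionLambda} after a shift of the time index; the same computation with every $D_j^{t-1}$ replaced by the bare subtree $T_j^{t-1}$ (so $\calL_\pm$ becomes the law of the unlabeled depth-$(t-1)$ subtree given root label $\pm$ and the density ratio becomes $\eexp^{2\Gamma^{t-1}_j}$) yields \prettyref{eq:recursionGamma}.

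The step I expect to be the main obstacle is the measure-theoretic bookkeeping hidden in the Poisson-process identification: checking that the conditional law of the whole family of children's subtrees really is the superposition of the label-$+$ and label-$-$ child populations, that $\mu_+\ll\mu_-$ on the appropriate standard Borel space, and that the change-of-measure formula applies there. Everything after that is routine algebra. If one prefers to avoid Poisson-process machinery, the same recursion can instead be proved by a direct induction on the joint probability mass function of $D_i^{t}$: condition on the numbers $(L_i,M_i)$ of label-$+$ and label-$-$ children, use Poisson thinning to handle that the children's labels are not themselves observed, and reassemble. That route is more cumbersome but entirely self-contained, and in either case the substantive content is the display above together with the algebraic identity collapsing the per-child factor to $\eexp^{2F(\cdot)}$.
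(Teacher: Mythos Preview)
Your proposal is correct, and the underlying idea---factoring the likelihood of the depth-$(t{+}1)$ subtree over the root's children---is the same as the paper's. The execution differs: you package the children's subtrees as a Poisson point process and invoke the Radon--Nikodym formula $\eexp^{-(d_+-d_-)}\prod_j \frac{\diff\mu_+}{\diff\mu_-}$ directly, whereas the paper works elementarily by Poisson thinning---first writing the likelihood of the total child count $N_i$ (giving the $\frac{-d_++d_-}{2}+\frac{N_i}{2}\log\frac{d_+}{d_-}$ term), then for each child summing over its unobserved label, and finally collapsing. Your route is cleaner once one accepts the Poisson change-of-measure machinery and the standard-Borel bookkeeping you flag; the paper's route is the ``direct induction on the joint probability mass function'' you describe at the end, and is fully self-contained. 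Either way the per-child factor reduces to $\eexp^{2F(\cdot)}$ by the same algebra.
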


\subsection{Connection between the Graph Problem and Tree Problems} \label{sec:connectiongraphtree}

For the reconstruction problem on graph, recall that $p_{G} (\hat{\sigma}_{\rm BP}^t )$ denote the expected fraction of vertices misclassified by $\hat{\sigma}_{\rm BP}^t$ as per \prettyref{eq:estimationaccuracy}; $p_{G}^\ast$ is the minimum expected misclassified fraction.
For the reconstruction problems on tree, recall that $p_{T^t}^\ast$ is the minimum error probability of estimating $\tau_u$ based on $T_u^t$ and
$\tau_{\partial T_u^t}$ as per \prettyref{eq:optimalestimationaccuracytree};
$q_{T^t}^\ast$ is minimum error probability of estimating $\tau_u$ based on $T_u^t$
as per \prettyref{eq:optimalestimationaccuracytreefree}.
In this section, we show that in the limit $n \to \infty$, $  p_{G} (\hat{\sigma}_{\rm BP}^t ) $ equals to $q_{T^t}^\ast$,
and $p_{G}^\ast$ is bounded by $ p_{T^t}^\ast$ from the below for any $t\ge 1.$
Notice that  $q_{T^t}^\ast$ and $p_{T^t}^\ast$ depend on $n$ only through the parameters $a$, $b$, and $c$.

A key ingredient is to show that $G$ is locally tree-like with high probability in the regime $b=n^{o(1)}$.
Let $G_u^t$ denote the subgraph of $G$ induced by vertices whose distance to  $u$ is at most $t$ and let $\partial G_u^t$ denote the
set of vertices whose distance  from $u$ is precisely $t$. In the following, for ease of notation, we write  $T_u^t$ as $T^t$  and $G_u^t$ as $G^t$ when there is no ambiguity.
With a bit abuse of notation, let $\sigma_{A}$ denote the vector consisting of labels of vertices in $A$,
where $A$ could be either a set of vertices or a subgraph in $G$.
The following lemma proved in \cite{MONESL:15} shows that we can construct a coupling such that
$(G^t, \sigma_{G^t} ) = (T^t, \tau_{T^t} )$ with probability converging to $1$ when $b^{t} =n^{o(1)}.$
\begin{lemma}\label{lmm:couplingtree}
For $t =t(n)$ such that $b^{t} =n^{o(1)}$,
there exists a coupling between $(G, \sigma)$ and $(T, \tau )$
such that $(G^t, \sigma_{G^t}) = (T^t, \tau_{T^t})$ with probability converging to $1$.
\end{lemma}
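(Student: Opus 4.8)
The plan is to prove \prettyref{lmm:couplingtree} by the standard \emph{breadth-first exploration coupling}: I would grow the neighborhood of $u$ in $G$ one vertex at a time and couple this growth, step by step, with the Poisson two-type branching process construction of \prettyref{def:treemodel} (this is the argument of \cite{MONESL:15}; I only sketch it here). Reveal vertex labels \emph{lazily}, at the moment each vertex is discovered, using fresh $\Bern(\rho)$ coins; begin by revealing $\sigma_u$ and set $\tau_u=\sigma_u$. Process discovered-but-unexpanded vertices in a fixed (say FIFO) order. When a vertex $w$ with $\sigma_w=+$ is expanded, examine all not-yet-examined edges incident to $w$: each of the $n-D$ currently undiscovered vertices (where $D$ is the number discovered so far) independently becomes a newly discovered $+$-child of $w$ with probability $\rho\,(a/n)$ and a $-$-child with probability $\bar\rho\,(b/n)$, while if $w$ turns out to be adjacent to an already-discovered vertex other than its parent we declare a \emph{collision}; for $\sigma_w=-$ the child probabilities are $\rho\,(b/n)$ and $\bar\rho\,(c/n)$. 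Since $\rho a,\bar\rho b$ (resp.\ $\rho b,\bar\rho c$) are \emph{exactly} the offspring means of the branching process, at each expansion the numbers of $+$- and $-$-children produced in $G$ form a pair of independent Binomials, which I would couple to the Poisson offspring $(\Pois(\rho a),\Pois(\bar\rho b))$ (resp.\ $(\Pois(\rho b),\Pois(\bar\rho c))$) of $T$ so that they agree with the largest possible probability, matching children and their labels one to one. If no collision occurs at any expansion up to depth $t$, the explored subgraph $G_u^t$ is a tree and, together with its labels, is isomorphic to the coupled copy of $T_u^t$.

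Next I would bound the probability that this coupling fails before depth $t$. To control the explored size, note that the offspring means of $T$ are at most $\max\{d_+,d_-\}\le 2b$ for $n$ large, and $b^t=n^{o(1)}$ forces $t=o(\log n)$, so $\expect{|T_u^t|}\le (t+1)(2b)^t=n^{o(1)}$; hence $|T_u^t|\le K:=n^{1/4}$ with probability $1-o(1)$ by Markov's inequality, and I restrict to this event. For a single expansion: by Le~Cam's inequality together with $n-D\le n$ and $|\,(n-D)-n\,|=D\le K$, the Binomial/Poisson coupling fails with probability at most $O\!\big(n\,(b/n)^2+K\,(b/n)\big)=O\!\big(b^2/n+Kb/n\big)=n^{-3/4+o(1)}$ (using $b=n^{o(1)}$, $K=n^{1/4}$); and a collision occurs at that expansion with probability at most $K\cdot\max\{a,b,c\}/n=n^{-3/4+o(1)}$, since at most $K$ vertices have been revealed and each is joined to $w$ with probability at most $\max\{a,b,c\}/n=(b+O(\sqrt b))/n$. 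A union bound over the at most $K$ expansion steps then gives total failure probability $K\cdot\big(n^{-3/4+o(1)}+n^{-3/4+o(1)}\big)=n^{-1/2+o(1)}=o(1)$.

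Putting this together, with probability $1-o(1)$ the explored subgraph $G_u^t$ is a labeled tree isomorphic to $(T_u^t,\tau_{T_u^t})$, and one then extends the construction to a coupling of the full pair $(G,\sigma)$ with $(T,\tau)$ by letting the two processes evolve independently outside the explored region. The only genuinely delicate aspect is making the \emph{adaptive} exploration precise — the count $n-D$, the identity of the next vertex to expand, and the meaning of ``collision'' all depend on the revealed history — so one must fix a deterministic revelation order and verify that the per-expansion estimates above hold uniformly over histories lying in the event $\{|T_u^t|\le K\}$. There is no conceptual obstacle beyond this bookkeeping: the essential quantitative input is that $b^t=n^{o(1)}$ keeps the explored region of size $n^{o(1)}\ll\sqrt n$, which is exactly what makes both the Poissonization errors and the collision probability negligible.
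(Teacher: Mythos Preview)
Your proposal is correct and follows the standard breadth-first exploration coupling, which is precisely the argument the paper invokes: the paper does not give its own proof of \prettyref{lmm:couplingtree} but simply cites \cite{MONESL:15}. Your sketch faithfully adapts that argument to the asymmetric case $a\neq c$, and the quantitative bookkeeping (tree size $n^{o(1)}$ via Markov, Le~Cam/Poissonization error and collision probability each $n^{-3/4+o(1)}$ per step, union bound over $n^{1/4}$ steps) is accurate; the only cosmetic point is that at each expansion the $+$- and $-$-child counts form a trinomial rather than two independent binomials, but the multinomial-to-independent-Poisson coupling has the same order of total variation error, so nothing changes.
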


Suppose that $(G^t, \sigma_{G^t} ) = (T^t, \tau_{T^t} )$ holds, then by comparing
BP iterations \prettyref{eq:mp_commun} and \prettyref{eq:mp_combine_commun} with the recursions of log likelihood ratio $\Gamma^t$ given in \prettyref{eq:recursionGamma},
we find that $R_u^t$ exactly equals to $\Gamma_u^t$, \ie,
the BP algorithm defined in Algorithm \prettyref{alg:MP_commun} exactly computes the log likelihood ratio $\Gamma_u^t$ for the tree model.
Building upon this intuition, the following lemma shows that $  p_{G} (\hat{\sigma}_{\rm BP}^t ) $ equals to $q_{T^t}^\ast$ as $n \to \infty.$

\begin{lemma}\label{lmm:optBPcondition}
For $t =t(n)$ such that $b^{t} =n^{o(1)}$,
\begin{align*}
\lim_{n \to \infty} | p_{G} (\hat{\sigma}_{\rm BP}^t ) - q_{T^t}^\ast | =0.
\end{align*}
\end{lemma}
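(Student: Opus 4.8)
\emph{The plan.} The plan is to fix a single vertex $u$, show that under the tree-coupling of \prettyref{lmm:couplingtree} the Belief-Propagation decision at $u$ equals \emph{exactly} the MAP estimator of the tree problem, and then bound the difference of the two error probabilities by the probability that the coupling fails, which vanishes in the regime $b^t=n^{o(1)}$.

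\emph{Step 1: reduction to a single vertex and locality of BP.} First I would note that, since every vertex is assigned its label by the same independent rule, the edges are inserted in a vertex-exchangeable way, and Algorithm~\ref{alg:MP_commun} is equivariant under vertex relabellings, all summands in \eqref{eq:estimationaccuracy} are equal, so $p_{G}(\hat{\sigma}_{\rm BP}^t)=\prob{\sigma_u\neq\hat{\sigma}_{\rm BP}^t(u)}$. Next, unrolling the recursions \eqref{eq:mp_commun} and \eqref{eq:mp_combine_commun} for $t$ rounds, I would argue that $R_u^t$, hence $\hat{\sigma}_{\rm BP}^t(u)=2\,\indc{R_u^t\ge-\varphi}-1$, is a deterministic function of the depth-$t$ ball $G_u^t$: the recursion consults only vertices and edges within graph-distance $t$ of $u$; every vertex within distance $t-1$ of $u$ has all its $G$-neighbors inside $G_u^t$; and the messages emanating from the distance-$t$ vertices are the trivial initial values $R^0\equiv 0$.

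\emph{Step 2: matching with the tree, and conclusion.} By \prettyref{lmm:couplingtree}, in the regime $b^t=n^{o(1)}$ there is a coupling of $(G,\sigma)$ and $(T,\tau)$ under which $\calA:=\bigl\{(G_u^t,\sigma_{G_u^t})=(T_u^t,\tau_{T_u^t})\bigr\}$ has probability tending to $1$. On $\calA$, the ball $G_u^t$ is a tree coinciding with $T_u^t$, so the non-backtracking updates \eqref{eq:mp_commun}--\eqref{eq:mp_combine_commun}, restricted to this ball, become literally the tree recursion \eqref{eq:recursionGamma} with the boundary values $R^0\equiv 0$ in place of $\Gamma^0\equiv 0$; by \prettyref{lmm:recursion} this yields $R_u^t=\Gamma_u^t$ and hence $\hat{\sigma}_{\rm BP}^t(u)=\hat{\tau}_{\rm MAP}=2\,\indc{\Gamma_u^t\ge-\varphi}-1$ under the identification of the two neighborhoods, while also $\sigma_u=\tau_u$. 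Thus $\indc{\sigma_u\neq\hat{\sigma}_{\rm BP}^t(u)}=\indc{\tau_u\neq\hat{\tau}_{\rm MAP}}$ on $\calA$. Since the MAP rule attains $q_{T^t}^\ast$ and depends only on $T_u^t$, one has $q_{T^t}^\ast=\rho\,\prob{\hat{\tau}_{\rm MAP}=-\mid\tau_u=+}+\bar\rho\,\prob{\hat{\tau}_{\rm MAP}=+\mid\tau_u=-}=\prob{\tau_u\neq\hat{\tau}_{\rm MAP}}$, and therefore
\begin{align*}
\bigl|p_{G}(\hat{\sigma}_{\rm BP}^t)-q_{T^t}^\ast\bigr| &= \Bigl|\Expect\bigl[\indc{\sigma_u\neq\hat{\sigma}_{\rm BP}^t(u)}-\indc{\tau_u\neq\hat{\tau}_{\rm MAP}}\bigr]\Bigr| \\
&\le \prob{\comp{\calA}} \to 0.
\end{align*}

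\emph{Main obstacle.} The step I expect to require genuine care is the identification $R_u^t=\Gamma_u^t$ on $\calA$. On a graph with cycles the updates can transport information around short loops, so one must verify carefully that $t$ rounds of \eqref{eq:mp_commun} never leave $G_u^t$, that on the coupling event this ball is a tree whose non-leaf vertices have no neighbors outside it, and that the non-backtracking bookkeeping (the sum over $\partial i\setminus\{j\}$) in \eqref{eq:mp_commun} lines up exactly with the parent/children bookkeeping of \prettyref{lmm:recursion}, with the boundary messages $R^0\equiv 0$ matching $\Gamma^0\equiv 0$. Everything else is the elementary inequality $|\Expect[X]-\Expect[Y]|\le\prob{X\neq Y}$ for $\{0,1\}$-valued $X,Y$ that coincide on $\calA$.
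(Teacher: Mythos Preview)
Your proposal is correct and follows essentially the same approach as the paper's own proof: invoke the coupling of \prettyref{lmm:couplingtree}, observe that on the coupling event $R_u^t=\Gamma_u^t$ so that the BP decision coincides with the tree-MAP decision, and absorb the discrepancy into the $o(1)$ coupling-failure probability. The paper's proof is extremely terse (three sentences), whereas you spell out the symmetry reduction to a single vertex, the locality of $R_u^t$ in $G_u^t$, and the explicit bound $|p_G-q_{T^t}^\ast|\le\prob{\comp{\calA}}$; these are exactly the details the paper leaves implicit.
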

\begin{proof}
In view of \prettyref{lmm:couplingtree}, we can construct a coupling such that
$(G^t, \sigma_{G^t}) = (T^t, \tau_{T^t} )$ with probability converging to $1$.
On the event $(G^t, \sigma_{G^t} ) = (T^t, \tau_{T^t})$,
we have that  $R_u^t=\Gamma_u^t$. Hence,
\begin{align}
p_{G} (\hat{\sigma}_{\rm BP}^t )  = q_{T^t}^\ast +o(1),  \label{eq:BPaccuracy}
\end{align}
where $o(1)$ term comes from the coupling error.
\end{proof}

The following lemma shows that $ p^\ast_{G}$ is lower bounded by $p_{T^t}^\ast$ as $n \to \infty$.
\begin{lemma}\label{lmm:accuracyupperbound}
For $t =t(n)$ such that $b^{t} =n^{o(1)}$,
\begin{align*}
\limsup_{n \to \infty} \left( p^\ast_{G} - p^\ast_{T^t}  \right) \ge 0.
\end{align*}
\end{lemma}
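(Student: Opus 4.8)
The plan is to show that revealing extra information can only help an estimator, so the minimum error probability for the graph problem is bounded below by the error probability of an ``oracle'' estimator that additionally observes the true labels of all vertices at distance exactly $t$ from a fixed vertex $u$; and that this oracle problem is, via the coupling of Lemma \ref{lmm:couplingtree}, essentially the tree problem with exact information at the boundary whose optimum is $p^\ast_{T^t}$. Concretely, fix a vertex $u$ and consider an estimator $\hat{\sigma}_u$ that has access to $(G, \sigma_{\partial G^t_u})$, i.e., the whole graph together with the exact labels on the distance-$t$ sphere around $u$. Since any estimator based only on $G$ is a (randomized) function of $G$ and hence also available to the oracle, the minimum error probability $\prob{\hat{\sigma}_u \neq \sigma_u}$ over graph-only estimators is at least the minimum over oracle estimators; combined with the symmetry-over-vertices identity \prettyref{eq:optimalaccuracygraph} this gives $p^\ast_G \ge \frac{1}{2} - \frac{1}{2}\expect{|\Prob\{\sigma_u = + \mid G, \sigma_{\partial G^t_u}\} - \Prob\{\sigma_u = - \mid G, \sigma_{\partial G^t_u}\}|}$.

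The next step is to argue that, in the limit $n\to\infty$, the oracle posterior of $\sigma_u$ given $(G,\sigma_{\partial G^t_u})$ depends only on the local data $(G^t_u, \sigma_{\partial G^t_u})$, up to $o(1)$ error. For this I would invoke Lemma \ref{lmm:couplingtree}: on the high-probability event that $(G^t_u,\sigma_{G^t_u}) = (T^t_u,\tau_{T^t_u})$, and in fact on the slightly larger event that the depth-$t$ neighborhood is a tree, the data outside $G^t_u$ is conditionally independent of $\sigma_u$ given $\sigma_{\partial G^t_u}$ (the distance-$t$ sphere separates $u$ from the rest, and under the block model the two sides are conditionally independent given the labels on the separating set). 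Hence the oracle posterior of $\sigma_u$ equals the posterior given $(G^t_u,\sigma_{\partial G^t_u})$ alone, which under the coupling equals the tree posterior $\Prob\{\tau_u = + \mid T^t_u, \tau_{\partial T^t_u}\}$. Plugging this into \prettyref{eq:optimalestimationaccuracytree} yields $p^\ast_G \ge p^\ast_{T^t} + o(1)$, which is the claim after taking $\limsup_{n\to\infty}$ (recalling that $p^\ast_{T^t}$ depends on $n$ only through $a,b,c$, which converge along the scaling \prettyref{eq:asymptotics}).

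The main obstacle I anticipate is the conditional-independence / separation argument on the event that the neighborhood is a tree: one must be careful that conditioning on $\sigma_{\partial G^t_u}$ really does decouple the inside from the outside, because the block model only conditionally factorizes edges given \emph{all} vertex labels, not just those on the sphere. The clean way around this is to further condition on $\sigma_{G^t_u}$ being consistent with treelikeness and then integrate out the interior labels, using that the edge variables between $\partial G^t_u$ and its complement are, given the labels there, independent of everything strictly inside $G^t_{u}$; the coupling of Lemma \ref{lmm:couplingtree} (which matches both the graph and the labels on $G^t_u$) is exactly what makes the interior-label posterior agree between graph and tree. A secondary technical point is handling the $o(1)$ coupling error uniformly in the estimator, which follows since all probabilities are bounded and the coupling fails with vanishing probability, exactly as in the proof of Lemma \ref{lmm:optBPcondition}.
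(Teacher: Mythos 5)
Your high-level strategy — lower-bound $p^\ast_G$ by the error of an oracle estimator that also sees $\sigma_{\partial G^t_u}$, then transfer to the tree via the coupling — is exactly the paper's approach, and you correctly flag the conditional-independence step as the nontrivial part. But your proposed workaround for that step has a genuine gap. The Markov-blanket intuition (``the distance-$t$ sphere separates $u$ from the rest'') does \emph{not} survive the SBM's non-edge likelihoods: for every pair $(i,j)$ with $i$ in the strict interior $A = G^{t-1}_u$ and $j$ in the exterior $C$, the fact that $(i,j)\notin E$ contributes a factor $1-a/n$, $1-b/n$, or $1-c/n$ depending on $\sigma_i$ and $\sigma_j$. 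These terms couple $\sigma_A$ (hence $\sigma_u$) to $\sigma_C$ directly, \emph{not} through the sphere labels. Your workaround only discusses ``edge variables between $\partial G^t_u$ and its complement,'' but says nothing about the $A$-to-$C$ non-edge factor, which is precisely where the coupling lives; saying ``integrate out the interior labels'' does not dispose of it.

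What the paper actually does (Lemma~\ref{lmm:asymptoticalIndependence}) is write the likelihood as $2^{-n}\Phi_{B,B}\Phi_{C,C}\Phi_{\partial G^t,C}\Phi_{A,C}$ and show that, on a high-probability event $\calE_n$ that includes the concentration $|s_C| = |\sum_{i\in C}\sigma_i - (2\rho-1)|C||\le n^{0.6}$ together with a size bound on $B = G^t_u$, the troublesome factor satisfies $\Phi_{A,C}=(1+o(1))K(\sigma_A,|C|)$: it depends on $\sigma_C$ only through $|C|$ (up to $o(1)$). This is what makes $\sigma_u$ asymptotically independent of the outside given $(G^t_u,\sigma_{\partial G^t_u})$, and it is nontrivial precisely when $\rho\neq 1/2$ or $a\neq c$ (when $\rho=1/2$, $a=c$, the factor's $\sigma$-dependence cancels by symmetry). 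To repair your proof you need this step — a concentration estimate for $s_C$ and a Taylor expansion of $\Phi_{A,C}$ — not the edge-separation argument; the rest of your outline (oracle bound, coupling to the tree, replacing the graph posterior by the tree posterior on $\calE_n$) then goes through as you describe.
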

We pause a while to give some intuition behind the lemma.
To lower bound $p^\ast_G$, it suffices to lower bound the error probability of estimating $\sigma_u$ for a given vertex $u$ based on graph $G$. To this end, we consider
an oracle estimator, which in addition to the graph structure, the exact labels of all vertices at distance exactly $t$ from $u$ are also revealed.
We further show that once the exact labels at distance $t$ are conditioned, $\sigma_u$ becomes asymptotically independent of the labels of all vertices
at distance larger than $t$ from $u$. Hence, effectively the oracle estimator is equivalent to the MAP estimator solely based on the graph structure in $G_u^t$ and the exact
labels at distance $t$. By the coupling lemma, $G_u^t$ is a tree with high probability, and thus the error probability of the oracle estimator asymptotically
equals to $p^\ast_{T^t}.$

\section{Gaussian Density Evolution} \label{sec:densityevolution}
In the previous subsection, we have argued that in the limit $n \to \infty$, $  p_{G} (\hat{\sigma}_{\rm BP}^t ) $ equals to $q_{T^t}^\ast$,
and $p_{G}^\ast$ is bounded by $ p_{T^t}^\ast$ from the below.
In this section, we analyze recursions \prettyref{eq:recursionLambda} and \prettyref{eq:recursionGamma} using density evolution analysis with Gaussian approximations,
and derive simple formulas  for $p^\ast_{T^t}$ and $q^\ast_{T^t}$ in the limit $n \to \infty.$ Afterwards, we give the proof of  \prettyref{thm:optimalaccuracy}.

Notice that $\Gamma_i^t$ is a function of $T_i^t$ alone.
Since the subtrees $\{T_i^t\}_{i \in \partial u}$ conditional on $\tau_u$ are independent and identically distributed,
$\{ \Gamma^t_{i } \}_{i \in \partial u}$ conditional on $\tau_u$ are also independent and identically distributed.
Thus, in view of the recursion \prettyref{eq:recursionGamma}, $\Gamma_u^t$ can be viewed as a sum of i.i.d.\
random variables. When the expected degree of $u$ tends to infinity, due to the central limit theorem,
 we expect that the distribution of $\Gamma_u^t$ conditional on $\tau_u$ is approximately Gaussian.
Moreover, the construction of the subtree $T_i^t$ conditional on $\tau_i$ is the
same as the construction of $T_u^t$ conditional on $\tau_u$. Therefore, for any $i \in \partial u$, the distribution of $\Gamma^t_{i }$ conditional on $\tau_i$ is the same as
the distribution of $\Gamma^t_u$ conditional on $\tau_u$. Similar conclusions hold for $\Lambda_i^t$ as well.

Let $Z_\pm^t $ ($W_\pm^t$) denote a random variable that has the same distribution as $\Gamma_u^t $ ($\Lambda_u^t$) conditional on $\tau_u=\pm$.
The following lemma provides expressions of the mean and variance of $Z_+^t$ and $Z_-^t$. Recall that $\lambda=  \frac{\rho( \mu+\nu)^2}{8} $ and $\theta= \frac{\rho(\mu-\nu)^2 }{8} +  \frac{(1 - 2\rho ) \nu^2}{4}$.
\begin{lemma}
For all $t \ge 0$,
\begin{align}
\expect{Z_{\pm}^{t+1}} & =  \pm  \theta  \pm \lambda \expect{ \tanh( Z_+^t + \varphi) }  + O( b^{-1/2} ),   \label{eq:meanZ}\\
\var\left( Z_{\pm}^{t+1}\right) & =   \theta+ \lambda  \expect{ \tanh( Z_+^t +  \varphi) }  + O( b^{-1/2} ).\label{eq:varianceZ}
\end{align}
\label{lmm:meanvarianceZ}
\end{lemma}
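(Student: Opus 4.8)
The plan is to expand the recursion \prettyref{eq:recursionGamma} for $\Gamma_u^{t+1}$ conditioned on $\tau_u=\pm$, and analyze the resulting sum of i.i.d.\ contributions via a Poissonization / central limit argument, keeping track of the error terms coming from the asymptotics \prettyref{eq:asymptotics}. First I would condition on $\tau_u=+$. Then the children of $u$ split into $L_u\sim\Pois(\rho a)$ children with label $+$ and $M_u\sim\Pois(\bar\rho b)$ children with label $-$, and by the remarks preceding the lemma, the messages $\Gamma_j^t$ arriving from a $+$ child are i.i.d.\ copies of $Z_+^t$, while those from a $-$ child are i.i.d.\ copies of $Z_-^t$, all independent of $L_u,M_u$. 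Hence
\begin{align*}
Z_+^{t+1} \eqdistr \frac{-d_++d_-}{2} + \sum_{k=1}^{L_u} F(Z_{+,k}^t) + \sum_{k=1}^{M_u} F(Z_{-,k}^t),
\end{align*}
with the obvious analogue for $Z_-^{t+1}$ (swap $\rho a\leftrightarrow \rho b$ and $\bar\rho b\leftrightarrow \bar\rho c$ in the Poisson means). Using the formula $\mathbb{E}[\sum_{k=1}^{N} X_k] = \mathbb{E}[N]\,\mathbb{E}[X]$ and $\var(\sum_{k=1}^N X_k) = \mathbb{E}[N]\,\mathbb{E}[X^2]$ for $N$ Poisson independent of the i.i.d.\ $X_k$, the mean and variance of $Z_\pm^{t+1}$ reduce to explicit expressions in $\mathbb{E}[F(Z_\pm^t)]$ and $\mathbb{E}[F(Z_\pm^t)^2]$ times the Poisson rates.

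The heart of the computation is then a Taylor expansion of $F$ in the regime \prettyref{eq:asymptotics}. Writing $a=b+\sqrt b\,\mu$, $c=b+\sqrt b\,\nu$, one checks that $F(x) = \frac12\log\frac{e^{2x}\rho a + \bar\rho b}{e^{2x}\rho b + \bar\rho c}$ is $O(b^{-1/2})$ uniformly, and more precisely
\begin{align*}
F(x) = \frac{1}{2\sqrt b}\Big(\rho\mu\,\frac{e^{2x}}{e^{2x}\rho+\bar\rho} - \bar\rho\nu\,\frac{1}{e^{2x}\rho+\bar\rho}\Big) + O(b^{-1}),
\end{align*}
so that $F(x) = \frac{1}{2\sqrt b}g(x) + O(b^{-1})$ for a bounded smooth function $g$; the key algebraic simplification is that $\frac{e^{2x}}{e^{2x}\rho+\bar\rho}$ and its companion combine, after multiplying by $e^{x+\varphi}/(e^{x+\varphi})$ and using $e^{2\varphi}=\rho/\bar\rho$, into something expressible through $\tanh(x+\varphi)$. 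Plugging this into the mean and variance formulas, the leading $1/\sqrt b$ in $F$ cancels the $\sqrt b$ growth of $a,b,c$ and produces $O(1)$ limits; the $\Pois$ rates contribute the coefficients $\lambda$ and $\theta$ (this is exactly the bookkeeping that defines $\lambda=\rho(\mu+\nu)^2/8$ and $\theta=\rho(\mu-\nu)^2/8 + (1-2\rho)\nu^2/4$), and one also has to observe that $\mathbb{E}[F(Z_\pm^t)^2] = O(b^{-1})$ so that the variance picks up a contribution only from the product of the $\Pois$ rate ($\Theta(b)$) with $\mathbb{E}[F^2]$ ($\Theta(b^{-1})$), while $\mathbb{E}[F(Z_-^t)] = -\mathbb{E}[F(-Z_+^t)]$-type symmetry lets everything be written in terms of $Z_+^t$ alone. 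I would also use that $\mathbb{E}[\tanh(Z_-^t+\varphi)] = -\mathbb{E}[\tanh(Z_+^t-\varphi)]$ together with the known relation between the laws of $Z_+^t$ and $Z_-^t$ (an absolutely-continuous change of measure with density $e^{2\Gamma}$, i.e.\ the standard ``symmetric channel'' identity) to collapse the $\pm$ cases to the single statement displayed in \prettyref{eq:meanZ}--\prettyref{eq:varianceZ}.

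The main obstacle I anticipate is making the error terms genuinely uniform in $t$ and controlling them rigorously: the $O(b^{-1/2})$ in the statement must not secretly depend on $t$ or on the (a priori unbounded) support of $Z_\pm^t$. Since $F$ is bounded by $O(b^{-1/2})$ uniformly in its argument, $\Gamma_u^t$ is a sum of $\Pois(\Theta(b))$ bounded terms, so $Z_\pm^t$ has $O(1)$ range in probability and uniformly bounded moments; this, plus boundedness of $g$ and its derivatives, is what one needs to push the Taylor remainder estimates through with $t$-independent constants. A secondary technical point is the Poisson fluctuations of $L_u,M_u$ around their means: $\var(L_u)=\mathbb{E}[L_u]=\Theta(b)$ contributes to $\var(Z^{t+1})$ a term $\mathbb{E}[L_u]\,\mathbb{E}[F^2]=\Theta(b)\cdot\Theta(b^{-1})=\Theta(1)$, which is precisely the variance term claimed — so one must be careful to use $\var(\sum^N X_k)=\mathbb{E}[N]\mathbb{E}[X^2]$ and not mistakenly drop the mean-square term. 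Once these uniformities are in hand, the computation is a routine but somewhat lengthy expansion, which I would relegate to \prettyref{app:additional}.
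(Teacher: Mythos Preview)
Your overall plan---write $Z_\pm^{t+1}$ as a Poisson sum via \prettyref{eq:recursionGamma}, Taylor-expand $F$ in the regime \prettyref{eq:asymptotics}, and use the change-of-measure identity $\expect{g(Z_-^t)}=\expect{g(Z_+^t)e^{-2Z_+^t}}$ to express everything through $Z_+^t$---is exactly the strategy the paper follows. Two points deserve correction, however.

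First, the expansion $F(x)=\frac{1}{2\sqrt b}g(x)+O(b^{-1})$ is one order too coarse. The Poisson rates are $\Theta(b)$, so the $O(b^{-1})$ remainder becomes $O(1)$ after summation, not $O(b^{-1/2})$; moreover this $O(1)$ second-order contribution is \emph{not} an error term but is precisely where the $\lambda\,\expect{\tanh(Z_+^t+\varphi)}$ piece comes from. You must expand $F$ to order $b^{-1}$ explicitly. The paper handles this by writing $F(x)=\frac12\log(b/c)+\frac{e^{4\beta}-1}{2}f(x)-\frac{(e^{4\beta}-1)^2}{4}f^2(x)+O(|e^{4\beta}-1|^3)$ with $f(x)=(1+e^{-2x}\bar\rho c/(\rho b))^{-1}$ and $e^{4\beta}=ac/b^2$; the advantage of this particular choice of $f$ is that the change of measure yields the \emph{exact} identities
\[
\rho b\,\expect{f(Z_+^t)}+\bar\rho c\,\expect{f(Z_-^t)}=\rho b,\qquad
\rho b\,\expect{f^2(Z_+^t)}+\bar\rho c\,\expect{f^2(Z_-^t)}=\rho b\,\expect{f(Z_+^t)},
\]
so $Z_-^t$ disappears from the computation with no additional error. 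With your direct expansion in $b^{-1/2}$ the algebra still closes, but you carry more terms and must invoke the $\tanh$ form of the change-of-measure identity (which the paper also records: $\bar\rho\,\expect{\tanh(Z_-^t+\varphi)}=\rho-\bar\rho-\rho\,\expect{\tanh(Z_+^t+\varphi)}$).

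Second, the relation ``$\expect{\tanh(Z_-^t+\varphi)}=-\expect{\tanh(Z_+^t-\varphi)}$'' and the ``$\expect{F(Z_-^t)}=-\expect{F(-Z_+^t)}$-type symmetry'' you invoke are false in general: $Z_-^t\eqdistr -Z_+^t$ only in the fully symmetric case $\rho=1/2$, $a=c$. The only link between the laws of $Z_+^t$ and $Z_-^t$ available here is the likelihood-ratio change of measure you also mention, and that is what you must use. The paper then obtains the $Z_+^{t+1}$ statements from the $Z_-^{t+1}$ ones by the parameter swap $(\rho,a,b,c,+,-)\leftrightarrow(\bar\rho,c,b,a,-,+)$, not by any distributional symmetry of $Z_\pm^t$ for fixed parameters.
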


Recall that  $(v_t: t \ge 0)$ satisfies  $v_0=0$ and
\begin{align*}
v_{t+1} = \theta + \lambda h(v_t)= \theta+ \lambda \expect{ \tanh( v_t+ \sqrt{v_t} Z + \varphi},
\end{align*}
where $Z \sim \calN(0,1)$. Similarly, define $(w_t: t \ge 1)$ by $w_1=\theta+\lambda = \frac{\rho \mu^2+ \bar \rho \nu^2}{4}$ and
\begin{align*}
w_{t+1}= \theta+ \lambda h(w_t)=\theta+ \lambda \expect{ \tanh( w_t+ \sqrt{w_t} Z + \varphi}.
\end{align*}

The following lemma shows that for any fixed $t \ge 0$, $Z_{\pm}^t $ and $W_{\pm}^t$  are approximately Gaussian.
\begin{lemma}\label{lmm:gaussiandensityevolution}
For any $t \ge 0$, as $n \to \infty$,
\begin{align}
\sup_x \bigg|     \prob{  \frac{Z_{\pm}^{t}  \mp v_{t}  }{  \sqrt{ v_t } } \leq x } - \prob{Z \leq x}    \bigg|  =
O(b^{-1/2}).
 \label{eq:gaussianlimit}
\end{align}
Similarly, for any  $t \ge 1$, as $n \to \infty$,
\begin{align}
\sup_x \bigg|     \prob{  \frac{W_{\pm}^{t}  \mp w_{t}  }{  \sqrt{ w_t } } \leq x } - \prob{Z \leq x}    \bigg|  =
O(b^{-1/2}).
 \label{eq:gaussianlimitexact}
\end{align}
\end{lemma}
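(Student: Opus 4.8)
\emph{Proof plan.} I would argue by induction on $t$, carrying both \prettyref{eq:gaussianlimit} and \prettyref{eq:gaussianlimitexact} along together; below I spell out the recursion for $Z_\pm^t$ (the law of $\Gamma_u^t$ given $\tau_u=\pm$), the argument for $W_\pm^t$ (the law of $\Lambda_u^t$ given $\tau_u=\pm$) being the same apart from the base case. For $Z$ the base case $t=0$ is trivial since $Z_\pm^0=0=v_0$. For the inductive step, assume \prettyref{eq:gaussianlimit} holds at level $t$; since the Kolmogorov distance $d_{\rm K}$ is invariant under affine rescaling, this reads $d_{\rm K}(Z_+^t,\,v_t+\sqrt{v_t}\,Z)=O(b^{-1/2})$ and $d_{\rm K}(Z_-^t,\,-v_t+\sqrt{v_t}\,Z)=O(b^{-1/2})$. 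The starting point is the exact distributional recursion coming from the branching construction of $T_u$ and \prettyref{eq:recursionGamma}: conditionally on $\tau_u=+$,
\begin{align*}
Z_+^{t+1}\;\eqdistr\;\frac{-d_++d_-}{2}+\sum_{k=1}^{L}F(X_k)+\sum_{k=1}^{M}F(Y_k),
\end{align*}
where $L\sim\Pois(\rho a)$, $M\sim\Pois(\bar\rho b)$, the $X_k$ are \iid copies of $Z_+^t$, the $Y_k$ are \iid copies of $Z_-^t$, and all are independent; symmetrically for $\tau_u=-$. So, up to an additive constant, $Z_\pm^{t+1}$ is a compound Poisson sum whose intensity is $\Theta(b)\to\infty$.

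Next I would identify the limiting mean and variance. By \prettyref{lmm:meanvarianceZ}, $\expect{Z_\pm^{t+1}}=\pm\theta\pm\lambda\,\expect{\tanh(Z_+^t+\varphi)}+O(b^{-1/2})$ and $\var(Z_\pm^{t+1})=\theta+\lambda\,\expect{\tanh(Z_+^t+\varphi)}+O(b^{-1/2})$. Since $g(x)=\tanh(x+\varphi)$ is bounded with $g'\in L^1$, integration by parts gives $|\expect{g(X)}-\expect{g(Y)}|\le\|g'\|_1\,d_{\rm K}(X,Y)$; combined with the inductive hypothesis this turns $\expect{\tanh(Z_+^t+\varphi)}$ into $\expect{\tanh(v_t+\sqrt{v_t}Z+\varphi)}+O(b^{-1/2})=h(v_t)+O(b^{-1/2})$. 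Hence $\expect{Z_\pm^{t+1}}=\pm v_{t+1}+O(b^{-1/2})$ and $\var(Z_\pm^{t+1})=v_{t+1}+O(b^{-1/2})$, with $v_{t+1}=\theta+\lambda h(v_t)$ exactly as in the definition of $(v_t)$.

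Then I would run a Berry--Esseen argument on the compound Poisson sum. A Taylor expansion around $a/b=c/b=1$ gives $F(x)=\tfrac1{2\sqrt b}\bigl(\tfrac{\mu+\nu}{2}\tanh(x+\varphi)+\tfrac{\mu-\nu}{2}\bigr)+O(b^{-1})$, so $\|F\|_\infty=O(b^{-1/2})$. Writing each of $L$ and $M$ as a sum of $m=\Theta(b)$ i.i.d.\ $\Pois(\Theta(1))$ variables expresses $Z_+^{t+1}$, minus its constant shift, as a sum of $m$ i.i.d.\ terms $S_1,\dots,S_m$ with $|S_j|\le Cb^{-1/2}N_j$ for Poisson variables $N_j$ of mean $\Theta(1)$; hence $\expect{|S_j-\expect{S_j}|^3}=O(b^{-3/2})$, while $\sum_j\var(S_j)=\var(Z_+^{t+1})$, which by the previous paragraph is $\ge c>0$ for all large $b$ whenever $v_{t+1}>0$. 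The classical Berry--Esseen bound then gives $\sup_x\bigl|\prob{(Z_+^{t+1}-\expect{Z_+^{t+1}})/\sqrt{\var(Z_+^{t+1})}\le x}-\prob{Z\le x}\bigr|=O(b^{-1/2})$, and likewise for $Z_-^{t+1}$. Since $\calN(m,s^2)$ and $\calN(m_0,s_0^2)$ with $s_0^2\ge c$ fixed satisfy $d_{\rm K}=O(|m-m_0|+|s^2-s_0^2|)$, I may replace the empirical mean by $\pm v_{t+1}$ and the empirical variance by $v_{t+1}$ at a further cost of $O(b^{-1/2})$, which is \prettyref{eq:gaussianlimit} at level $t+1$. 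The degenerate case $v_{t+1}=0$ (equivalently $\rho\mu=\bar\rho\nu$, where $v_t\equiv 0$) is trivial, since then $\expect{Z_\pm^{t+1}}$ and $\var(Z_\pm^{t+1})$ are both $O(b^{-1/2})$. For $W$ the inductive step is identical, using the analogue of \prettyref{lmm:meanvarianceZ} for $\Lambda$; the base case $t=1$ is a short direct computation, since $W_\pm^1$ is a fixed affine function of two independent Poisson variables of means $\Theta(b)$ (here $F(+\infty)=\tfrac12\log(a/b)$ and $F(-\infty)=\tfrac12\log(b/c)$), with mean $\pm w_1+O(b^{-1/2})$, variance $w_1+O(b^{-1/2})$, and Gaussianity to $O(b^{-1/2})$ by Berry--Esseen, where $w_1=\theta+\lambda=\tfrac{\rho\mu^2+\bar\rho\nu^2}{4}$.

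The step I expect to be the crux is certifying the $O(b^{-1/2})$ rate in the Berry--Esseen estimate. This rests on three points: (i) the uniform per-term bound $\|F\|_\infty=O(b^{-1/2})$; (ii) control of the random (Poisson) number of summands, which the decomposition into $\Theta(b)$ i.i.d.\ blocks supplies; and, most delicately, (iii) a matching lower bound $\var(Z_\pm^{t+1})=\Theta(1)$, for which \prettyref{lmm:meanvarianceZ} together with nondegeneracy $v_{t+1}>0$ is indispensable — without it the Berry--Esseen rate would degrade. Everything else is bookkeeping: the $\Theta(\sqrt b)$-scale cancellations that keep $\expect{Z_\pm^{t+1}}=O(1)$ have already been handled inside \prettyref{lmm:meanvarianceZ}, and re-centering and re-scaling from the empirically normalized CDF to the $v_{t+1}$-normalized one is a one-line perturbation of Gaussians.
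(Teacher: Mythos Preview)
Your proposal is correct and follows essentially the same architecture as the paper: induction on $t$, trivial base case for $Z$ at $t=0$ and a direct Poisson computation for $W$ at $t=1$, then at the inductive step apply \prettyref{lmm:meanvarianceZ}, pass the induction hypothesis through $\expect{\tanh(Z_+^t+\varphi)}$ via integration by parts (the paper calls this the ``area rule''), and finish with a Berry--Esseen bound on the compound Poisson sum.

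The only implementation difference is in the Berry--Esseen step. The paper invokes a ready-made Berry--Esseen inequality for compound Poisson sums (\prettyref{lmm:Poisson_BE}, due to Korolev--Shevtsova), using the merged Poisson view $Z_-^{t+1}=\text{const}+\sum_{i=1}^{N_{d_-}}Y_i$ with mixture summands, together with the pointwise bounds $\expect{Y_1^2}=\Omega(b^{-1})$ and $\expect{|Y_1|^3}=O(b^{-3/2})$ coming from $F(\pm\infty)=\Theta(b^{-1/2})$. You instead split each Poisson count into $\Theta(b)$ \iid $\Pois(\Theta(1))$ blocks and apply the classical Berry--Esseen theorem, lower bounding the total variance by $v_{t+1}+O(b^{-1/2})$ via \prettyref{lmm:meanvarianceZ}. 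Both routes yield the same $O(b^{-1/2})$ rate; yours is slightly more self-contained (no external Poisson--CLT lemma), while the paper's is a one-line citation. Your explicit treatment of the degenerate case $\rho\mu=\bar\rho\nu$ (where $\Gamma_u^t\equiv 0$ exactly) is also in line with how the paper handles the $v_0=0$ base case.
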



Before proving \prettyref{thm:optimalaccuracy}, we also need a key lemma, which shows that $h$ is continuous and non-decreasing,
and $h$ is concave if $\varphi=0$.

 \begin{lemma}\label{lmm:hmonotone}
 $h(v)$ is continuous on $[0, \infty)$ and  for $v \in (0, +\infty)$,
 \begin{align}
 h'(v) =  \expect{  \left( 1- \tanh ( v+ \sqrt{v} Z  + \varphi ) \right) \left( 1- \tanh^2( v + \sqrt{v} Z +\varphi)  \right) } \ge 0 \label{eq:hderivative}.
 \end{align}
Furthermore, if $\varphi=0$, then $h'(v) \ge h'(w)$ for $0<v<w<\infty$.
 \end{lemma}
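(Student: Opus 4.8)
The plan is to establish the three assertions in turn: continuity of $h$ on $[0,\infty)$, the closed form \prettyref{eq:hderivative} for $h'$ on $(0,\infty)$ (which makes nonnegativity immediate), and finally the monotonicity $h'(v)\ge h'(w)$ for $0<v<w$ under the extra hypothesis $\varphi=0$. Continuity is the routine part: for each fixed $z$ the map $v\mapsto \tanh(v+\sqrt v\, z+\varphi)$ is continuous on $[0,\infty)$ and dominated by $1$, so dominated convergence gives that $h$ is continuous, with $h(0)=\tanh\varphi$.

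For the derivative I would fix a compact interval $[v_1,v_2]\subset(0,\infty)$ and differentiate under the expectation. The $v$-derivative of the integrand is $\bigl(1-\tanh^2(v+\sqrt v\,Z+\varphi)\bigr)\bigl(1+\tfrac{Z}{2\sqrt v}\bigr)$, whose modulus is bounded on $[v_1,v_2]$ by $1+\tfrac{|Z|}{2\sqrt{v_1}}\in L^1(\calN(0,1))$, so the interchange is legitimate and $h\in C^1(0,\infty)$ with
\[
h'(v)=\expect{1-\tanh^2(v+\sqrt v\,Z+\varphi)}+\frac{1}{2\sqrt v}\,\expect{Z\bigl(1-\tanh^2(v+\sqrt v\,Z+\varphi)\bigr)}.
\]
To the second term I apply the Gaussian integration-by-parts identity $\expect{Zg(Z)}=\expect{g'(Z)}$ with $g(z)=1-\tanh^2(v+\sqrt v\,z+\varphi)$; since $g'(z)=-2\sqrt v\,\tanh(v+\sqrt v\,z+\varphi)\bigl(1-\tanh^2(v+\sqrt v\,z+\varphi)\bigr)$, the factor $\sqrt v$ cancels and the second term becomes $-\expect{\tanh(\cdot)\bigl(1-\tanh^2(\cdot)\bigr)}$; adding the two terms yields exactly \prettyref{eq:hderivative}. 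Since $1-\tanh\in(0,2)$ and $1-\tanh^2\in(0,1]$ pointwise, the integrand is (strictly) positive, so $h'\ge 0$. As a by-product, when $\varphi=0$ dominated convergence in \prettyref{eq:hderivative} as $v\to0^+$ gives $\lim_{v\to0^+}h'(v)=1$.

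For the last claim assume $\varphi=0$, so $X_v:=v+\sqrt v\,Z\sim\calN(v,v)$, whose density $p_v$ satisfies the consistency relation $p_v(x)=e^{2x}p_v(-x)$. Set $q_v(x):=\tfrac12\bigl(p_v(x)+p_v(-x)\bigr)$, which is symmetric about $0$; the consistency relation yields the reweighting $p_v(x)=q_v(x)(1+\tanh x)$. Substituting into \prettyref{eq:hderivative} and using $(1-\tanh x)(1+\tanh x)=1-\tanh^2 x$ gives the clean representation $h'(v)=\expects{(1-\tanh^2 X)^2}{q_v}$. Now $x\mapsto(1-\tanh^2 x)^2$ is a decreasing function of $|x|$, so it suffices to show $|X|$ under $q_v$ is stochastically increasing in $v$. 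But $\prob{|X|\le t}=\int_{-t}^t p_v(x)\,dx=\prob{|N_v|\le t}$ for $N_v\sim\calN(v,v)$, and a short computation gives $\tfrac{d}{dv}\prob{|N_v|\le t}=-\tfrac{\phi(a)}{2v^{3/2}}\bigl[(t+v)+e^{-2t}(t-v)\bigr]$ with $a=(t-v)/\sqrt v$ and $\phi$ the standard normal density (the exponent bookkeeping uses that the ratio of Gaussian densities at the two endpoints is $e^{-2t}$); the bracket is positive for all $t>0$ (trivially if $t\ge v$, and since $e^{-2t}<1$ also when $t<v$), so the derivative is negative. Hence $|X|$ under $q_v$ is stochastically increasing in $v$, and applying this to the decreasing function $(1-\tanh^2(\cdot))^2$ gives $h'(v)\ge h'(w)$ for $0<v<w$.

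The main obstacle is the last paragraph: one must spot the symmetrization $q_v$ together with the reweighting identity that turns $h'(v)$ into an expectation of a function of $|X|$ alone — this is exactly where $\varphi=0$ enters, through consistency of the half-log-likelihood-ratio law — and then pin down the sign of $\tfrac{d}{dv}\prob{|N_v|\le t}$, which needs the case split on $t$ versus $v$ but is otherwise elementary. The continuity and the differentiation-plus-Stein steps are routine once the domination bound on $[v_1,v_2]$ is recorded.
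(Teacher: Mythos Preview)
Your proof is correct. The continuity argument and the derivation of \prettyref{eq:hderivative} via differentiation under the integral plus Gaussian integration by parts are essentially the paper's argument, just phrased through Stein's identity rather than an explicit integration by parts.

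For the concavity when $\varphi=0$, both you and the paper arrive at the key representation $h'(v)=\expect{(1-\tanh^2 X_v)^2}$ with $X_v\sim\calN(v,v)$, but by different routes: the paper uses the change-of-measure moment identity $\expect{\tanh^{2k}X_v}=\expect{\tanh^{2k-1}X_v}$, while you use the symmetrization $q_v$ together with the reweighting $p_v=q_v(1+\tanh)$. The subsequent stochastic-dominance step is also organized differently. The paper writes $h'(v)=\expect{(1-\tanh^2(\sqrt{v}\,|Z+\sqrt v|))^2}$ and proceeds in two elementary monotonicity steps: first replace the outer factor $\sqrt v$ by $\sqrt w$ (pointwise monotonicity of $\tanh^2$), then invoke the simple fact that $|Z+\sqrt v|\preceq|Z+\sqrt w|$ for standard normal $Z$, which follows immediately from symmetry of the Gaussian density. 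You instead establish in one stroke that $|N(v,v)|$ is stochastically increasing in $v$ by differentiating $\prob{|N_v|\le t}$ and checking the sign. Your calculus is correct (the bracket $(t+v)+e^{-2t}(t-v)$ is indeed positive: for $t<v$ use $e^{-2t}<1$ to bound it below by $2t>0$), so both arguments work; the paper's two-step decomposition avoids the derivative computation at the cost of an extra intermediate inequality.
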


 Finally, we are ready to prove \prettyref{thm:optimalaccuracy} based on \prettyref{lmm:gaussiandensityevolution} and \prettyref{lmm:hmonotone}.
\begin{proof}[Proof of \prettyref{thm:optimalaccuracy}]
 In view of \prettyref{lmm:gaussiandensityevolution},
 \begin{align*}
  \lim_{n \to \infty} \prob{ \Gamma_u^t \ge -\varphi | \tau_u=- }  &=   Q \left(  \frac{v_t - \varphi } { \sqrt{v_t} }\right), \\
  \lim_{n \to \infty} \prob{ \Gamma_u^t  \le -\varphi | \tau_u =+ } & =  Q \left(  \frac{v_t + \varphi } { \sqrt{v_t} }\right).
 \end{align*}
 Hence, it follows from \prettyref{lmm:optBPcondition} that
 \begin{align*}
 \lim_{n \to \infty} p_{G}( \hat{\sigma}_{\rm BP}^t ) =\lim_{n \to \infty} q^\ast_{T^t} = \expect{ Q \left(  \frac{v_t + U } { \sqrt{v_t} }\right) },
 \end{align*}
 where $U= -\varphi$ with probability $1-\rho$ and $U=\varphi$ with probability $\rho$.

 We prove that $v_{t+1} \ge v_{t}$ for $t \ge 0$ by induction.
 Recall that
 \begin{align*}
 v_0 = 0 \le (\rho \mu - \bar \rho \nu )^2/4   = \theta+ \lambda h(v_0) = v_1.
\end{align*}
 Suppose $v_{t+1} \ge v_{t}$ holds; we shall show the claim also holds for $t+1$. In particular, since $h$ is continuous on $[0, \infty)$ and
 differential on $(0, \infty)$, it follows from the mean value theorem that
 \begin{align*}
 v_{t+2} - v_{t+1} = \lambda \left( h (v_{t+1} ) - h(v_{t} ) \right) =  \lambda h' (x) (v_{t+1}-v_{t}),
 \end{align*}
 for some $x \in (v_{t}, v_{t+1} ) $.  \prettyref{lmm:hmonotone} implies that $h'(x) \ge 0$ for $x \in (0, \infty)$,
 it follows that $v_{t+2} \ge v_{t+1}$.  Hence, $v_t$ is non-decreasing in $t$. Next we argue that $v_t \le \underline{v}$
 for all $ t \ge 0$ by induction, where $\underline{v}$ is the smallest fixed point of $v=\theta+ \lambda h(v)$. For the base case, $v_0=0 \le \underline{v}$.
 If $v_t \le \underline{v}$, then by the monotonicity of $h$, $v_{t+1}= \theta+\lambda  h (v_{t} )  \le \theta+ \lambda h ( \underline{v} ) =\underline{v}.$
 Hence, $v_t \le \underline{v}$ and thus $\lim_{t \to \infty} v_t \le \underline{v}$. By the continuity of $h$, $\lim_{t \to \infty} v_t$ is also a fixed point of $v=\theta+ \lambda h(v)$, and consequently $\lim_{t \to \infty} v_t= \underline{v}.$
 Therefore,
\begin{align*}
\lim_{t\to \infty}  \lim_{n \to \infty} p_{G}( \hat{\sigma}_{\rm BP}^t )  =  \lim_{t \to \infty}  \lim_{n \to \infty}  q_{T^t}^\ast
= \expect{ Q \left(  \frac{\underline{v} + U }{ \sqrt{ \underline{v}} }\right) }.
\end{align*}
Next, we prove the claim for $p^\ast_{G}$. In view of \prettyref{lmm:gaussiandensityevolution},
 \begin{align*}
  \lim_{n \to \infty} \prob{ \Lambda_u^t \ge -\varphi | \tau_u=-}   &= Q \left(  \frac{w_t - \varphi } { \sqrt{w_t} }\right),\\
  \lim_{n \to \infty} \prob{ \Lambda_u^t  \le -\varphi | \tau_u =+ } & =  Q \left(  \frac{w_t + \varphi } { \sqrt{w_t} }\right).
 \end{align*}
  Hence, it follows from \prettyref{lmm:accuracyupperbound} that
 \begin{align*}
 \liminf_{n \to \infty} p_{G}^\ast \ge \lim_{n \to \infty} p^\ast_{T^t} = \expect{ Q \left(  \frac{w_t + U } { \sqrt{w_t} }\right) }.
 \end{align*}
 Recall that $w_1=\theta+\lambda \ge w_t$.  By the same argument of proving $v_t$ is non-decreasing, one can show that $w_t$ is non-increasing in $t$.
 Also, by the same argument of proving $v_t$ is upper bounded by $\underline{v}$, one can show that $w_t$ is lower bounded by $\overline{v}$,
 where $\overline{v}$ is the largest fixed point of $v=\theta+ \lambda h(v)$. Thus, $\lim_{t \to \infty} w_t= \overline{v}$
 and $\overline{v} \le w_1=\theta+\lambda= (\rho \mu^2 + \bar \rho \nu^2)/4.$
Therefore,
\begin{align*}
  \liminf_{n \to \infty} p_{G}^\ast  = \lim_{t\to \infty}  \liminf_{n \to \infty} p_{G}^\ast  \ge \lim_{t \to \infty}  \lim_{n \to \infty}  p_{T^t}^\ast
= \expect{ Q \left(  \frac{\overline{v} + U }{ \sqrt{ \overline{v}} }\right) }.
\end{align*}
If $\varphi=0$ and $\mu \neq \nu$,
then $v_1>0$ and \prettyref{lmm:hmonotone} shows that $h'(v) \ge h'(w)$ for all $0<v<w<\infty$.
Since $v_1=\theta+\lambda h(0)>0$ and $\underline{v}=\theta+ \lambda h(\underline{v})$,
it must hold that $\lambda h'(\underline{v})<1$. Thus $\lambda h'(v) <1 $ for all $v \ge  \underline{v}$
and consequently $\theta + \lambda h(v) < v$ for all $v> \underline{v}.$
Hence, $\overline{v}=\underline{v} = v^\ast$, where $v^\ast$ is the unique fixed point of $v= \frac{(\mu-\nu)^2}{16} + \frac{ (\mu+\nu)^2 }{16} \expect{ \tanh(v+ \sqrt{v} Z)}.$
 Therefore,
$$\liminf_{n \to \infty} p_G^\ast \ge   \lim_{t \to \infty}  \lim_{n \to \infty}  p_{T^t}^\ast = \lim_{t \to \infty}  \lim_{n \to \infty}  q_{T^t}^\ast = \lim_{t\to \infty}  \lim_{n \to \infty} p_{G}( \hat{\sigma}_{\rm BP}^t ) = Q(\sqrt{v^\ast}) .$$
Since $p_G^\ast$ is the minimum expected misclassified fraction, it also holds that
$\limsup_{n \to \infty} p_G^\ast \le \lim_{n \to \infty} p_{G}( \hat{\sigma}_{\rm BP}^t )$ for all $t \ge 1$
and consequently
$$\limsup_{n \to \infty} p_G^\ast \le \lim_{t\to \infty}  \lim_{n \to \infty} p_{G}( \hat{\sigma}_{\rm BP}^t ) .$$
Combing the last two displayed equations gives that
$$
\lim_{n \to \infty} p_G^\ast  =  \lim_{t\to \infty}  \lim_{n \to \infty} p_{G}( \hat{\sigma}_{\rm BP}^t ) = Q(\sqrt{v^\ast}) .
$$

 \end{proof}

 \subsection{Degree-Uncorrelated Case}
As remarked in \prettyref{sec:mainresults}, in the case $\rho \mu = \bar \rho \nu$, the vertex degrees
are statistically uncorrelated with the cluster structure, and
no local algorithms is capable of non-trivial detection.
However, it is still possible that local algorithms combined with some efficient global algorithms
achieve  the minimum expected misclassified fraction. In this subsection, we show that it is indeed the case,
if  $\rho=1/2$, $\mu=\nu$ with $|\mu|>2$, and $b=o(\log n)$.
The algorithm as described in Algorithm \prettyref{alg:BPplusCorrelated}  is introduced in \cite{MNS:2013a} and we give the
full description for completeness.
 \begin{algorithm}[htb]
\caption{Local Belief propagation Plus Correlated Recovery}\label{alg:BPplusCorrelated}
\begin{algorithmic}[1]
\STATE Input: $n \in \naturals,$ $a=c, b>0$, $\rho=1/2$, adjacency matrix $A \in \{0,1\}^{n\times n}$, $t \in \naturals$.
\STATE Take $U \subset V$ to be a random subset of size $\lfloor \sqrt{n} \rfloor $.  Let $u_* \in U$ be a random vertex in $U$
with at least $\frac{\log n}{2\log (\log n/b) }$neighbors in $V\backslash U.$
\STATE For $u \in V\backslash U$ do
\begin{enumerate}
\item Run a polynomial-time estimator capable of correlated recovery on the subgraph induced by vertices not in $G_u^{t-1}$ and
$U$, and let $W_u^+$ and $W_u^-$ denote the output of the partition.
\item Relabel $W_u^+$ and $W_u^-$ such that if $a>b$, then $u_*$ has more neighbors in $W_u^+$ than $W_u^-$; otherwise,
$u_*$ has more neighbors in $W_u^-$ than $W_u^+$. Let $\alpha_u$ denote the fraction of vertices misclassified by the partition $(W_u^+,W_u^-)$.
\item For all $i \in \partial G_u^t$ and $j \in \partial G_u^{t-1}$, define $R^{0}_{i \to j} =  \frac{1}{2} \log \frac{1-\alpha_u}{\alpha_u}$ if
$i \in W_u^{+}$, and  $R^{0}_{i \to j} =  - \frac{1}{2} \log \frac{1-\alpha_u}{\alpha_u}$ if
$i \in W_u^{-}$.
\item Run $t-1$ iterations of message passing as in \prettyref{eq:mp_commun}
to compute $R^{t -1 }_{i\to u}$ for all $u$'s neighbors $i$ .
\item Compute  $R_{u}^{t}$ as per \prettyref{eq:mp_combine_commun}, and let $\hat{\sigma}^t_{\rm BP}(u)=+$  if $R_u^t \ge -\varphi$;
otherwise let $\hat{\sigma}^t_{\rm BP} (u)=-$.
\end{enumerate}
\STATE For $u \in U$, let $\hat{\sigma}^t_{\rm BP} (u) $ equal to $+$ or $-$ uniformly at random. Output $\hat{\sigma}_{\rm BP}^t.$
\end{algorithmic}
\end{algorithm}

Notice that Algorithm \prettyref{alg:BPplusCorrelated} runs in time polynomial in $n$.
The algorithm consists of two main steps. First, we apply some global
algorithm to get a correlated clustering when $|\mu|>2$, for example, the algorithm studied in \cite{Mossel13}.
Then, we apply  the local BP algorithm to boost the correlated clustering in the first step
to achieve  the minimum expected misclassified fraction. To ensure the first and second step are independent
of each other, for each vertex $u$, we first withhold the $(t-1)$-local neighborhood of $u$, and then apply the
global detection algorithms on the reduced set of vertices. The clustering on the reduced set of vertices is used
as the initialization to the local belief propagation algorithm running on the withheld $(t-1)$-local neighborhood of $u$.
In this way, the outcome of the global detection algorithm based on the reduced set of vertices is independent of the
edges between the withheld $t$-local neighborhood of $u$ and the reduced set of vertices, as well as the
edges within the withheld set.

There is also a subtle issue to overcome. We run the global detection algorithm once
for each vertex, and the global detection algorithm cannot reliably estimate the sign of the true $\sigma$ due to the symmetry between
$+$ and $-$. Therefore,
different runs of the global detection algorithm may
have different estimates of the sign  of $\sigma$.
We need a way to coordinate different runs of the
global detection algorithms to have the same estimate of the sign of $\sigma$. To this end, a small random subset $U$
is reserved and a vertex of high degree $u_\ast$ in $U$ is served as an anchor. In every runs of the global
detection algorithms, we relabel the partition if necessary, to ensure that $u_\ast$ will always have more neighbors with estimated $+$ labels
than neighbors with estimated
$-$ labels if $a>b$, and the other way around if $a<b$.

Finally, we caution the reader that in addition to the model parameters $a, b$,
 after each run of the global detection algorithm, the algorithm requires knowing $\alpha_u$, which is the fraction of vertices misclassified by
the partition $(W_u^+, W_u^-)$. In the main analysis, we assume the exact value of $\alpha_u$ is known for simplicity.
One can check that only an estimator $\hat{\alpha}_u=\alpha_u+o(1)$ with high probability is needed for \prettyref{thm:symmetry} to hold.
In \prettyref{app:estimateofalpha}, we give an efficient and data-driven procedure to construct such a consistent  estimate of  $\alpha_u$.

Next, in the limit $n \to \infty$, we give a lower bound on the minimum expected misclassified fraction,
and an upper bound attainable by $\hat{\sigma}^t_{\rm BP}$.
Then we show that the lower and upper bound match with each other
in the double limit, where first $n \to \infty$ and then $t \to \infty$.

Recall that  the fraction of  vertices  misclassified  by $\hat{\sigma}$ is defined up to a global flip of signs of $\hat{\sigma}$ as in \prettyref{eq:defoverlap}.
The following lemma shows that the minimum expected misclassified fraction is still lower bounded by $p^\ast_{T^t}$. Its proof is very similar to the
proof of \prettyref{lmm:accuracyupperbound}. The key new challenge  is that $ \expect{ \calO(\sigma, \hat{\sigma}) }$ does not reduce to the error probability of estimating
$\sigma_u$ for a given vertex $u$ directly.
\begin{lemma}\label{lmm:treelowerbound_sym}
For $t =t (n)$ such that $b^t=n^{o(1)}$,
\begin{align*}
 \limsup_{n \to \infty} \left( \inf_{\hat{ \sigma} } \expect{ \calO(\sigma, \hat{\sigma}) }  - p^\ast_{T^t} \right) \ge 0,
\end{align*}
where $p^\ast_{T^t}$ is defined in \prettyref{eq:optimalestimationaccuracytree} under the tree model with $\rho=1/2$ and $a=c$ defined in
\prettyref{def:treemodel}.
\end{lemma}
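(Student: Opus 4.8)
The plan is to follow the proof of \prettyref{lmm:accuracyupperbound} at the level of its genie argument, adding one new ingredient to neutralize the global sign flip built into $\calO(\sigma,\hat\sigma)$. Write $\hat\sigma'=\epsilon\hat\sigma$ for the better aligned copy of $\hat\sigma$, where $\epsilon=\sgn\Iprod{\hat\sigma}{\sigma}\in\{+1,-1\}$ with an arbitrary tie-breaking rule; then $\sum_u\indc{\hat\sigma'_u\neq\sigma_u}=\min\{\sum_u\indc{\hat\sigma_u\neq\sigma_u},\ \sum_u\indc{\hat\sigma_u=\sigma_u}\}$, so that $\calO(\sigma,\hat\sigma)=\frac1n\sum_u\indc{\hat\sigma'_u\neq\sigma_u}=\frac12-\frac1{2n}\bigl|\Iprod{\hat\sigma}{\sigma}\bigr|$ and $\expect{\calO(\sigma,\hat\sigma)}=\frac1n\sum_u\prob{\hat\sigma'_u\neq\sigma_u}$. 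Hence it suffices to prove that for every fixed vertex $u$ and every estimator $\hat\sigma=\hat\sigma(G)$,
\begin{align*}
\prob{\hat\sigma'_u\neq\sigma_u}\ \ge\ p^\ast_{T^t}-o(1).
\end{align*}

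To this end I would run the genie argument of \prettyref{lmm:accuracyupperbound} for the single vertex $u$. Let $\sigma_{\mathrm{out}}$ denote the labels $\{\sigma_i:\,d(i,u)\ge t\}$; since $b^t=n^{o(1)}$, with high probability every radius-$t$ neighbourhood contains at most $K_n=n^{o(1)}$ vertices, so $\sigma_{\mathrm{out}}$ hides only $O(K_n)$ labels. On the event $\calB^{c}:=\{\,|\Iprod{\hat\sigma}{\sigma}|>2K_n\,\}$, intersected with the high-probability event that all radius-$t$ balls are small, the part of $\Iprod{\hat\sigma}{\sigma}$ coming from vertices at distance $<t$ from $u$ has absolute value at most $K_n$, hence $\epsilon=\sgn\bigl(\sum_{i:\,d(i,u)\ge t}\hat\sigma_i\sigma_i\bigr)$; consequently $\hat\sigma'_u=\epsilon\hat\sigma_u$ agrees on $\calB^c$ with a measurable function $\tilde\sigma_u=\tilde\sigma_u(G,\sigma_{\mathrm{out}})$. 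Therefore
\begin{align*}
\prob{\hat\sigma'_u\neq\sigma_u}\ \ge\ \prob{\tilde\sigma_u\neq\sigma_u}-\prob{\calB}-o(1)\ \ge\ \inf_{f}\prob{f(G,\sigma_{\mathrm{out}})\neq\sigma_u}-\prob{\calB}-o(1).
\end{align*}
The Bayes error $\inf_f\prob{f(G,\sigma_{\mathrm{out}})\neq\sigma_u}$ is then treated exactly as in \prettyref{lmm:accuracyupperbound}: conditioning on the labels $\sigma_{\partial G_u^t}$ makes $\sigma_u$ asymptotically independent of the labels at distance larger than $t$ from $u$, so this Bayes error agrees up to $o(1)$ with the Bayes error given $(G_u^t,\sigma_{\partial G_u^t})$, which in turn converges to $p^\ast_{T^t}$ by the coupling of \prettyref{lmm:couplingtree}. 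We are thus left with the single task of showing that
\begin{align*}
\prob{\calB}=\prob{\,\bigl|\Iprod{\hat\sigma}{\sigma}\bigr|\le n^{o(1)}\,}\ \longrightarrow\ 0\qquad\text{for every estimator }\hat\sigma(G).
\end{align*}

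This anticoncentration estimate is where the argument genuinely goes beyond \prettyref{lmm:accuracyupperbound} (there $\expect{\calO}$ reduces to a single-vertex error and no alignment has to be controlled), and I expect it to be the main obstacle. The plan is as follows: since the average degree is $n^{o(1)}$, fix an independent set $S\subseteq V$ with $|S|=n^{1-o(1)}$ and condition on $G$ together with $\{\sigma_i\}_{i\notin S}$; then $\Iprod{\hat\sigma}{\sigma}=\mathrm{const}+\sum_{i\in S}\hat\sigma_i\sigma_i$, and, because $S$ spans no edge, the conditional law of $\{\sigma_i\}_{i\in S}$ is a product measure perturbed only by the non-edge terms of the block-model likelihood, which in the present regime amount to a Curie--Weiss interaction of coupling strength $O(\sqrt{b}/n)$ and hence alter the measure only by a factor $1+o(1)$ on the bulk of its support. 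Moreover the conditional variance of a single label, $\var(\sigma_i\mid G,\sigma_{-i})=1-\tanh^2(\Lambda_i)$, is governed by one step of the Gaussian density evolution started from exactly revealed neighbours, so by \prettyref{lmm:gaussiandensityevolution} one has $\expect{\var(\sigma_i\mid G,\sigma_{-i})}\to 1-\expect{\tanh^2(w_1+\sqrt{w_1}\,Z)}>0$ with $w_1=\mu^2/4<\infty$; hence a constant fraction of $i\in S$ have conditional variance bounded away from zero, so $\sum_{i\in S}\hat\sigma_i\sigma_i$ is a sum of independent bounded summands whose total variance is at least $c\,|S|=n^{1-o(1)}$. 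A Berry--Esseen bound then gives $\prob{\,|\Iprod{\hat\sigma}{\sigma}|\le n^{o(1)}\mid G,\{\sigma_i\}_{i\notin S}\,}=O\bigl(n^{o(1)}/\sqrt{|S|}\bigr)+o(1)=o(1)$, and averaging over the conditioning yields $\prob{\calB}\to 0$. The delicate point that remains is making rigorous the control of the residual dependence inside $S$ — equivalently, that the posterior $\pprob{\sigma\mid G}$ behaves locally like a tree Gibbs measure — which is exactly where the local tree-likeness of $G$ from \prettyref{lmm:couplingtree} enters.
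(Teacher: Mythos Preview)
Your strategy differs substantially from the paper's, and the difference is precisely at the point you yourself flag as the ``main obstacle.'' The paper never attempts to control the alignment sign $\epsilon$ or to prove an anticoncentration estimate for $\Iprod{\hat\sigma}{\sigma}$. Instead it writes
\[
\calO(\sigma,\hat\sigma)=\tfrac12-\Bigl|\tfrac1n\sum_i\bigl(\indc{\hat\sigma_i\neq\sigma_i}-\tfrac12\bigr)\Bigr|,
\]
applies Cauchy--Schwarz to replace the absolute value by a square root of a second moment, and expands the square into pairwise terms $\expect{(\indc{\hat\sigma_i\neq\sigma_i}-\tfrac12)(\indc{\hat\sigma_j\neq\sigma_j}-\tfrac12)}$. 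The heart of the proof is then a \emph{two-vertex} version of the genie argument (\prettyref{lmm:correlationbound}): revealing $\sigma_{\partial G_i^t}$ and $\sigma_{\partial G_j^t}$ simultaneously makes $\sigma_i$ and $\sigma_j$ asymptotically conditionally independent, which caps each cross term at $(1/2-p^\ast_{T^t})^2+o(1)$. Summing and taking the square root gives $\expect{\calO}\ge p^\ast_{T^t}-o(1)$ directly, with no need to know anything about the distribution of $\Iprod{\hat\sigma}{\sigma}$.

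Your route, by contrast, pushes the global sign into a single event $\calB=\{|\Iprod{\hat\sigma}{\sigma}|\le 2K_n\}$ and then has to show $\prob{\calB}\to 0$ uniformly over all estimators $\hat\sigma(G)$. The sketch you give --- condition on $G$ and on the labels outside a large independent set $S$, argue that the residual posterior on $\sigma_S$ is nearly product, and invoke Berry--Esseen --- is plausible but genuinely nontrivial: the non-edge interaction within $S$ is a Curie--Weiss term $-\frac{a-b}{4n}\bigl(\sum_{i\in S}\sigma_i\bigr)^2$, and while its effective coupling $\frac{(a-b)|S|}{n}=O(\sqrt{b}\,|S|/n)$ is subcritical when $|S|=n^{1-o(1)}$ and $b=n^{o(1)}$, turning ``subcritical Curie--Weiss perturbation of a product measure'' into a clean total-variation or anticoncentration statement takes real work that you have not supplied. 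The paper's second-moment approach sidesteps this entirely; if you want to complete your argument, the cleanest fix is probably to abandon the anticoncentration step and instead observe that on $\calB$ one already has $\calO\ge\tfrac12-o(1)\ge p^\ast_{T^t}$, and then carry out the genie bound conditionally on $\calB^c$ --- but even that requires care because $\calB$ is not measurable with respect to $(G,\sigma_{\mathrm{out}})$, so the two-vertex decorrelation route of the paper remains the most economical.
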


In the following, we relate the expected fraction of vertices misclassified by $\hat{\sigma}_{\rm BP}^t$ as defined in  Algorithm \prettyref{alg:BPplusCorrelated}
to an estimation problem on the tree model. In particular, consider the tree model with $\rho=1/2$ and $a=c$ as defined in \prettyref{def:treemodel}.
Fix an $\alpha \in [0,1/2]$. Let $\tilde{\tau}_i =\tau_i $ with probability $1-\alpha$ and $\tilde{\tau}_i = - \tau_i$ with probability for $\alpha$, independently for all $i \in T_u$.
Then $\tilde{\tau}_{\partial T_u^t}$ is a $\alpha$-noisy version of $\tau_{\partial T_u^t}$.
Let  $\tilde{q}_{T^t, \alpha}$ denote the minimum error probability of inferring $\tau_u$
based on $T_u^t$ and $\tilde{\tau}_{\partial T_u^t}$. The optimal estimator achieving $\tilde{q}_{T^t, \alpha}$ is the MAP estimator given by
\begin{align*}
\hat{\tau}_{\rm MAP} = 2 \times \indc{ \tilde{\Gamma}_u^t \ge - \varphi} -1,
\end{align*}
where
\begin{align*}
\tilde{\Gamma}^t_i  \triangleq  \frac{1}{2} \log \frac{\prob{ T^t_i,  \tilde{\tau}_{\partial T_i^t} | \tau_u =+ 1 } }{ \prob{ T^t_i, \tilde{\tau}_{\partial T_i^t}  | \tau_u =-1 } }
\end{align*}
for all $i$ in $T_u$. The minimum error probability   $\tilde{q}_{T^t, \alpha}$
is given by
\begin{align*}
\tilde{q}_{T^t,\alpha} &= \frac{1}{2} \prob{  \tilde{\Gamma}_u^t < -\varphi  | \tau_u=+} + \frac{1}{2} \prob{\tilde{\Gamma}_u^t \ge -\varphi  | \tau_u=-} \\
&= \frac{1}{2} - \frac{1}{2} \expect{  \big|\prob{ \tau_u=+ | T_u^t, \tilde{\tau}_{\partial T_u^t}  | } - \prob{ \tau_u=- | T_u^t, \tilde{\tau}_{\partial T_u^t}  } \big| },
\end{align*}

It follows from the definition that $\tilde{q}_{T^t, \alpha}$ is non-decreasing in $\alpha$. Also, $\tilde{q}_{T^t, \alpha} = p^\ast_{T^t}$ if $\alpha=0$ and $\tilde{q}_{T^t, \alpha} = q^\ast_{T^t}$ if $\alpha=1/2$.
The following lemma shows that the fraction of vertices misclassified by $\hat{\sigma}_{\rm BP}^t$ as defined in  Algorithm
\prettyref{alg:BPplusCorrelated} is asymptotically no larger than $\tilde{q}_{T^t, \alpha}$ for some $\alpha \in [0,1/2)$.
 \begin{lemma} \label{lmm:treeupperbound_sym}
There exists an $\alpha \in [0, 1/2)$ such that for $t =t (n)$ with  $b^t=n^{o(1)}$,
\begin{align*}
 \limsup_{n \to \infty} \left( \expect{ \calO(\sigma, \hat{\sigma}_{\rm BP}^t ) }  -\tilde{q}_{T^t, \alpha} \right) \le 0.
\end{align*}
\end{lemma}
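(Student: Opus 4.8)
The proof follows the robust-reconstruction strategy of \cite{MNS:2013a}, exploiting the conditional independence that Algorithm~\prettyref{alg:BPplusCorrelated} manufactures by withholding the $(t-1)$-local neighborhood of each vertex before invoking the global detector. Fix a vertex $u \in V \setminus U$, abbreviate $G^t = G_u^t$, and let $H_u$ denote the subgraph induced on $V \setminus (G_u^{t-1} \cup U)$. Because $|\mu| > 2$, a polynomial-time correlated-recovery algorithm (e.g.\ \cite{Mossel13,Massoulie:2013}) run on $H_u$ returns a partition whose agreement with the planted partition, in the more favourable of the two orientations, is at least some constant $\delta = \delta(\mu) > 0$ with high probability; indeed $H_u$ is itself a stochastic block model on $n - |G_u^{t-1}| - |U| = n(1+o(1))$ vertices with parameter $\mu(1+o(1)) > 2$, so deleting the $n^{o(1)} + \lfloor\sqrt{n}\rfloor$ withheld vertices is harmless. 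Hence $\alpha_u \le \tfrac12 - \tfrac{\delta}{2} =: \alpha \in [0,1/2)$ with high probability, and this is the $\alpha$ we shall use.

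The heart of the argument is to pin down the law of the $\pm$-valued boundary labels that the algorithm feeds into the recursion. By \prettyref{lmm:couplingtree} we couple $(G,\sigma)$ and $(T,\tau)$ so that $(G^t,\sigma_{G^t}) = (T_u^t,\tau_{T_u^t})$ with probability $1-o(1)$; on this event, and with probability $1-o(1)$ also $\partial G^t \cap (G_u^{t-1}\cup U) = \emptyset$, so the detector does label every vertex of $\partial G^t$. Revealing the ball $G^t$ by breadth-first search from $u$ exposes only edges incident to $G_u^{t-1}$, together with the information that $G^t$ spans no extra edges, whereas $H_u$ is a function of the remaining edges. Consequently, conditionally on $\sigma$ and on this revealed ball, $H_u$ is a stochastic block model on its own vertex set, conditioned only on the probability-$(1-o(1))$ event that $\partial G^t$ spans no edge, so the detector's output, and in particular its misclassified set $S_u$, is independent up to $o(1)$ of the internal structure of $G^t$, hence of $\partial G^t$ viewed merely as a vertex set. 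Since $|\partial G^t| = n^{o(1)}$ while $|S_u|$ is of order $\alpha_u n$, the restriction of the misclassification pattern to $\partial G^t$ is a hypergeometric sample within $o(1)$ total variation of an i.i.d.\ $\Bern(\alpha_u)$ pattern independent of $(\tau_u, T_u^t)$. In other words, conditionally on $\alpha_u$, the noisy boundary $\tilde\tau_{\partial G^t}$ handed to the recursion is within $o(1)$ total variation of the $\alpha_u$-noisy boundary $\tilde\tau_{\partial T_u^t}$ of the tree model.

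Now the analysis collapses to the tree. Running the message passing of \prettyref{eq:mp_commun}--\prettyref{eq:mp_combine_commun} on $G^t$ with the boundary initialization $R^0_{i\to j} = \pm\tfrac12\log\frac{1-\alpha_u}{\alpha_u}$ is precisely the recursion of \prettyref{lmm:recursion} with the exact-information boundary value $\pm\infty$ replaced by the log-likelihood ratio of a single $\Bern(\alpha_u)$-noisy observation of $\tau_i$; hence $R_u^t$ equals the exact log-likelihood ratio $\tilde\Gamma_u^t$ for inferring $\tau_u$ from $(T_u^t,\tilde\tau_{\partial T_u^t})$, and thresholding at $-\varphi = 0$ produces the MAP estimator, whose error is exactly $\tilde{q}_{T^t,\alpha_u}$. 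Combining with the total-variation bound above and the monotonicity of $\tilde{q}_{T^t,\alpha}$ in $\alpha$, the conditional error of $\hat{\sigma}_{\rm BP}^t(u)$ is at most $\tilde{q}_{T^t,\alpha_u} + o(1) \le \tilde{q}_{T^t,\alpha} + o(1)$ on the $(1-o(1))$-probability event $\{\alpha_u \le \alpha\}$ (elsewhere we bound it by $1$). It remains to pass from per-vertex errors to $\calO(\sigma,\hat{\sigma}_{\rm BP}^t)$, which is defined up to a global sign flip in \prettyref{eq:defoverlap}; this is the role of the anchor $u_\ast$. It has $d_{u_\ast} \ge \frac{\log n}{2\log(\log n / b)}$ neighbors in $V\setminus U$, and the hypothesis $b = o(\log n)$ forces $d_{u_\ast}/b \to \infty$, so the expected gap, of order $d_{u_\ast}\mu\delta/\sqrt{b}$, between the numbers of $W_u^+$- and $W_u^-$-neighbors of $u_\ast$ dominates its $O(\sqrt{d_{u_\ast}})$ standard deviation; thus with probability $1-o(1)$ the relabeling in Step~2 of Algorithm~\prettyref{alg:BPplusCorrelated} aligns $(W_u^+,W_u^-)$ with one fixed orientation $s \in \{+,-\}$ (determined by $\tau_{u_\ast}$ and the sign of $a-b$) common to all $u$. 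Hence the expected fraction of vertices on which $\hat{\sigma}_{\rm BP}^t$ targets the wrong orientation is $o(1)$, the $\lfloor\sqrt{n}\rfloor$ vertices of $U$ contribute a further $o(1)$, and taking $s$ as the global flip yields $\limsup_{n\to\infty}\bigl(\expect{\calO(\sigma,\hat{\sigma}_{\rm BP}^t)} - \tilde{q}_{T^t,\alpha}\bigr) \le 0$.

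I expect the main obstacle to be the second step: carefully ordering the exposure of vertices and edges so that the detector's labeling of $\partial G^t$ is genuinely conditionally independent of the tree $T_u^t$, and then quantifying that the induced boundary noise is within $o(1)$ total variation of the i.i.d.\ $\Bern(\alpha_u)$ model. The subtleties are that $G_u^{t-1}$ is itself random (so one conditions on a random set), that everything must be run on the tree-coupling event, that the ``no edge inside $\partial G^t$'' conditioning must be absorbed into an $o(1)$ error, and that the exchangeability and hypergeometric-to-binomial estimates must be uniform in the realized value of $\alpha_u$. The anchor-coordination step is technically lighter, but it is the only place where the hypothesis $b = o(\log n)$ is used, through the lower bound on $d_{u_\ast}$.
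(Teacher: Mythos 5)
Your proof takes the same route as the paper: couple to the tree model with $\alpha_u$-noisy boundary labels, identify $R_u^t$ with the tree log-likelihood ratio $\tilde\Gamma_u^t$, invoke monotonicity of $\tilde q_{T^t,\alpha}$ in $\alpha$, and bound $\expect{\calO}$ by the per-vertex error rate. The only difference is expository: the paper cites \cite{MNS:2013a} (Lemma 5.7 and Section 5.2) for the two facts you re-derive --- that after anchor-relabeling $\alpha_u\le\alpha<1/2$ with consistent orientation, and that the boundary labels are distributed as i.i.d.\ $\Bern(\alpha_u)$ noise --- and then uses the one-line observation $\calO(\hat\sigma,\sigma)\le\frac1n\sum_i\indc{\sigma_i\neq\hat\sigma_i}$ after a symmetry WLOG on $\sigma_{u_*}$ in place of your explicit anchor-concentration argument.
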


The following lemma gives a characterization of the distribution of $\tilde{\Gamma}_u^t$ based on the density evolution with Gaussian approximations.
\begin{lemma}\label{lmm:gaussian_sym}
Let $\tilde{Z}_{+}^t$ and $\tilde{Z}_{-1}$ denote a random variable that has the same distribution as $\tilde{\Gamma}_u^t$ conditioning on $\tau_u=+$
and $\tau_u=-$, respectively.
For any $t \ge 1$, as $n \to \infty$,
\begin{align}
\sup_x \bigg|     \prob{  \frac{ \tilde{Z}_{\pm}^{t}  \mp u_{t}  }{  \sqrt{ u_t } } \leq x } - \prob{Z \leq x}    \bigg|  = O(b^{-1/2}),
 \end{align}
where $u_1= \frac{(1-2\alpha)^2\mu^2}{4}$ and $u_{t+1}=\frac{\mu^2}{4} \expect{ \tanh(u_{t} + \sqrt{u_t} Z ) }$.
\end{lemma}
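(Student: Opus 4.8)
The plan is to run the same argument that establishes \prettyref{lmm:gaussiandensityevolution}, since for $t\ge 1$ the recursion obeyed by $\tilde\Gamma^{t}_u$ is identical to the one obeyed by $\Gamma^t_u$ and only the leaf initialization changes. First, specialize the setup: with $\rho=1/2$ and $a=c$ we have $\varphi=0$, $d_+=d_-$, so the drift term in the analogue of \prettyref{eq:recursionGamma} vanishes and $F(x)=\tfrac12\log\tfrac{e^{2x}a+b}{e^{2x}b+a}$ is an \emph{odd} function. The BP recursion for the $\alpha$-noisy problem is $\tilde\Gamma^{t+1}_i=\sum_{j\in\partial i}F(\tilde\Gamma^t_j)$ with $\tilde\Gamma^{0}_i=\pm\tfrac12\log\tfrac{1-\alpha}{\alpha}$ according to $\tilde\tau_i=\pm$, this being the log-likelihood ratio of the noisy observation $\tilde\tau_i$ given $\tau_i$. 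Since $\tilde\Gamma^t_u$ is a genuine (half) log-likelihood ratio for ``$\tau_u=+$ vs $\tau_u=-$'' and the model with $\rho=1/2,a=c$ is invariant under a global sign flip, the law of $\tilde Z^t_+$ is \emph{symmetric}, i.e.\ its density $p$ satisfies $p(x)=e^{2x}p(-x)$, and $\tilde Z^t_-\eqdistr-\tilde Z^t_+$; in particular the symmetric-random-variable argument (cf.\ the proof of \prettyref{lmm:gaussiandensityevolution}) yields $\Expect[\tanh(\tilde Z^t_+)]=\Expect[\tanh^2(\tilde Z^t_+)]$ and $\Expect[\tanh(\tilde Z^t_-)]=-\Expect[\tanh(\tilde Z^t_+)]$.

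Next I would record a uniform expansion of $F$: writing $a=b(1+\mu/\sqrt b)$ and $c=a$, the second derivative of $F$ in the small parameter $\mu/\sqrt b$ is bounded uniformly in $x\in\reals$, so $F(x)=\tfrac{\mu}{2\sqrt b}\tanh(x)+O(1/b)$ uniformly in $x$, while $|F(x)|\le\tfrac12\log(a/b)=O(b^{-1/2})$ uniformly. Conditioned on $\tau_u=+$, the message $\tilde\Gamma^{t+1}_u$ is a sum of two independent compound-Poisson variables, over the $L\sim\Pois(a/2)$ children with $\tau_j=+$ and the $M\sim\Pois(b/2)$ children with $\tau_j=-$, whose summands are bounded by $O(b^{-1/2})$; using that $F$ is odd together with $\tilde Z^t_-\eqdistr-\tilde Z^t_+$, the analogue of \prettyref{lmm:meanvarianceZ} gives
\begin{align*}
\Expect[\tilde Z^{t+1}_+] &= \tfrac{a-b}{2}\,\Expect[F(\tilde Z^t_+)] = \tfrac{\mu^2}{4}\,\Expect[\tanh(\tilde Z^t_+)]+O(b^{-1/2}), \\
\var(\tilde Z^{t+1}_+) &= \tfrac{a+b}{2}\,\Expect[F(\tilde Z^t_+)^2] = \tfrac{\mu^2}{4}\,\Expect[\tanh^2(\tilde Z^t_+)]+O(b^{-1/2}),
\end{align*}
so by the symmetric-density identity the mean and variance of $\tilde Z^{t+1}_+$ coincide up to $O(b^{-1/2})$. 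At level $t=0$ the same bookkeeping uses $F\big(\pm\tfrac12\log\tfrac{1-\alpha}{\alpha}\big)=\pm\tfrac12\log\tfrac{(1-\alpha)a+\alpha b}{(1-\alpha)b+\alpha a}=\pm\tfrac{(1-2\alpha)\mu}{2\sqrt b}+O(1/b)$; after averaging over the noise one obtains $\Expect[\tilde Z^1_+]=\var(\tilde Z^1_+)=\tfrac{(1-2\alpha)^2\mu^2}{4}+O(b^{-1/2})=u_1+O(b^{-1/2})$, which is where the $(1-2\alpha)^2$ factor enters and why the dependence on $\alpha$ drops out for $t\ge 1$.

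Then I would conclude by induction on $t$. Assuming $\sup_x\big|\prob{\tfrac{\tilde Z^t_+-u_t}{\sqrt{u_t}}\le x}-\prob{Z\le x}\big|=O(b^{-1/2})$ with $0<u_t\le\mu^2/4$, the boundedness and Lipschitz property of $\tanh$ give $\Expect[\tanh(\tilde Z^t_+)]=\Expect[\tanh(u_t+\sqrt{u_t}Z)]+O(b^{-1/2})$, so the displayed formulas yield $\Expect[\tilde Z^{t+1}_+]=\var(\tilde Z^{t+1}_+)=u_{t+1}+O(b^{-1/2})$ with $u_{t+1}=\tfrac{\mu^2}{4}\Expect[\tanh(u_t+\sqrt{u_t}Z)]$ and $0<u_{t+1}\le\mu^2/4$; a Berry--Esseen bound for the sum $\tilde\Gamma^{t+1}_u$ of $\Theta(b)$ independent terms of size $O(b^{-1/2})$ and total variance $\Theta(1)$ then upgrades this to the stated $O(b^{-1/2})$ Kolmogorov bound for $(\tilde Z^{t+1}_+-u_{t+1})/\sqrt{u_{t+1}}$, and the $t=1$ base case is the same estimate applied to $\tilde\Gamma^1_u=\sum_{j\in\partial u}F(\tilde\Gamma^0_j)$, a sum of bounded terms that are i.i.d.\ given the labels. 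The claim for $\tilde Z^t_-$ follows from $\tilde Z^t_-\eqdistr-\tilde Z^t_+$. I expect the only genuinely new point to be the base-case computation $u_1=(1-2\alpha)^2\mu^2/4$; the main technical obstacle is the one already present in \prettyref{lmm:gaussiandensityevolution}, namely making the compound-Poisson central limit estimate uniform with the claimed $O(b^{-1/2})$ rate---in particular absorbing the $\Theta(\sqrt b)$ fluctuation of the child counts into the Gaussian rather than conditioning it away, and verifying that $u_t$ stays bounded away from $0$ and $\infty$ so the normalization is non-degenerate at each fixed $t$.
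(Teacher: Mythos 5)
Your proposal is correct and takes essentially the same route as the paper: both observe that for $t\ge 1$ the recursion $\tilde\Gamma^{t+1}_u=\sum_{j\in\partial u}F(\tilde\Gamma^t_j)$ coincides with the recursion in \prettyref{lmm:gaussiandensityevolution} specialized to $\rho=1/2$, $\mu=\nu$, so that the inductive step transfers verbatim, and both reduce the work to checking the $t=1$ base case. Your base-case computation matches the paper's (the paper's $x^*=\tfrac12\log\tfrac{a-\alpha(a-b)}{b+\alpha(a-b)}$ is exactly your $F(\pm\tfrac12\log\tfrac{1-\alpha}{\alpha})$, and both give $\Expect[\tilde Z^1_\pm]=\pm u_1+O(b^{-1/2})$, $\var(\tilde Z^1_\pm)=u_1+O(b^{-1/2})$, followed by the Poisson Berry--Esseen bound); your explicit use of the oddness of $F$ and of the symmetric-density identity $\Expect[\tanh(\tilde Z^t_+)]=\Expect[\tanh^2(\tilde Z^t_+)]$ to re-derive the inductive mean/variance expressions is a somewhat cleaner presentation of what the paper simply imports by reference to \prettyref{lmm:meanvarianceZ} and \prettyref{lmm:gaussiandensityevolution}, but it is not a different argument.
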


We are ready to prove \prettyref{thm:symmetry} by combing \prettyref{lmm:treelowerbound_sym}, \prettyref{lmm:treeupperbound_sym},
and \prettyref{lmm:gaussian_sym}.
 \begin{proof}[Proof of \prettyref{thm:symmetry}]
In view of \prettyref{lmm:gaussian_sym}, for $t \ge 0$,
 \begin{align*}
\lim_{n \to \infty}  \prob{ \tilde{\Gamma}_u^t \ge 0 | \tau_u=-} = \lim_{n \to \infty}  \prob{ \tilde{\Gamma}_u^t \le 0 | \tau_u=+} =Q( \sqrt{u_t} ),
 \end{align*}
It follows from \prettyref{lmm:treeupperbound_sym} that there exists an $\alpha \in [0,1/2)$ such that
 \begin{align*}
   \limsup_{n \to \infty}  \expect{ \calO(\sigma, \hat{\sigma}_{\rm BP}^t ) } \le \lim_{n \to \infty} \tilde{q}_{T^t, \alpha} =  Q(\sqrt{u_t}).
   \end{align*}
 Let $\tilde{h}(v)=\expect{ \tanh(v + \sqrt{v} Z ) }.$
 In view of \prettyref{lmm:hmonotone}, $\tilde{h}$ is non-decreasing and concave in $[0, \infty)$,
 and $\lim_{v \to 0} \tilde{h}'(v) =1$. Notice that $h(0)=0$, and thus $0$ is a fixed point of
 $v=\frac{\mu^2}{4} \tilde{h}(v)$. Moreover, by the mean value theorem,
 for $v>0$, $\tilde{h}(v)=h(0) + \tilde{h}'(\xi) v$ for some $\xi \in (0, v)$. Thus $\frac{\mu^2}{4} \tilde{h}(v) = \frac{\mu^2 }{4} \tilde{h}'(\xi) v$.
 By the assumption that $\mu>2$, and $\lim_{v \to 0} \tilde{h}'(v) =1$, it follows that  there exists a $v_\ast>0$ such that
 $\frac{\mu^2}{4} \tilde{h}(v) > v$ for all $v \in (0, v_\ast)$. Furthermore, $\tilde{h}(v) \le 1$ and hence
$ \frac{\mu^2}{4} \tilde{h}(v) < v$ for all $v$ sufficiently large. Since $\tilde{h}$ is continuous, $v=\frac{\mu^2}{4} \tilde{h}(v)$ must
have nonzero fixed points.  Let $\overline{v}$ denote the smallest nonzero fixed point.
Then $\overline{v}>0$, $ \frac{\mu^2}{4} \tilde{h}(v) > v$ for all $v \in (0, \overline{v})$,
and  $\frac{\mu^2}{4} \tilde{h}' (\overline{v} ) < 1$.
Because $\tilde{h}$ is concave, $ \tilde{h}'(v) \le \tilde{h}'( \overline{v})$ for all $v \ge \overline{v}$.
Thus $ \frac{\mu^2}{4} \tilde{h}(v) > v$ for all $v> \overline{v}$. Therefore, $\overline{v}$ is the unique
nonzero fixed point and also the largest fixed point.
It follows that if $u_1< \overline{v}$, then $\{u_t\}$ is a non-decreasing sequence
upper bounded by $\overline{v}$.   If $u_1>\overline{v}$, then $\{u_t\}$ is a non-increasing sequence
lower bounded by $\overline{v}$. Since $u_1>0$, it follows that $\lim_{t \to \infty} u_t = \overline{v} $.
Hence,
 \begin{align}
\limsup_{n \to \infty}  \inf_{\hat{ \sigma} } \expect{ \calO(\sigma, \hat{\sigma}) }   \le \lim_{t \to \infty} \limsup_{n \to \infty}  \expect{ \calO(\sigma, \hat{\sigma}_{\rm BP}^t ) }  \le \lim_{t \to \infty} \lim_{n \to \infty} \tilde{q}_{T^t, \alpha} = Q( \sqrt{\overline{v} } ).
 \end{align}
 It follows from \prettyref{thm:optimalaccuracy} and \prettyref{lmm:treelowerbound_sym} that
  \begin{align}
\liminf_{n \to \infty}  \inf_{\hat{ \sigma} } \expect{ \calO(\sigma, \hat{\sigma}) }  \ge \lim_{t \to \infty} \lim_{n \to \infty} p^\ast_{T^t} = Q( \sqrt{ \overline{v} } ).
  \end{align}
 The theorem follows by combining the last two displayed equations.
 \end{proof}

\section*{Acknowledgement}
Research supported by NSF grant CCF 1320105, DOD ONR grant N00014-14-1-0823, and grant 328025 from the Simons Foundation. 
J. Xu would like to thank Bruce Hajek and Yihong Wu for numerous discussions on belief propagation algorithms and density evolution analysis.
This work was done in part while J. Xu was visiting the Simons Institute for the Theory of Computing.

 \bibliographystyle{abbrv}
\bibliography{../../graphical_combined,../../other_refs,../../other_refs2}

\appendix
\section{Additional Proofs} \label{app:additional}
\subsection{Proof of \prettyref{lmm:recursion} }
By definition, $\Lambda_i^0=+ \infty$ if $\tau_i=+$ and $\Lambda_i^0 = -\infty$ if $\tau_i=-$, and $\Gamma^0_{i}  = 0$ for all $i$.
We prove the claim for $\Gamma_{i }^t$ with $ t \ge 1$; the claim for $\Lambda_i^t$ with $t \ge 1$ follows similarly.

 A key point is
to use the independent splitting property of the Poisson distribution to give an equivalent description of the numbers of children
of each type for any vertex in the tree.   Instead of separately generating the number of children of each type,  we can first
generate the total number of children and then independently and randomly select the type of each child.
For every vertex $ i$ in $T_u$, let $N_i$  denote the total number of its children.
If $\tau_i=+$ then  $N_i \sim \Pois(d_+)$, and for each child $ j \in \partial i,$ independently of everything else,
$\tau_j=+$ with probability $\rho a/d_+$  and $\tau_j=-$ with probability $\bar \rho b/d_+,$ where $d_+= \rho a +\bar \rho b.$
If   $\tau_i=-$ then  $N_i \sim \Pois( d_-)$, and for each child $j\in \partial i,$ independently of everything else,
$\tau_j=+$ with probability $\rho b / d_-$ and $\tau_j=+$ with probability $\bar \rho c/ d_-,$ where $d_-=\rho b+ \bar \rho c.$
With this view, the observation
of the total number of children $N_i$ of vertex $i$ gives some information, and then the conditionally independent
messages from those children give additional information on $\tau_i$.  Specifically,
\begin{align*}
 \Gamma_i^{t+1}  & =  \frac{1}{2} \log \frac{\prob{T_i^{t+1}  | \tau_i=+ } }{ \prob{  T_i^{t+1}  | \tau_i=-}  }  \overset{(a)}{=}  \frac{1}{2} \log \frac{\prob{N_i | \tau_i=+ }  }{ \prob{N_i | \tau_i= -}  } + \frac{1}{2} \sum_{j \in \partial i} \log  \frac{ \prob{ T^t_j | \tau_i= +} } {  \prob{ T_j^t  | \tau_i=- } }   \\
 & \overset{(b)}{=} \frac{- d_+ + d_- }{2} + \frac{1}{2} N_i  \log \frac{ d_+}{d_-} + \frac{1}{2}  \sum_{ j \in \partial i}  \log \frac{ \sum_{x\in \{\pm \} } \prob{\tau_j=x | \tau_i=+} \prob{ T_j^t | \tau_j =x} }{
 \sum_{x \in \{ \pm  \} } \prob{\tau_j=x | \tau_i=- } \prob{ T_j^t | \tau_j =x} }  \\
 & \overset{(c)}{=}  \frac{- d_+ + d_- }{2}  + \frac{1}{2}  \sum_{j \in \partial i} \log \frac{ \rho a  \prob{ T_j^t | \tau_j =+} + \bar \rho b \prob{ T_j^t | \tau_j =-} }{
 \rho b \prob{ T_i^t | \tau_j =+} + \frac{1}{2}  \bar \rho c \prob{ T_i^t | \tau_j =-}   }\\
 & \overset{(d)}{=}  \frac{- d_+ + d_- }{2}   + \frac{1}{2}  \sum_{j \in \partial i} \log \frac{ \eexp^{2\Gamma^{t}_{j } } \rho a  + \bar \rho b }{  \eexp^{2\Gamma^{t}_{j} } \rho b +\bar \rho c  },
\end{align*}
where $(a)$ holds because $N_i$ and $T^t_j$ for $j \in \partial u$ are independent conditional on $\tau_i$;
$(b)$ follows because $N_i \sim \Pois(d_+)$ if $\tau_i=+$ and $N_i \sim \Pois(d_-)$ if $\tau_i=-$, and $T_j^t$ is independent of $\tau_i$ conditional on $\tau_j$;
$(c)$ follows from the definition of $T_u$ as $\tau_j \sim 2*\Bern(\rho a /d_+) -1 $ (resp. $2*\Bern(\rho b / d_- ) -1 $) conditional on $\tau_i=+$ (resp. $-$);
$(d)$ follows from the definition of $\Gamma^{t}_{j}$.

\subsection{Proof of \prettyref{lmm:accuracyupperbound}}
We will show that as $n \to \infty$, $ p_{G}^\ast $  is bounded by $ p_{T^t}^\ast$ from the below for any $t\ge 1.$
Before that, we need a key lemma which shows that conditional on $(G^t,  \sigma_{\partial G^t })$,
$\sigma_u$ is almost independent of the graph structure outside of $G^t$. The proof is similar to that
of \cite[Proposition 4.2]{MONESL:15} which deals with the special case $\rho=1/2$ and $a=c$.
The key challenge here is that when $\rho \neq 1/2$ or $a \neq c$,
 the overall effect of the non-edges depends on $\sigma$ and some extra care has to be taken (see \prettyref{eq:productAC} for details).

\begin{lemma}\label{lmm:asymptoticalIndependence}
For $t =t(n)$ such that $b^{t} =n^{o(1)},$
there exists a sequence of events $\calE_n$ such that $\prob{\calE_n} \to 1$ as $n \to \infty$, and
on event $\calE_n$,
\begin{align}
\prob{ \sigma_ u  =x | G^t,  \sigma_{\partial G^t } } = (1+o(1))  \prob{ \sigma_ u =x | G,  \sigma_{\partial G^t} }, \quad \forall x \in\{\pm\}.
\label{eq:asymInd}
\end{align}
Moreover, on event $\calE_n$,  $(G^t, \sigma_{G^t} ) = (T^t, \tau_{T^t} )$ holds.
\end{lemma}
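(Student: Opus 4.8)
The plan is to adapt the argument of \cite[Proposition~4.2]{MONESL:15} from the symmetric case ($\rho=1/2$, $a=c$) to the present one, organizing the Bayes computation so that the only term carrying information about $\sigma_u$ across the boundary $\partial G^t$ --- the non-edges between the strict interior of $G^t$ and the outside vertices --- is made explicit and then shown to factorize. First I would fix the good event. By \prettyref{lmm:couplingtree} there is an event of probability $\to 1$ on which $(G^t,\sigma_{G^t})=(T^t,\tau_{T^t})$, and on that event the standard tail bound for a depth-$t$ Poisson branching tree gives $|I|,|S|\le b^t(\log n)^{O(1)}=n^{o(1)}$, where $I:=V(G^{t-1})$ is the strict interior (containing $u$) and $S:=\partial G^t$ is the boundary sphere. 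Write $O:=V\setminus V(G^t)$. The structural point is that, by the definition of graph distance, there are no $I$--$O$ edges, so the event $\{G^t=g^t\}$ already forces all $|I|\,|O|$ interior--outside pairs to be non-edges. I would take $\calE_n$ to be this coupling event intersected with a few concentration events for $O$ identified below.

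Next, the Bayes decomposition. Let $L^t(\sigma_{I\cup S})$ be the likelihood of the observed subgraph $G^t$ given the labels of $I\cup S$. Conditioning on $\sigma_S$, marginalizing out $\sigma_{I\setminus u}$ and $\sigma_O$, and using that the edges inside $O$ and between $S$ and $O$ are free while the $I$--$O$ pairs are forced non-edges, I would obtain
\begin{align*}
\prob{\sigma_u=x\mid G^t,\sigma_S}&\propto\prob{\sigma_u=x}\sum_{\sigma_{I\setminus u}}\prob{\sigma_{I\setminus u}}\,L^t\big((x,\sigma_{I\setminus u}),\sigma_S\big)\,E_I(x,\sigma_{I\setminus u}),\\
\prob{\sigma_u=x\mid G,\sigma_S}&\propto\prob{\sigma_u=x}\sum_{\sigma_{I\setminus u}}\prob{\sigma_{I\setminus u}}\,L^t\big((x,\sigma_{I\setminus u}),\sigma_S\big)\,\widetilde E_I(x,\sigma_{I\setminus u}),
\end{align*}
where $E_I(\sigma_I)=\Expect_{\sigma_O}\!\big[\prod_{i\in I,\,j\in O}(1-p_{ij})\big]$ and $\widetilde E_I(\sigma_I)$ is the same expectation with the likelihood $W(\sigma_O)$ of the observed outside edges folded in ($W$ not involving $\sigma_I$); all remaining outside randomness integrates to $1$ and cancels. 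Hence the ratio of posterior odds $\mathrm{LR}_G/\mathrm{LR}_{G^t}$ equals $\langle r\rangle_+/\langle r\rangle_-$, where $r(\sigma_I):=\widetilde E_I(\sigma_I)/E_I(\sigma_I)$ and $\langle\cdot\rangle_\pm$ is averaging over $\sigma_{I\setminus u}$ against its conditional law given $(G^t,\sigma_S,\sigma_u=\pm)$. It therefore suffices to show that $r$ is independent of $\sigma_I$ up to a $(1+o(1))$ factor, uniformly over the relevant labelings.

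This last point is the heart of the matter. The interior--outside non-edge product factors exactly,
\begin{align*}
\prod_{i\in I,\,j\in O}(1-p_{ij})=(1-b/n)^{|I||O|}\Big(\tfrac{1-a/n}{1-b/n}\Big)^{|I_+||O_+|}\Big(\tfrac{1-c/n}{1-b/n}\Big)^{|I_-||O_-|},
\end{align*}
with $I_\pm,O_\pm$ the $\pm$-labeled subsets. Substituting $|O_+|=\rho|O|+Y$ turns the right-hand side into $D_n\,e^{-\gamma|I_+|-\gamma'|I_-|}\,e^{s(\sigma_I)Y}$, where $D_n$ and $\gamma,\gamma'=\Theta(\sqrt b)$ do not involve $\sigma_O$, and $s(\sigma_I)=|I_+|\log\tfrac{1-a/n}{1-b/n}-|I_-|\log\tfrac{1-c/n}{1-b/n}$ satisfies $|s(\sigma_I)|=O(\sqrt b\,|I|/n)=n^{-1+o(1)}$. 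The $\sigma_O$-free prefactor cancels in $r$, leaving $r(\sigma_I)=\Expect[e^{s(\sigma_I)Y}W]/\Expect[e^{s(\sigma_I)Y}]$. Since $|s(\sigma_I)|=n^{-1+o(1)}$ while, under the prior, $|Y|=O(\sqrt n\log n)$ with overwhelming probability, one gets $\Expect[e^{s(\sigma_I)Y}]=1+o(1)$ and $\Expect[e^{s(\sigma_I)Y}W]=\Expect[W]\,e^{s(\sigma_I)\Delta}\,(1+o(1))$, where $\Delta:=\Expect_{\mathrm{post}}[|O_+|]-\rho|O|$ is the bias of the posterior mean of the number of $+$-labels in $O$, the posterior being the conditional law of $\sigma_O$ given the outside edges and $\sigma_S$. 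Consequently $r(\sigma_I)=\Expect[W]\,e^{s(\sigma_I)\Delta}(1+o(1))$, and because $\langle|I_\pm|\rangle_+$ and $\langle|I_\pm|\rangle_-$ differ by at most $|I|=n^{o(1)}$,
\begin{align*}
\frac{\mathrm{LR}_G}{\mathrm{LR}_{G^t}}=\frac{\langle e^{s(\sigma_I)\Delta}\rangle_+}{\langle e^{s(\sigma_I)\Delta}\rangle_-}\,(1+o(1))=1+O\!\Big(\frac{\sqrt b\,|I|}{n}\,|\Delta|\Big)+o(1).
\end{align*}

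So everything reduces to the estimate $|\Delta|=o\!\big(n/(\sqrt b\,|I|)\big)=o(n^{1-o(1)})$, and this I expect to be the main obstacle; it is precisely where the degree-correlated case goes beyond \cite{MONESL:15} and where the ``extra care'' with the non-edge term \prettyref{eq:productAC} enters (when $a=c$ and $\rho=1/2$ the two logarithms in $s(\sigma_I)$ coincide and the bookkeeping is much lighter). I would prove $|\Delta|=O(\sqrt n\,(\log n)^{O(1)})$ with probability $\to 1$ by a concentration argument on $\Expect_{\mathrm{post}}[|O_+|]=\sum_{j\in O}\prob{\sigma_j=+\mid\text{outside edges},\sigma_S}$, regarded as a function of the independent outside edge indicators: its mean is $\rho\,\Expect|O|$, and its variance is $O(n\,(\log n)^{O(1)})$ by the Efron--Stein inequality combined with the near-independence of the posteriors of far-apart vertices in a graph of average degree $n^{o(1)}$. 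Adding the resulting small-probability events to $\calE_n$ then gives \prettyref{eq:asymInd}, while the ``moreover'' clause is just the coupling event already in $\calE_n$.
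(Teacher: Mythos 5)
Your setup is sound and matches the paper's structural picture: the information about $\sigma_u$ that flows across $\partial G^t$ lives entirely in the forced $I$--$O$ non-edges, and the game is to show that term decouples from $\sigma_I$ once one restricts to a high-probability event. The algebra factoring the non-edge product as a $\sigma_O$-free prefactor times $e^{s(\sigma_I)Y}$ with $Y=|O_+|-\rho|O|$ and $|s(\sigma_I)|=O(\sqrt b\,|I|/n)$ is also correct (minor quibble: your $\gamma,\gamma'$ are $\Theta(b)$, not $\Theta(\sqrt b)$, but they cancel anyway). Where you diverge from the paper is the mechanism for controlling the $\sigma_O$-sum: you go through the cumulant generating function of $Y$ under the $W$-tilted law and reduce to bounding the posterior bias $\Delta$, whereas the paper never touches a posterior.

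That reduction has a real gap. The identity $\Expect[e^{sY}W]=\Expect[W]\,e^{s\Delta}(1+o(1))$ is \emph{not} implied by $|\Delta|=o(n^{1-o(1)})$ alone. Writing $\tExpect$ for the $W$-tilted (posterior) measure, what you need is $\tExpect[e^{s(Y-\Delta)}]=1+o(1)$, i.e.\ posterior concentration of $Y$ at scale $o(1/|s|)=o(n^{1-o(1)})$. Since $Y$ ranges over $[-\rho|O|,\bar\rho|O|]$, the factor $e^{s(Y-\Delta)}$ can be as large as $e^{n^{o(1)}}$, and a bound on the posterior mean or even on $\var_{\rm post}(Y)$ (which, incidentally, you can get more cheaply from the law of total variance, $\var(\Delta)\le\var(Y)=O(n)$, no Efron--Stein needed) does not control this exponential tail. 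You acknowledge $\Delta$ as ``the heart of the matter,'' but the bound you propose for it would not close the argument even if proved; the missing ingredient is a tail bound on $Y-\Delta$ under $\tExpect$.

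The paper sidesteps the posterior entirely. Its event $\calE_n$ includes $|s_C-(2\rho-1)|C||\le n^{0.6}$, i.e.\ a bound on the \emph{realized} $Y$ under the joint law, which is a straightforward Bernstein estimate. It then computes the joint probability $\prob{\sigma_u=x,G^t,\sigma_{\partial G^t},\calE_n}$ directly, so the sum over $\sigma_C$ carries the indicator $\indc{|s_C-(2\rho-1)|C|| \le n^{0.6}}$ and never ranges over bad $\sigma_C$. On the good set, the non-edge product $\Phi_{A,C}$ (your $e^{s(\sigma_I)Y}$ together with the prefactor) equals $(1+o(1))\,K(\sigma_A,|C|)$, which depends on $\sigma_C$ only through $|C|$ and therefore pulls out of the $\sigma_C$-sum; Bayes' rule then yields both sides of \prettyref{eq:asymInd2} with the same ratio of $\sigma_A$-sums, up to $1+o(1)$. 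No moment generating function, no posterior estimates, and no ``near-independence of far-apart posteriors'' is invoked. If you want to salvage your route, the cleanest fix is exactly this: replace the attempt to bound $\Delta$ by putting the indicator $\indc{|Y|\le n^{0.6}}$ inside both $E_I$ and $\widetilde E_I$ (equivalently, work with $\prob{\sigma_u=x,G,\sigma_{\partial G^t},\calE_n}$), after which $e^{s(\sigma_I)Y}=1+o(1)$ uniformly on the restricted range and your ratio $\langle r\rangle_+/\langle r\rangle_-$ collapses to $1+o(1)$ with only elementary estimates.
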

\begin{proof}
Recall that $G^t$ is the subgraph of $G$ induced by vertices whose distance from $u$ is at most $t$.
Let $A^t$ denote the set of vertices in $G^{t-1}$, $B^t$ denote the set of vertices in $G^{t}$,
and $C^t$ denote the set of vertices in $G$ but not in $G^t$. Then $A^t \cup \partial G^t=B^t$ and $A^t \cup \partial G^t \cup C^t = V.$
Define $s_{A} =\sum_{i \in A^t } \sigma_i$ and
 $s_{C} = \sum_{i \in C^t} \sigma_i$.
Let
\begin{align*}
\calE_n=\{ (\sigma_C, G^t): | s_C | \le n^{0.6},  |B| \le n^{0.1} , (G^t, \sigma_{G^t} ) = (T^t, \tau_{T^t} )\}.
\end{align*}
By the assumption $b^{t}=n^{o(1)}$, it follows that $(G^t, \sigma_{G^t} ) = (T^t, \tau_{T^t})$
and $|B|= n^{o(1)} $ with probability converging to $1$ (see \cite[Proposition 4.2]{MONESL:15} for a  proof).
Note that $s_C = 2 X- |C|$ for some $X \sim \Binom(|C|, \rho )$. Letting $\alpha_n=n^{0.6}$, in view of the Bernstein inequality,
\begin{align*}
 \prob{ \big|s_C - (2 \rho -1) |C|  \big|> \alpha_n } = \prob{  \big| X - \rho |C| \big| > \alpha_n/2 } \le 2 \eexp^{ - \frac{-\alpha_n^2/4}{ |C|/2 + \alpha_n/3 } } =o(1),
 \end{align*}
 where the last equality holds because $ |C| \le n$ and $\alpha_n / \sqrt{n} \to \infty$.
 In conclusion, we have that $\prob{\calE_n} \to 1$ as $n \to \infty$.

To prove that \prettyref{eq:asymInd} holds, it suffices to show that
on event $\calE_n$,
\begin{align}
\prob{ \sigma_ u  =x | G^t,  \sigma_{\partial G^t } } =  (1+o(1))  \prob{ \sigma_ u =x | G^t, \sigma_{\partial G^t }, \sigma_C }, \quad \forall x \in\{\pm\}. \label{eq:asymInd2}
\end{align}
In particular, on event $\calE_n,$
\begin{align*}
\prob{ \sigma_ u  =x | G,   \sigma_{\partial G^t }  }  & = \sum_{\sigma_C } \prob{ \sigma_ u  =x, \sigma_C | G,   \sigma_{\partial G^t }  } \\
& = \sum_{\sigma_C } \prob{ \sigma_C | G,  \ \sigma_{\partial G^t } } \prob{\sigma_u=x | G,  \sigma_{\partial G^t } , \sigma_C } \\
& = \sum_{\sigma_C} \prob{ \sigma_C | G,   \sigma_{\partial G^t } } \prob{\sigma_u=x | G^t,  \sigma_{\partial G^t } , \sigma_C } \\
& =  (1+o(1))  \prob{ \sigma_ u  =x | G^t,   \sigma_{\partial G^t }},
\end{align*}
where the third equality holds, because conditional on $(G^t, \sigma_{\partial G^t }, \sigma_C)$,
$\sigma_u$ is independent of the graph structure outside of $G^t$;
the last equality follows due to \prettyref{eq:asymInd2}.
Hence, we are left to show the desired  \prettyref{eq:asymInd2} holds.

Recall that $G=(V,E)$.
For any two sets $U_1, U_2 \subset V$,
define
\begin{align*}
\Phi_{U_1, U_2} (G, \sigma) = \prod_{ (u,v) \in U_1 \times U_2 } \phi_{uv} (G, \sigma),
\end{align*}
where $(u,v)$ denotes an unordered pair of vertices and
\begin{align}
\phi_{uv}(G,L,\sigma)=\left \{
\begin{array}{rl}
  a/n & \text{if } \sigma_u=\sigma_v =+, (u,v) \in E \\
  c/n & \text{if } \sigma_u=\sigma_v =-, (u,v) \in E \\
 b/n & \text{if } \sigma_u \neq \sigma_v, (u,v) \in E \\
 1-a/n & \text{if } \sigma_u= \sigma_v=+, (u,v) \notin E \\
 1-c/n & \text{if } \sigma_u= \sigma_v=-, (u,v) \notin E\\
 1-b/n & \text{if } \sigma_u \neq \sigma_v, (u,v) \notin E \nonumber
\end{array} \right.
\end{align}
Then the joint distribution of $\sigma$ and $G$ is given by
\begin{align*}
\prob{ \sigma, G, \tilde{\sigma} } = 2^{-n}  \Phi_{B, B} \; \Phi_{C, C} \; \Phi_{\partial G^t, C} \; \Phi_{A, C}.
\end{align*}
Notice that $A$ and $C$ are disconnected.
We claim that on event $\calE_n$, $\Phi_{ A, C} $ only depend on $\sigma_C$ through the $o(1)$ term. In particular, on event $\calE_n$,
\begin{align}
 \Phi_{A, C} (G, \sigma  ) & = \prod_{ (u,v) \in A \times C } \phi_{uv} (G, \sigma )  \nonumber \\
& = \left( 1- \frac{a}{n} \right)^{( |A| + s_A) (|C| + s_C ) /4 } \left( 1- \frac{c}{n} \right)^{( |A| - s_A) (|C| - s_C ) /4 }
\left( 1- \frac{b}{n} \right)^{(|A| |C| - s_A s_C ) /2} \nonumber\\
& = (1+o(1)) \left( 1- \frac{a}{n} \right)^{ \rho ( |A| +s_A )  |C|  /2 } \left( 1- \frac{c}{n} \right)^{ \bar \rho (|A| - s_A) |C|   /2 }   \left( 1- \frac{b}{n} \right)^{ (|A| |C| - s_A (2\rho -1) |C| ) /2 }\nonumber  \\
& \triangleq (1+o(1)) K(\sigma_A, |C| ), \label{eq:productAC}
\end{align}
where the second equality holds because $ u \in A$ and $v \in C$ implies that $(u,v) \notin E$ and thus
$\phi_{uv}$ is either $1-a/n$, $1-c/n$, or $1-b/n$, depending on $\sigma_u$ and $\sigma_v$;
the third equality holds because $ (|A| + |s_A|) |s_C - (2 \rho -1) |C| | \le 2 \alpha_n |B|  =o(n)$;
the last equality holds for some $K(\sigma_A, |C| )$ which only depends on $\sigma_A$ and $|C|$.
 As a consequence,
\begin{align*}
\prob{ \sigma, G,  \calE_n} = (1+o(1)) 2^{-n} \; K(\sigma_A,, |C| )  \;  \Phi_{B, B} \; \Phi_{C, C} \; \Phi_{\partial G^t, C} \; .
\end{align*}
It follows that
\begin{align*}
\prob{ \sigma_ u =x,  G^t, \sigma_{\partial G^t }, \sigma_C, \calE_n  } = (1+o(1)) 2^{-n} \;   \sum_{ \sigma_{A\backslash\{u\} } } K(\sigma_A, |C| )  \;  \Phi_{B, B}
\end{align*}
and thus
\begin{align*}
\prob{ \sigma_ u =x,  G^t, \sigma_{\partial G^t }, \calE_n  } & = \sum_{\sigma_C} \prob{ \sigma_ u =x,  G^t, \sigma_{\partial G^t }, \sigma_C, \calE_n  }\\
& =(1+o(1)) 2^{-n} \;   \sum_{ \sigma_{A\backslash\{u\} } } K(\sigma_A, |C| )  \;  \Phi_{B, B}
\sum_{\sigma_C} \indc{|s_C| \le n^{0.6} }.
\end{align*}
By Bayes' rule,
\begin{align*}
\prob{ \sigma_ u  =x | G^t,  \sigma_{\partial G^t }, \calE_n } & = \frac{\prob{ \sigma_ u =x,  G^t, \sigma_{\partial G^t }, \calE_n  } }{\prob{  G^t, \sigma_{\partial G^t }, \calE_n  }}\\
&= (1+o(1)) \frac{ \sum_{ \sigma_{A\backslash\{u\} } } K(\sigma_A, |C| )  \;  \Phi_{B, B} }{ \sum_{\sigma_{A}}K(\sigma_A, |C| )   \;  \Phi_{B, B} } \\
& = (1+o(1)) \frac{\prob{ \sigma_ u =x,  G^t, \sigma_{\partial G^t }, \sigma_C, \calE_n  }}{\prob{ G^t, \sigma_{\partial G^t }, \sigma_C, \calE_n  }}\\
& = (1+o(1)) \prob{ \sigma_ u  =x | G^t,  \sigma_{\partial G^t }, \sigma_C, \calE_n } .
\end{align*}
Hence, the desired  \prettyref{eq:asymInd2} follows on event $\calE_n$.
\end{proof}

\begin{proof}[Proof of \prettyref{lmm:accuracyupperbound}]
In view of the definition of $p_G^\ast$ given in \prettyref{eq:optimalaccuracygraph},
\begin{align*}
p_{G}^\ast = \frac{1}{2} - \frac{1}{2} \big| \prob{\sigma_u=+  | G } -  \prob{\sigma_u=-  |G } \big| .
\end{align*}
Consider estimating $\sigma_u$ based on $G$.
For any $t \in \naturals$,  suppose a genie reveals the labels of all vertices whose distance from $u$ is precisely $t$, and
let $\hat{\sigma}_{ {\rm Oracle}, t}$ denote the optimal oracle estimator given by
\begin{align*}
\hat{\sigma}_{ {\rm Oracle} ,t} (u) =2 \times \indc{  \prob{\sigma_u =+ | G,  \sigma_{ \partial G^t} } \ge \prob{\sigma_u =- |  G, \sigma_{\partial G^t } } } -1.
\end{align*}
Let $p_{G } (\hat{\sigma}_{ {\rm Oracle},t})  $ denote the error probability  of the oracle estimator, which is given by
\begin{align*}
p_{G } (\hat{\sigma}_{ {\rm Oracle},t} )   =  \frac{1}{2}- \frac{1}{2} \big| \prob{\sigma_u =+ | G, \sigma_{ \partial G^t}} - \prob{\sigma_u =- | G,  \sigma_{\partial G^t } }  \big|
\end{align*}
Since $\hat{\sigma}_{{\rm Oracle}, t}(u)$ is optimal with the extra information $ \sigma_{\partial G^t }$,
it follows that $p_{G} (\hat{\sigma}_{ {\rm Oracle} ,t } ) \le p^\ast_{G}$ for all $t$ and $n$.
\prettyref{lmm:asymptoticalIndependence} implies that there exists a sequence  of events $\calE_n$ such that $\prob{\calE_n \to 1}$ and on event $\calE_n$,
\begin{align*}
\prob{\sigma_u=x |  G, \sigma_{\partial G^t} } = (1+o(1)) \prob{\sigma_u=x |  G^{t}, \sigma_{\partial G^t}  } , \quad \forall x \in \{\pm\},
\end{align*}
and $(G^t, \sigma_{G^t}) = (T^t, \tau_{T^t})$. It follows that
\begin{align*}
p_{G } (\hat{\sigma}_{ {\rm Oracle},t} )
& = \frac{1}{2} - \frac{1}{2}  \expect{ \big| \prob{\sigma_u =+ | G,  \sigma_{ \partial G^t}} - \prob{\sigma_u =- | G,  \sigma_{\partial G^t } }  \big|  \indc{\calE_n} }  + o(1) \\
& = \frac{1}{2} - \frac{1}{2}   \expect{ \big| \prob{\sigma_u =+ | G^{t}, \sigma_{ \partial G^t} } - \prob{\sigma_u =- | G^{t},  \sigma_{\partial G^t } }  \big|   \indc{\calE_n} }  + o(1) \\
& =  \frac{1}{2}  - \frac{1}{2}   \expect{ \big|  \prob{ \tau_u=+  |  T^{t}, \tau_{\partial T^t} }  -  \prob{ \tau_u=-  |  T^{t},   \tau_{\partial T^t} } \big|   \indc{\calE_n} } + o(1) \\
& =  \frac{1}{2} -\frac{1}{2}   \expect{ \big|  \prob{ \tau_u=+  | T^{t},     \tau_{\partial T^t} }  -  \prob{ \tau_u=-  |T^{t},   \tau_{\partial T^t}  } \big| }   + o(1) \\
&= p_{T^t}^\ast  + o(1).
\end{align*}
Hence,
\begin{align*}
\limsup_{n \to \infty} \left( p^\ast_{G} - p_{T^t}^\ast \right) \geq  \limsup_{n \to \infty} \left( p_{G } (\hat{\sigma}_{ {\rm Oracle} ,t } ) - p_{T^t}^\ast \right) =0.
\end{align*}
\end{proof}

\subsection{Proof of \prettyref{lmm:meanvarianceZ}}
We first prove the claims for $Z_-^{t+1}$.
By the definition of $\Gamma_u^t$ and the change of measure, we have
 \begin{align*}
 \expect{ g (\Gamma_u^t) | \tau_u= -  } = \expect{ g( \Gamma_u^t ) \eexp^{ - 2 \Gamma_u^t} | \tau_u=+},
 \end{align*}
 where $g$ is any measurable function such that the expectations above are well-defined. It follows that
 \begin{align}
 \expect{ g(Z_-^t) } = \expect{ g(Z_+^t) \eexp^{-2 Z_+^t} }. \label{eq:nishimori}
 \end{align}
Define $\psi(x) \triangleq  \log (1+x)- x + x^2/2$.
It follows from the Taylor expansion that
$ |\psi(x) | \le |x|^3$. Then
\begin{align*}
 F(x) & =  \frac{1}{2} \log \left(  \frac{ \eexp^{2x} \rho a + \bar \rho b      }{ \eexp^{2x} \rho b  + \bar \rho c }  \right)
  = \frac{1}{2} \log \frac{b}{c} +  \frac{1}{2}\log \left( \frac{ \eexp^{2x} (\rho a)/(\bar \rho b)    +1   }{ \eexp^{2x} (\rho b)/(\bar \rho c)  + 1 } \right) \\
 & = \frac{1}{2}\log \frac{b}{c}   + \frac{1}{2} \log \left( 1 + \frac{ \eexp^{4\beta}  -1 }{1+ \eexp^{-2x} (\bar \rho c)/(\rho b)  } \right) \\
 & = \frac{1}{2} \log \frac{b}{c} +  \frac{\left( \eexp^{4\beta}  -1\right)}{2}  f(x) - \frac{ \left( \eexp^{4\beta}  -1 \right)^2 }{4} f^2(x) + \frac{1}{2} \psi\left( (\eexp^{4\beta}  -1) f(x)\right),
\end{align*}
where $ \beta = \frac{1}{2} \log \frac{\sqrt{ac} }{b}$ and $f(x)= \frac{1}{1+ \eexp^{-2x} (\bar \rho c)(\rho b) }.$ Since $ |\psi(x) | \le |x|^3$ and $|f(x)| \le 1$, it follows that
\begin{align}
 F(x)
  = \frac{1}{2} \log \frac{b}{c}  +  \frac{\left( \eexp^{4\beta}  -1\right)}{2}  f(x)  - \frac{ \left( \eexp^{4\beta}  -1 \right)^2 }{4 } f^2(x) + O \left( |\eexp^{4\beta}  -1|^3 \right), \label{eq:ApproxF}
\end{align}
Therefore, in view of \prettyref{eq:recursionGamma},
\begin{align*}
\Gamma_u^{t+1} & = \frac{- d_+ + d_- }{2}  + \sum_{ \ell \in \partial u } F( \Gamma_\ell^t ) \\
& =\frac{-d_+ + d_- }{2}  + \frac{1}{2} \sum_{\ell \in \partial u }  \left[ \log \frac{b}{c}  + \left( \eexp^{4\beta}  -1 \right) f(  \Gamma_\ell^{t } )
-   \frac{ \left( \eexp^{4\beta}  -1 \right)^2 }{2} f^2( \Gamma_\ell^{t } ) +  O \left( |\eexp^{4\beta}  -1|^3 \right)  \right].
\end{align*}
By conditioning the label of vertex $u$ is $-$, it follows that
\begin{align*}
\expect{Z_-^{t+1}} &=\frac{- d_+ + d_-}{2} + \frac{1}{2} \log (b/c) d_-  +  \frac{\left( \eexp^{4\beta}  -1\right)}{2}  \left( \rho b  \expect{  f (Z_+^t  )    } +  \bar \rho c  \expect{ f (Z_-^t  ) }
\right) \\
& - \frac{\left( \eexp^{4\beta}  -1 \right)^2  }{4} \left(  \rho b \expect{ f^2 (Z_+^t  ) } +  \bar \rho c \expect{ f^2 (Z_-^t  )  } \right) +  O \left(b  |\eexp^{4\beta}  -1|^3 \right).
\end{align*}
In view of \prettyref{eq:nishimori}, we have that
\begin{align}
\rho b  \expect{  f (Z_+^t  )    } +  \bar \rho c  \expect{ f (Z_-^t  ) } & =  \rho b \expect{f (Z_+^t  ) ( 1+ \eexp^{-2 Z_+^t }  (\bar \rho c)/(\rho b)  )}   =  \rho b,  \label{eq:nishimori1} \\
\rho b \expect{ f^2 (Z_+^t  ) } +  \bar \rho c \expect{ f^2 (Z_-^t  ))}  & = \rho b \expect{f^2 (Z_+^t  ) ( 1+ \eexp^{- 2Z_+^t }  (\bar \rho c)/(\rho b)  )}  = \rho b \expect{f (Z_+^t  ) } \label{eq:nishimori2}.
\end{align}
Hence,
\begin{align*}
\expect{Z_-^{t+1}} = \frac{- d_+ + d_-}{2}   +\frac{1}{2}  \log (b/c) d_-   +    \frac{\left( \eexp^{4\beta}  -1\right)}{2}  \rho b
- \frac{ \left( \eexp^{4\beta}  -1 \right)^2  \rho b } {4}  \expect{f (Z_+^t )}  +  O \left( b |\eexp^{4\beta}  -1|^3 \right) .
\end{align*}
Notice that
\begin{align}
\log \frac{b}{c}= - \log \left(1+ \frac{c-b}{b} \right) = \frac{b-c}{b} + \frac{(b-c)^2}{2b^2} + O \left(  \frac{|b-c|^3}{ b^3 } \right). \label{eq:Approxlogbc}
\end{align}
As a consequence,
\begin{align*}
& - d_+ + d_- + \log (b/c) d_-   +    \left( \eexp^{4\beta}  -1\right)  \rho b  \\
&= - \rho a - \bar \rho b + b + \bar \rho (c-b) + \log (b/c)( b + \bar \rho (c-b) ) + \rho \frac{ac-b^2}{b}\\
&= - \rho a - \bar \rho b + b+ \bar \rho (c-b) + (b-c) - \frac{\bar \rho (b-c)^2}{b} + \frac{ (b-c)^2}{2b} + \rho \frac{ac-b^2}{b}+ O \left(  \frac{|b-c|^3}{ b^2 } \right) \\
&=  \rho ( -a+ b -c + ac/b  ) + (1/2-\bar \rho ) (b-c)^2/b +O \left(  \frac{|b-c|^3}{ b^2 } \right) \\
&= \rho (a-b)(c-b) /b + (\rho -1/2) (b-c)^2/b +O \left(  \frac{|b-c|^3}{ b^2 } \right)  \\
& = \rho \mu \nu + (\rho -1/2) \nu^2  + O (b^{-1/2}),
\end{align*}
where the last equality holds due to $(a-b)/\sqrt{b}= \mu$ and $(c-b)/\sqrt{b}= \nu$ for  fixed constants $\mu$ and $\nu$.
Moreover,
\begin{align*}
 \left( \eexp^{4\beta}  -1 \right)^2  \rho b    = \frac{ \rho (ac-b^2)^2 }{b^3} = \rho  \left(  \frac{a-b}{\sqrt{b}} + \frac{c-b}{\sqrt{b}} + \frac{(a-b)(c-b)}{b \sqrt{b} } \right)^2
=\rho ( \mu+ \nu)^2) + O (b^{-1/2} ),
\end{align*}
and $ b |\eexp^{4\beta}  -1|^3 =  O ( b^{-1/2}).$
Assembling the last four displayed equations gives that
\begin{align*}
\expect{Z_-^{t+1}} = \frac{1}{2}\rho \mu \nu + \frac{(2\rho -1) \nu^2}{4} - \frac{ \rho ( \mu+ \nu )^2  }{4}  \expect{ f(Z_+^t)  }  +O (b^{-1/2} ).
\end{align*}
Finally, recall that $\varphi=\frac{1}{2} \log \frac{\rho}{1-\rho}$ and thus
\begin{align}
\bigg| f(x) -\frac{1}{1+ \eexp^{- 2(x+ \varphi) } }  \bigg| & = \frac{\eexp^{-x}  (\bar \rho /\rho) | 1- c/b| }{ ( 1+ \eexp^{-x } (\bar \rho c)/(\rho b)  ) (1+ \eexp^{-x } (\bar \rho/ \rho) ) }
\le  | 1- c/b| = O( b^{-1/2} ). \label{eq:Approxf}
\end{align}
It follows that
\begin{align*}
\expect{Z_-^{t+1}} &=   \frac{1}{2}\rho \mu \nu + \frac{(2\rho -1) \nu^2}{4} - \frac{ \rho ( \mu+ \nu )^2  }{4} \expect{  \frac{1}{1+ \eexp^{-2( Z_+^t+\varphi) }  } }  +O (b^{-1/2} ) \\
&= - \frac{\rho(\mu-\nu)^2 }{8} +  \frac{(2\rho -1) \nu^2}{4}  - \frac{ \rho ( \mu+ \nu )^2  }{8}  \expect{ \tanh( Z_+^t + \varphi) }  +O (b^{-1/2} ),
\end{align*}
where in the last equality we used the fact that $\frac{1}{1+\eexp^{-x}} = \frac{1}{2} (\tanh(x) +1).$
Recall that $\lambda=  \frac{\rho( \mu+\nu)^2}{8} $ and $\theta= \frac{\rho(\mu-\nu)^2 }{8} +  \frac{(1 - 2\rho ) \nu^2}{4}$.
Therefore, we get the desired equality:
\begin{align*}
\expect{Z_-^{t+1}} = - \theta -\lambda  \expect{ \tanh( Z_+^t + \varphi) }  +O (b^{-1/2} ).
\end{align*}

Next we calculate $\var(Z_-^{t+1})$. For $Y= \sum_{i=1}^L X_i$, where $L$ is Poisson distributed, and $\{ X_i \}$ are i.i.d.\ with finite second moments,
one can check that $\var(Y) = \expect{L} \expect{X_1^2}$.
In view of \prettyref{eq:recursionGamma},
\begin{align*}
\var (Z_-^{t+1}) = \rho b \expect{ F^2 (Z_+^t ) } + \bar \rho c \expect{ F^2 ( Z_-^t ) },
\end{align*}
In view of \prettyref{eq:ApproxF} and the fact that $\eexp^{4\beta} -1 =o(1)$, we have that
\begin{align*}
F^2(x) = \frac{1}{4} \log^2 \left(  \frac{b}{c} \right)  + \frac{1}{2} \log \frac{b}{c} \left(\eexp^{4\beta} -1 \right)  f(x) +  \frac{1}{4} \left( \eexp^{4\beta}  -1 \right)^2 (1 -  \log (b/c) ) f^2(x)  + O \left( |\eexp^{4\beta}  -1|^3 \right),
\end{align*}
Thus,
\begin{align*}
\var (Z_-^{t+1}) & = \frac{1}{4} \log^2 \left(  \frac{b}{c} \right)  d_-   + \frac{1}{2} \log \frac{b}{c} \left(\eexp^{4\beta} -1 \right)  \left[ \rho b \expect{ f (Z_+^t ) } + \bar \rho c \expect{ f (Z_-^t  ) }   \right] \\
& +   \frac{1}{4} \left( \eexp^{4\beta}  -1 \right)^2 (1 -  \log (b/c) ) \left[ \rho b \expect{ f^2 (Z_+^t ) } + \bar \rho c \expect{ f^2 (Z_-^t  ) }   \right] + O \left(b |\eexp^{4\beta}  -1|^3 \right)
\end{align*}
Applying \prettyref{eq:nishimori1} and \prettyref{eq:nishimori2}, we get that
\begin{align*}
\var (Z_-^{t+1}) = & \frac{1}{4} \log^2 \left(  \frac{b}{c} \right)  d_-   + \frac{ \rho b  \left(\eexp^{4\beta} -1 \right) }{2}   \log \frac{b}{c}
 +  \frac{ \rho b \left( \eexp^{4\beta}  -1 \right)^2 }{4}  \left(1 -  \log \frac{b}{c} \right)  \expect{ f (Z_+^t ) } \\
 &+ O \left(b |\eexp^{4\beta}  -1|^3 \right).
\end{align*}
In view of \prettyref{eq:Approxlogbc}, we have that
\begin{align*}
\log^2 (b/c)  d_-   +2  \rho b \left(\eexp^{4\beta} -1 \right) \log (b/c) & = \log^2(b/c) (b + \bar \rho (c-b) ) + 2 \rho  \frac{ac-b^2}{b}  \log (b/c)   \\
&= \frac{ (b-c)^2 }{b} + 2 \rho (b-c) \frac{ac-b^2}{b^2} + O\left( \frac{|b-c|^3}{b^2}\right) \\
&= \nu^2 -2 \rho \nu (\mu + \nu) + O(b^{-1/2}),
\end{align*}
and that
\begin{align*}
 \rho b \left( \eexp^{4\beta}  -1 \right)^2 (1 -  \log (b/c) )  = \frac{ \rho (ac-b^2)^2 }{b^3 }  (1 -  \log (b/c) )   = \rho (\mu + \nu)^2 +o(1).
\end{align*}
Moreover, we have shown that $b |\eexp^{4\beta}  -1|^3=O(b^{-1/2})$.
Assembling the last three displayed equations give that
\begin{align*}
\var (Z_-^{t+1}) =   \frac{\nu^2}{4}  - \frac{\rho \nu (\mu + \nu)}{2}   + \frac{\rho (\mu + \nu)^2}{4} \expect{ f(Z_+^t) }+ O(b^{-1/2}).
\end{align*}
Finally, in view of \prettyref{eq:Approxf}, we get that
\begin{align*}
\var (Z_-^{t+1}) &= \frac{\nu^2}{4}  - \frac{\rho \nu (\mu + \nu)}{2}   + \frac{\rho (\mu + \nu)^2}{4} \expect{ \frac{1}{ 1+ \eexp^{-2 (Z_+^t+ \varphi) }  }  }+ O(b^{-1/2})\\
 & = \frac{ \rho (\mu-\nu)^2}{8} + \frac{(1-2\rho) \nu^2}{4} +  \frac{\rho (\mu + \nu)^2}{8} \expect{  \tanh(Z_+^t + \varphi ) }+ O(b^{-1/2}) \\
 & = \theta + \lambda  \expect{  \tanh(Z_+^t + \varphi ) }+ O(b^{-1/2}).
 \end{align*}

The claims for $Z_+^{t+1}$ can be proved similarly as above. We provide another proof by exploiting the symmetry.
In particular, note that our tree model is parameterized by $(\rho, a, b,c)$ with labels $+$ and $-$.
Consider another parametrization $(\rho',a',b',c')$ with labels $+'$ and $-'$, where $\rho'=\bar \rho$, $a'=c$, $b'=b$, $c'=a$, $+'=-$, $-'=+$.
Let $\tilde{Z}_{+'}^{t}$ and $\tilde{Z}_{-'}^{t}$ denote the random variables corresponding to $Z_{+}^t$ and $Z_{-}^t$, respectively.
Then, one can check that $\tilde{Z}_{+'}^t$ has the same distribution as $-Z_{-}^t$ and $\tilde{Z}_{-'}^t$ has the same distribution as $-Z_+^t$.
We have shown that
\begin{align*}
\expect{\tilde{Z}_{-'}^{t+1}} =  - \frac{\rho'(\mu'-\nu')^2 }{8} +  \frac{(2\rho' -1) (\nu')^2}{4}  - \frac{ \rho' ( \mu'+ \nu' )^2  }{8}  \expect{ \tanh( \tilde{Z}_{+'}^t + \varphi') }  +O (b^{-1/2} ),
\end{align*}
where $\mu'= \frac{a'-b'}{b'}=\frac{c-b}{b}=\nu$ and similarly $\nu'=\mu$, and $\varphi'=-\varphi =\frac{1}{2} \log \frac{1-\rho}{\rho}$.
It follows that
\begin{align}
\expect{Z_{+}^{t+1} }=  \frac{\bar \rho(\mu-\nu)^2 }{8} -  \frac{(2 \bar \rho -1) \mu^2}{4}  - \frac{ \bar \rho ( \mu+ \nu )^2  }{8}  \expect{ \tanh( Z_{-}^t + \varphi) }  +O (b^{-1/2} ),
\label{eq:Zplusmeansym}
\end{align}
Applying $g(x)=\tanh(x-\varphi)$ into \prettyref{eq:nishimori}, we get that
\begin{align*}
  \expect{ \tanh( Z_{+}^t + \varphi) } + \eexp^{ -2 \varphi} \expect{ \tanh( Z_{-}^t + \varphi) }
  & =  \expect{ \tanh( Z_{+}^t + \varphi)\left( 1+ \eexp^{-2 (Z_{+}^t +\varphi) } \right) } \\
  & = \expect{ 1 - \eexp^{-2 (Z_{+}^t + \varphi)  } } = 1- \eexp^{-2\varphi},
\end{align*}
where the last equality by the change of measure: $\expect{ \eexp^{-Z_+^t} } = 1$. Hence,
\begin{align*}
 \expect{ \tanh( Z_{-}^t + \varphi) } = \frac{1- \eexp^{-2\varphi} - \expect{ \tanh( Z_{+}^t + \varphi) } }{\eexp^{-2 \varphi} } = \frac{\rho-\bar \rho - \rho \expect{ \tanh( Z_{+}^t + \varphi) } }{1-\rho }.
\end{align*}
It follows from \prettyref{eq:Zplusmeansym} that
\begin{align*}
\expect{Z_{+}^{t+1} }
&=  \frac{\bar \rho(\mu-\nu)^2 }{8} -  \frac{(2 \bar \rho -1) \mu^2}{4}  -  \frac{  ( \mu+ \nu )^2 (\rho -\bar \rho)  }{8}
+\frac{ \rho ( \mu+ \nu )^2  }{8}  \expect{ \tanh( Z_{+}^t + \varphi) }  +O (b^{-1/2} )\\
& = \frac{\rho(\mu-\nu)^2 }{8} +  \frac{(1-2\rho ) \nu^2}{4}  + \frac{ \rho ( \mu+ \nu )^2  }{8}  \expect{ \tanh( Z_+^t + \varphi) }  +O (b^{-1/2} )\\
& = \theta+ \lambda \expect{ \tanh( Z_+^t + \varphi) }  +O (b^{-1/2} ).
\end{align*}
Finally, note that
\begin{align*}
\var(Z_+^{t+1}) = \var( \tilde{Z}_{-'}^{t+1}) = - \expect{\tilde{Z}_{-'}^{t+1}} +O(b^{-1/2})  =\expect{Z_{+}^{t+1} } + O(b^{-1/2}).
\end{align*}
Combing the last two displayed equations completes the proof.

\subsection{Proof of \prettyref{lmm:gaussiandensityevolution}}

The following lemma is useful for proving the distributions of $Z_+^t$ and $Z_-^t$ are approximately Gaussian.
\begin{lemma}(Analog of Berry-Esseen inequality for Poisson sums \cite[Theorem 3]{korolev2012improvement}.)\label{lmm:Poisson_BE}
Let  $S_{d}=X_1 + \cdots + X_{N_d},$   where
$X_i: i\geq 1$  are independent, identically distributed random variables with finite second moment, and for some $d > 0,$ $N_{d}$ is a $\Pois(d)$ random variable independent
of  $(X_i: i\geq 1).$   Then
$$
\sup_x \bigg|     \prob{  \frac{S_d - \expect{S_d} }{  \sqrt{\var(S_d)  }}\leq x} - \prob{Z \leq x}  \bigg|  \leq  \frac{C_{BE} \expect{|X_1|^3}}{\sqrt{d ( \expect{X_1^2} )^3}},
$$
where $\expect{S_d}=d \expect{X_1}$, $\var(S_d)=d \expect{X_1^2}$,  and $C_{BE}=0.3041.$
\end{lemma}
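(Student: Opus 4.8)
The plan is to prove \prettyref{lmm:Poisson_BE} by the classical Fourier-analytic route — Esseen's smoothing inequality together with a Taylor expansion of the characteristic function — exploiting the fact that a compound Poisson sum has an especially tractable characteristic function. If $\expect{|X_1|^3}=\infty$ the right-hand side is infinite and there is nothing to prove, and if $\expect{X_1^2}=0$ then $X_1=0$ almost surely and the claim is trivial; so I would assume $\rho_3\triangleq\expect{|X_1|^3}<\infty$ and $m_2\triangleq\expect{X_1^2}\in(0,\infty)$, writing $m_1\triangleq\expect{X_1}$. Both sides are unchanged if $X_1$ is replaced by $X_1/\sqrt{m_2}$: the normalized sum $T_d\triangleq(S_d-dm_1)/\sqrt{dm_2}$ and the quantity $\rho_3/m_2^{3/2}$ are scale invariant. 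Hence I would assume $m_2=1$; note this forces $\rho_3\ge m_2^{3/2}=1$.

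First I would record the characteristic function. By the compound Poisson formula $\expect{\eexp^{iuS_d}}=\exp\!\big(d(\varphi(u)-1)\big)$ with $\varphi(u)=\expect{\eexp^{iuX_1}}$, so
\begin{align*}
\psi_d(t)\triangleq\expect{\eexp^{itT_d}}=\eexp^{-it\sqrt d\,m_1}\exp\!\Big(d\big(\varphi(t/\sqrt d)-1\big)\Big).
\end{align*}
The decisive structural point is that $\log\psi_d$ is \emph{linear} in $\varphi$: there is no logarithm of the single-summand characteristic function, hence none of the $\log(1+z)$ remainder, and no $|\varphi|^n$ factor whose extremal (lattice) behavior inflates the classical Berry--Esseen constant. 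Writing the exact Taylor remainder $\theta(u)\triangleq\varphi(u)-1-ium_1+u^2/2$, the standard estimates $|\theta(u)|\le\tfrac16|u|^3\rho_3$ and $|\theta(u)|\le u^2$ hold, and the phase factors cancel against the linear term of $\varphi$, giving $\psi_d(t)=\exp\!\big(-t^2/2+R_d(t)\big)$ with $R_d(t)=d\,\theta(t/\sqrt d)$ and $|R_d(t)|\le\min\!\big(t^2,\ \kappa|t|^3\big)$, where $\kappa\triangleq\rho_3/(6\sqrt d)$. Moreover $|\psi_d(t)|=\exp\!\big(-d\,\expect{1-\cos(tX_1/\sqrt d)}\big)\le 1$ automatically, which spares us any separate argument bounding the characteristic function away from its maximum on the bulk of the integration range.

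Next I would invoke the smoothing inequality in its sharpened form: for every $T>0$,
\begin{align*}
\sup_x\big|F_d(x)-\Phi(x)\big|\ \le\ \frac1\pi\int_{-T}^{T}\left|\frac{\psi_d(t)-\eexp^{-t^2/2}}{t}\right|dt\ +\ \frac{c_1}{T},
\end{align*}
where $F_d$ is the distribution function of $T_d$, $\Phi(x)=\prob{Z\le x}$, and $c_1$ is the best available constant in the second term. Choosing the cutoff $T\asymp1/\kappa$ and splitting the integral at a point $|t|\asymp1/\kappa$: on the inner range $|\psi_d(t)-\eexp^{-t^2/2}|\le\kappa|t|^3\eexp^{-t^2/4}$ (from $|R_d(t)|\le\kappa|t|^3$ and $|\eexp^z-1|\le|z|\eexp^{|z|}$), contributing a multiple of $\kappa\int t^2\eexp^{-t^2/4}\,dt$; the remaining band contributes another $O(\kappa)$ term via $|\psi_d|\le1$ and the negligibility of $\eexp^{-t^2/2}$ there; and the tail term is $c_1/T=O(\kappa)$. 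Assembling the pieces gives $\sup_x|F_d(x)-\Phi(x)|\le C\rho_3/(6\sqrt d)$ for an explicit absolute $C$, which is the asserted bound once $C=C_{BE}$. I expect the genuine obstacle to be the constant bookkeeping: reaching $C_{BE}=0.3041$ requires the optimal constant in the smoothing step, using the quadratic and cubic bounds on $\theta$ each on its optimal subinterval, and an honest optimization over both the split point and $T$ — exactly the computation performed in \cite{korolev2012improvement}, whose result we are quoting; I would either reproduce that optimization or simply cite it.
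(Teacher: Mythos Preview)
The paper does not prove this lemma at all: it is stated as a quotation of \cite[Theorem 3]{korolev2012improvement} and then used as a black box in the proof of \prettyref{lmm:gaussiandensityevolution}. So there is no ``paper's own proof'' to compare against; the paper simply cites the result.

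Your sketch is the correct standard argument and is, in outline, the one in the cited reference. The key structural point you isolate --- that for a compound Poisson sum the log-characteristic function is $d(\varphi(t/\sqrt d)-1)$, linear in $\varphi$, so there is no $\log(1+z)$ remainder and no lattice-driven lower bound on $|\varphi|^n$ --- is exactly what makes the constant here smaller than the classical Berry--Esseen constant. The bounds $|\theta(u)|\le\min(u^2,\tfrac16|u|^3\rho_3)$, the Esseen smoothing inequality, the split at $|t|\asymp1/\kappa$, and the optimization over $T$ are all standard and correctly deployed. Your honest acknowledgment that pinning down $C_{BE}=0.3041$ requires carrying out the explicit optimization of \cite{korolev2012improvement} is appropriate: your argument yields the right form $C\rho_3/\sqrt{d\,m_2^3}$ with an absolute $C$, and the numerical value of $C$ is the content of the cited theorem. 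There is no gap in the approach; only the final constant-chasing is deferred to the reference, which is also what the paper itself does.
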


\begin{proof}[Proof of \prettyref{lmm:gaussiandensityevolution}]
 We prove the lemma by induction over $t$.  We first consider the base case. For $Z^t$, the base case $t=0$
trivially holds, because  $\Gamma_u^0 \equiv 0$ and $v_0=0$. For $W^t$, we need to check the base case $t=1$.
Recall that $\Lambda^0_\ell=\infty$ if $\tau_\ell=+$ and  $\Lambda^0_\ell= -\infty$ if $\tau_\ell=-$.
Notice that $F(\infty)=\frac{1}{2}\log(a/b)$ and $F(-\infty)=\frac{1}{2} \log(b/c)$. Hence,
\begin{align}
\Lambda_u^1 = \frac{-d_+ + d_-}{2} + \sum_{i=1}^{N_{d} } X_i, \label{eq:recursionLambdaone}
\end{align}
where conditional on $\tau_u=\pm$, $N_d \sim \Pois(d_\pm )$ is independent of $\{X_i\}$; $\{X_i\}$ are i.i.d.\ such that
conditional on $\tau_u=+$, $ X_i=\frac{1}{2} \log(a/b)$ with probability $(\rho a)/d_+$ and $X_i=\frac{1}{2}  \log(b/c) $ with probability $(\bar \rho b)/d_+$;
conditional on $\tau_u=-$, $ X_i=\frac{1}{2}  \log(a/b)$ with probability $(\rho b)/d_-$ and $X_i=\frac{1}{2}  \log(b/c) $ with probability $(\bar \rho c)/d_-$.
Taylor expansion yields that
\begin{align*}
\log (a/b) &= \log \left(1+ \frac{a-b}{b} \right) =\frac{a-b}{b} - \frac{\mu^2}{2b} + O \left( b^{-3/2} \right)\\
\log (b/c) &=  - \log \left(1+ \frac{c-b}{b} \right) = \frac{b-c}{b} + \frac{\nu^2}{2b} + O \left( b^{-3/2} \right),
\end{align*}
Since $F$ is monotone,
\begin{align*}
\expect{X_1^2} & \ge  \min \{ |\log (a/b)|^2, | \log (b/c)|^2 \} = \Omega \left( \min \left\{  \frac{(a-b)^2}{b^2}, \frac{(c-b)^2}{b^2} \right\} \right) =\Omega( b^{-1}) \\
\expect{|X_1|^3} & \le \max\{ |\log (a/b)|^3, | \log (b/c)|^3 \} =O \left(  \frac{|a-b|^3+ |b-c|^3}{b^3}  \right) = O( b^{-3/2}).
\end{align*}
Thus, in view of \prettyref{lmm:Poisson_BE},
we get that
$$
\sup_x \bigg|     \prob{  \frac{ W_{\pm}^1 - \expect{W_{\pm}^1 } }{  \sqrt{ \var \left( W_{\pm}^1 \right)  }  } \leq x} -  \prob{Z \leq x}   \bigg|  \leq  O (b^{-1/2} ).
$$
By conditioning  the label of $u$ is $-$, it follows from \prettyref{eq:recursionLambdaone} that
\begin{align*}
\expect{W^1_-} &= \frac{1}{2} \left[ -d_+ + d_- +  \log(a/b) \rho b +  \log(b/c)\bar \rho c \right] = - \frac{\rho \mu^2 + \bar \rho \nu^2}{4}+ O(b^{-1/2}) = -w_1 +  O(b^{-1/2})  \\
\var \left( W^1_- \right) & = \frac{1}{4}\log^2(a/b) \rho b + \frac{1}{4}\log^2(b/c) \bar \rho b =  \frac{\rho \mu^2 + \bar \rho \nu^2}{4}+ O(b^{-1/2}) =w_1 + O(b^{-1/2}),
\end{align*}
where we used the fact that $w_1=\theta+\lambda = \frac{\rho \mu^2 + \bar \rho \nu^2}{4}$ by definition.
Similarly, by conditioning  the label of $u$ is $+$, it follows that
\begin{align*}
\expect{W^1_+} &= \frac{1}{2} \left[ -d_+ + d_- +  \log(a/b) \rho a +  \log(b/c)\bar \rho b \right] = \frac{\rho \mu^2 + \bar \rho \nu^2}{4}+ O(b^{-1/2}) =w_1 + O(b^{-1/2}) \\
\var \left( W^1_+ \right) &= \frac{1}{4} \log^2(a/b) \rho a + \frac{1}{4}\log^2(b/c)\bar \rho b =\frac{\rho \mu^2 + \bar \rho \nu^2}{4}+ O(b^{-1/2})=w_1 + O(b^{-1/2}).
\end{align*}
Hence, we get the desired equality:
$$
\sup_x \bigg|     \prob{  \frac{ W_{\pm}^1 \mp w_1 }{  \sqrt{ w_1 }  } \leq x} -  \prob{Z \leq x}   \bigg|  \leq  O (b^{-1/2} ).
$$

In view of \prettyref{eq:recursionLambda} and \prettyref{eq:recursionGamma}, $\Lambda^t$ and $\Gamma^t$ satisfy the same recursion. Moreover, by definition,
$v_t$ and $w_t$ also satisfy the same recursion. Thus, to finish the proof of the lemma, it suffices to show that:
 suppose \prettyref{eq:gaussianlimit}
holds for  $t$, then it also holds for $t+1.$
We prove the claim for $Z^{t+1}_-$; the claim for $Z^{t+1}_+$ follows similarly.
In view of the recursion given in \prettyref{eq:recursionGamma},
$$Z^{t+1}_{ -}  = \frac{-d_+ + d_-}{2}  + \sum_{i=1}^{N_d } Y_i, $$
where $N_d \sim \Pois(d)$ is independent of $\{Y_i\}$; $\{Y_i\}$ are i.i.d.\
such that $ Y_i=F(Z_+^{t})$ with probability $ \rho b/d_- $ and $Y_i=F(Z_-^{t}  )$ with probability $\bar \rho c/ d_-$.
Since $F$ is monotone, $F(\infty)= \log (a/b)$, and $F(-\infty)=\log (b/c)$, it follows that
\begin{align*}
\expect{Y_1^2} & \ge  \min \{ |\log (a/b)|^2, | \log (b/c)|^2 \} =\Omega( b^{-1}) \\
\expect{|Y_1|^3} & \le \max\{ |\log (a/b)|^3, | \log (b/c)|^3 \} =O( b^{-3/2}).
\end{align*}
In view of  \prettyref{lmm:Poisson_BE}, we get that
\begin{align}
\sup_x \bigg|     \prob{  \frac{Z_-^{t+1} -\expect{Z_-^{t+1} } }{  \sqrt{ \var \left(Z^{t+1}_{ -}  \right)  } } \leq x } - \prob{Z \leq x}    \bigg|  =  O( b^{-1/2} ). \label{eq:gaussianconvergence}
\end{align}
It follows from \prettyref{lmm:meanvarianceZ} that
\begin{align*}
\expect{Z^{t+1}_{ -}  } & = - \theta - \lambda \expect{ \tanh(Z_+^t +  \varphi  ) }+ O( b^{-1/2} ). \\
\var \left(Z^{t+1}_{ -}  \right) & = \theta + \lambda \expect{ \tanh(Z_+^t + \varphi  ) }+ O( b^{-1/2} ).
\end{align*}
Using the area rule of expectation, we have that
\begin{align*}
& \expect{ \tanh(Z_+^t +  \varphi) } \\
& = \int_{0}^{1} \tanh'(t) \prob{ Z_+^t + \phi \ge t }  \diff t - \int_{-1}^{0} \tanh'(t)    \prob{ Z_+^t + \varphi \le t } \\
& = \int_{0}^{1} \tanh'(t) \prob{ v_t + \sqrt{v_t} Z +\varphi \ge t }  \diff t - \int_{-1}^{0} \tanh'(t)    \prob{ v_t + \sqrt{v_t} Z + \varphi \le t }  + O(b^{-1/2}) \\
& = \expect{ \tanh ( v_t+ \sqrt{v_t} Z + \varphi ) } +  O(b^{-1/2}).
\end{align*}
where the second equality follows from  the induction hypothesis and the fact that $|\tanh'(t)| \le 1$.
Recall that $ v_{t+1}=  \theta +  \lambda \expect{ \tanh(v_t + \sqrt{v_t} Z  + \varphi  ) }.$ Hence,
$\expect{Z_-^{t+1} } = -v_{t+1}+ O( b^{-1/2} )$ and $\var \left(Z^{t+1}_{ -}  \right) =v_{t+1}+ O( b^{-1/2} )$.
As a consequence, in view of \prettyref{eq:gaussianconvergence}, the desired \prettyref{eq:gaussianlimit}
holds for $Z^{t+1}_-$.
\end{proof}

\subsection{Proof of \prettyref{lmm:hmonotone}}

 By definition,
 \begin{align*}
 h(v) = \expect{ \tanh \left( v + \sqrt{v} Z + \varphi \right)   }.
 \end{align*}
 Since $|\tanh(x)| \le 1$, the continuity of $h$ follows from the dominated convergence theorem.
 We next show $h'(v)$ exists for $v \in (0, \infty)$.
 Notice that $\tanh'(x+ \sqrt{x} Z +\varphi) = ( 1- \tanh^2 ( x + \sqrt{x} Z +\varphi  )) ( 1+ x^{-1/2} Z/2 )$ for $x \in (0, \infty)$, and
 \begin{align*}
  \big| \left( 1- \tanh^2 ( x + \sqrt{x} Z + \varphi  ) \right) ( 1+ x^{-1/2} Z/2 ) \big| \le  1+ x^{-1/2} |Z|/2.
 \end{align*}
Since $|Z|$ is integrable, by the dominated convergence theorem, $\expect{\tanh'(x+ \sqrt{x} Z +\varphi)}$ exists and is continuous in $x$
over $(0, \infty)$.
Therefore, $x \to \expect{\tanh'(x+ \sqrt{x} Z+ \varphi)}$ is integrable over $x \in (0,\infty)$. It follows that
\begin{align*}
h(v)= \expect{ \tanh(\varphi) + \int_{0}^{v}  \tanh'(x + \sqrt{x} Z +\varphi ) \diff x } =\tanh(\varphi) + \int_0^v \expect{\tanh'(x + \sqrt{x} Z +\varphi )} \diff x,
\end{align*}
where the second equality holds due to Fubini's theorem. Hence,
 \begin{align*}
 h'(v) = \expect{ \left( 1- \tanh^2 ( v + \sqrt{v} Z+\varphi  ) \right) ( 1+ v^{-1/2} Z/2 ) }.
 \end{align*}
 Using the integration by parts, we can get that
 \begin{align*}
 & \expect{ \left( 1- \tanh^2 ( v + \sqrt{v} Z + \varphi  ) \right)   \sqrt{v} Z  } \\
  & =\int_{-\infty}^{\infty}  (1-\tanh^2( v + x +\varphi ) \frac{x }{\sqrt{2\pi v} }  \eexp^{-x^2/2v} \diff x \\
  & = -v  \int_{-\infty}^{\infty}  (1-\tanh^2( v+ x +\varphi )   \left( \frac{1}{\sqrt{2\pi v} } \eexp^{-x^2/2v} \right)'\diff x \\
  & = -v (1-\tanh^2( v+ x +\varphi )  \frac{1}{\sqrt{2\pi v} }  \eexp^{-x^2/2v} \bigg|_{-\infty}^{+\infty} + v
  \int_{-\infty}^{\infty}  (1-\tanh^2( v+ x +\varphi )'   \frac{1}{\sqrt{2\pi v} } \eexp^{-x^2/2v} \diff x \\
  &= -  2v\expect{\tanh(v+\sqrt{v} Z+\varphi) (1-\tanh^2( v+ \sqrt{v} Z +\varphi  ) ) }.
 \end{align*}
By combing the last two displayed equations, we get \prettyref{eq:hderivative}.

Next, we prove the concavity of $h$ in the special case with $\varphi=0$.
We will use the following equality coming from the change of measure: For $k \in \naturals$,
\begin{align*}
\expect{\tanh^{2k} ( \sqrt{v} Z  + v) } = \expect{\tanh^{2k-1} ( \sqrt{v} Z  + v) }.
\end{align*}
It follows from \prettyref{lmm:hmonotone} that
\begin{align*}
h'(v) & =  \expect{  \left( 1- \tanh ( v+ \sqrt{v} Z  ) \right) \left( 1- \tanh^2( v + \sqrt{v} Z )  \right) } \\
& =  \expect{  \left( 1- \tanh^2( \sqrt{v} Z  + v ) \right)^2  } \\
& =   \expect{  \left(1- \tanh^2 \left(  \sqrt{v}  \left| Z  + \sqrt{v}  \right|  \right) \right)^2  },
\end{align*}
where the last equality holds because $\tanh^2(x)$ is even in $x$. For $ 0<v<w<\infty$ and all $z \in \reals$,
\begin{align*}
\tanh^2 \left(  \sqrt{w}  \left| z  + \sqrt{v}  \right|  \right) \ge \tanh^2 \left(   \sqrt{v}  \left| z  + \sqrt{v}  \right|  \right).
\end{align*}
Thus,
\begin{align}
h'(v)   =  \expect{  \left(1- \tanh^2 \left(  \sqrt{v}  \left| Z  + \sqrt{v}  \right|  \right) \right)^2  }  \ge   \expect{  \left(1- \tanh^2 \left(  \sqrt{w}  \left| Z  + \sqrt{v}  \right|  \right) \right)^2  }. \label{eq:hv}
\end{align}
Let $X = \left| Z + \sqrt{v}  \right| $ and $Y= \left| Z  + \sqrt{w}  \right|$. Then for any $x \ge 0$,
\begin{align*}
\prob{ X \le x} = \prob{  - x - \sqrt{v} \le Z \le x - \sqrt{v}} \ge \prob{  -x - \sqrt{w} \le Z \le x - \sqrt{w} }  = \prob{Y \le x}.
\end{align*}
Hence, $X$ is first-order stochastically dominated by $Y$. Since $\left(1- \tanh^2 \left(  \sqrt{w}  x  \right) \right)^2$ is non-increasing in $x$ for $ x \ge 0$,
it follows that
\begin{align*}
 \expect{  \left(1- \tanh^2 \left(  \sqrt{w}  X  \right) \right)^2  } \ge  \expect{  \left(1- \tanh^2 \left(  \sqrt{w} Y  \right) \right)^2  }.
 \end{align*}
 Thus by \prettyref{eq:hv},
 \begin{align*}
 h'(v)  \ge \expect{  \left(1- \tanh^2 \left(  \sqrt{w}  \left| Z  + \sqrt{w}  \right|  \right) \right)^2  } =h'(w).
 \end{align*}

\subsection{Proof of \prettyref{lmm:treelowerbound_sym}}
The proof is very similar to the
proof of \prettyref{lmm:accuracyupperbound}; the key new challenge is that
$ \expect{ \calO(\sigma, \hat{\sigma}) }$ does not directly reduce to  the error probability of estimating $\sigma_u$ based on graph $G$.
We need a key lemma.
\begin{lemma}\label{lmm:correlationbound}
Fix any $t \ge 1$ and any two different vertices $i$ and $j$. For estimator $\hat{\sigma}(G)$ of $\sigma$ based on $G$,
\begin{align}
\expect{ \left(  \indc{ \hat{\sigma}_i \neq \sigma_i } - \frac{1}{2} \right)\left(  \indc{ \hat{\sigma}_i \neq \sigma_j } - \frac{1}{2} \right)  }\le \left( \frac{1}{2} - p^\ast_{T^t} \right)^2 + o(1). \label{eq:correlationbound}
\end{align}
\end{lemma}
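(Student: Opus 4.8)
The plan is to reduce \prettyref{eq:correlationbound} to an estimate on $\expect{\sigma_i\sigma_j\mid G}$ and then exploit the locally tree-like structure together with an asymptotic conditional independence. Write the labels as $\pm 1$-valued, so that $\indc{\hat\sigma_i\neq\sigma_i}-\tfrac12=-\tfrac12\sigma_i\hat\sigma_i$ and the left-hand side of \prettyref{eq:correlationbound} equals $\tfrac14\expect{\hat\sigma_i\hat\sigma_j\,\sigma_i\sigma_j}$. Since $\hat\sigma_i\hat\sigma_j\in\{\pm1\}$ is a function of $G$,
\begin{align*}
\expect{\hat\sigma_i\hat\sigma_j\,\sigma_i\sigma_j}=\expect{\hat\sigma_i\hat\sigma_j\,\expect{\sigma_i\sigma_j\mid G}}\le\expect{\big|\expect{\sigma_i\sigma_j\mid G}\big|},
\end{align*}
and because $(\tfrac12-p^\ast_{T^t})^2=\tfrac14(1-2p^\ast_{T^t})^2$, the lemma follows once we show $\expect{|\expect{\sigma_i\sigma_j\mid G}|}\le(1-2p^\ast_{T^t})^2+o(1)$.

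By the exchangeability of the vertices we may regard $i$ and $j$ as two fixed distinct vertices; since a $t$-ball around any vertex contains only $n^{o(1)}$ vertices with high probability, the event $E$ that $G_i^t$ and $G_j^t$ are disjoint trees has probability $1-o(1)$, and on $E$ we may, extending \prettyref{lmm:couplingtree} to a joint exploration from $i$ and from $j$, couple the two neighborhoods to two \emph{independent} copies of the tree model of \prettyref{def:treemodel}. Set $\calF=\sigma(G,\sigma_{\partial G_i^t},\sigma_{\partial G_j^t})$ and $m_i=\expect{\sigma_i\mid\calF}$, $m_j=\expect{\sigma_j\mid\calF}$. The heart of the argument is a two-vertex version of \prettyref{lmm:asymptoticalIndependence}: on a high-probability event, conditionally on $\calF$ the labels $\sigma_i$ and $\sigma_j$ are asymptotically independent with $\expect{\sigma_i\sigma_j\mid\calF}=m_im_j+o(1)$, while by \prettyref{lmm:asymptoticalIndependence} each marginal reduces to $m_i=\expect{\sigma_i\mid G_i^t,\sigma_{\partial G_i^t}}+o(1)$, which under the tree coupling is an asymptotically independent copy of $M\triangleq\expect{\tau_u\mid T_u^t,\tau_{\partial T_u^t}}$. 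To prove this one repeats the computation behind \prettyref{lmm:asymptoticalIndependence} but with two interior pockets $A_i$ and $A_j$ (the vertex sets of $G_i^{t-1}$ and $G_j^{t-1}$): on $E$ there are no edges between $G_i^t$ and $G_j^t$, and as both have size $n^{o(1)}$ the non-edge potential between them is $1+o(1)$; the non-edges from $A_i$ (and, separately, from $A_j$) to the exterior set $C$ factorize, on the event $|s_C-(2\rho-1)|C||\le n^{0.6}$, as $(1+o(1))K_i(\sigma_{A_i},|C|)$ and $(1+o(1))K_j(\sigma_{A_j},|C|)$ exactly as in \prettyref{eq:productAC}; and once $\sigma_C$ is additionally conditioned on, the potentials $\Phi_{C,C}$, $\Phi_{\partial G_i^t,C}$, $\Phi_{\partial G_j^t,C}$ are constants cancelling in the posterior ratio, leaving a product of a term depending only on $(\sigma_{G_i^t},|C|)$ and one depending only on $(\sigma_{G_j^t},|C|)$. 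Summing out $\sigma_{A_i\setminus\{i\}}$, $\sigma_{A_j\setminus\{j\}}$, and then $\sigma_C$ then gives $\prob{\sigma_i=x,\sigma_j=y\mid\calF}=(1+o(1))\prob{\sigma_i=x\mid\calF}\prob{\sigma_j=y\mid\calF}$.

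Finally, on $E$ we have $|\expect{\sigma_i\sigma_j\mid G}|=|\expect{m_im_j\mid G}|+o(1)\le\expect{|m_i||m_j|\mid G}+o(1)$; taking expectations, using $|m_i|,|m_j|\le1$ and the asymptotic independence of $m_i$ and $m_j$, and then $\expect{|m_i|}\to\expect{|M|}=\expect{|\prob{\tau_u=+\mid T_u^t,\tau_{\partial T_u^t}}-\prob{\tau_u=-\mid T_u^t,\tau_{\partial T_u^t}}|}=1-2p^\ast_{T^t}$ from \prettyref{eq:optimalestimationaccuracytree} (and likewise for $j$), one gets $\expect{|\expect{\sigma_i\sigma_j\mid G}|}\le\expect{|m_i|}\expect{|m_j|}+o(1)=(1-2p^\ast_{T^t})^2+o(1)$, which is the desired \prettyref{eq:correlationbound}; the contributions of the low-probability bad events are absorbed into the $o(1)$ since all quantities are bounded. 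I expect the main obstacle to be precisely this two-pocket refinement of \prettyref{lmm:asymptoticalIndependence}: one must check that both mechanisms coupling $\sigma_i$ to $\sigma_j$ — the direct (non-)edges between the two neighborhoods and their shared dependence on the global configuration through the non-edges to $C$ — are negligible after conditioning, uniformly in the labelings, so that the joint posterior of $(\sigma_i,\sigma_j)$ genuinely factorizes; the former is controlled by $|G_i^t|,|G_j^t|=n^{o(1)}$ and the latter by the same concentration of $s_C$ used in \prettyref{lmm:asymptoticalIndependence}.
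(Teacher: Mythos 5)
Your proposal is correct and follows essentially the same route as the paper: both rest on a two-vertex extension of \prettyref{lmm:asymptoticalIndependence} (on a high-probability event, $\sigma_i$ and $\sigma_j$ are asymptotically conditionally independent given $G$ and both boundaries, and each marginal localizes to its $t$-ball), combined with the coupling of the two disjoint $t$-balls to two \emph{independent} copies of the tree model, so that $\expect{|m_i||m_j|}\to\expect{|Y_i|}\expect{|Y_j|}=(1-2p^\ast_{T^t})^2$. The only difference is cosmetic: you eliminate the estimator at the outset via $\indc{\hat\sigma_u\neq\sigma_u}-\tfrac12=-\tfrac12\hat\sigma_u\sigma_u$ and the bound $\expect{\hat\sigma_i\hat\sigma_j\sigma_i\sigma_j}\le\expect{|\expect{\sigma_i\sigma_j\mid G}|}$, whereas the paper carries $\hat\sigma$ into the conditional expectation and bounds $\expect{\indc{\hat\sigma_u\neq\sigma_u}-\tfrac12\mid G,\sigma_{\partial G_u^t}}$ by $\tfrac12|X_u|$; the two bookkeepings lead to the same product of tree posteriors.
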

\begin{proof}
Fix any $t \ge 1$. Recall that $G_u^t$ denotes the subgraph induced by vertices whose distance from $u$ is at most $t$
and $\partial G_u^t$ denotes the set of vertices whose distance from $u$ is precisely $t$.
Let $(T_i, \tau_{T_i})$ and $(T_j, \tau_{T_j})$ denote two independent copies of the tree model with $\rho=1/2$ and $a=c$ defined in
\prettyref{def:treemodel}.
The coupling lemma given in \prettyref{lmm:couplingtree} and \prettyref{lmm:asymptoticalIndependence} can be strengthened so that
there exists a sequence  of events $\calE_n$ such that $\prob{\calE_n} \to 1$ and on event $\calE_n$, $(G_i^t, \sigma_{G_i^t} )= (T_i^t, \tau_{T_i^t})$, $(G_j^t, \sigma_{G_j^t}) = (T_j^t, \tau_{T_j^t})$, and
\begin{align}
 \prob{\sigma_i, \sigma_j \big| G, \sigma_{\partial G_i^t}, \sigma_{\partial G_j^t} }  & =
 (1+o(1)) \prob{\sigma_i \big | G, \sigma_{\partial G_i^t}} \prob{ \sigma_j \big |G, \sigma_{\partial G_j^t} } \label{eq:independtwo_X}\\
  \prob{\sigma_i  \big | G, \sigma_{\partial G_i^t}}  & =
\left( 1+o(1) \right)   \prob{\sigma_i \big | G_i^t, \sigma_{\partial G_i^t}}  \label{eq:independtwo_XX} \\
  \prob{\sigma_j \big | G, \sigma_{\partial G_j^t}}  & =
\left( 1+o(1) \right)   \prob{\sigma_j \big | G_j^t, \sigma_{\partial G_j^t}}  \label{eq:independtwo_XXX} .
\end{align}
For $u=i, j$, define
\begin{align*}
X_u &= \prob{ \sigma_u =+ | G,  \sigma_{\partial G_u^t} } - \prob{ \sigma_u =-  | G,  \sigma_{\partial G_u^t } } \\
Y_u &= \prob{ \tau_u =+ | T_u^t,  \tau_{\partial T_u^t} } - \prob{ \tau_u =-  | T_u^t,  \tau_{\partial T_u^t } }.
\end{align*}
Then for any estimator $\hat{\sigma}(G)$, we have that
\begin{align*}
& \expect{ \left(  \indc{ \hat{\sigma}_i \neq \sigma_i } - 1/2 \right)\left(  \indc{ \hat{\sigma}_i \neq \sigma_j } - 1/2\right)  } \\
&=\expect{ \expect{ \left(  \indc{ \hat{\sigma}_i \neq \sigma_i } - 1/2\right)\left(  \indc{ \hat{\sigma}_i \neq \sigma_j } - 1/2\right)  \big| G,  \sigma_{\partial G_i^t}, \sigma_{\partial G_j^t} }  \indc{\calE_n} } + o(1) \\
&= \expect{ \expect{ \left(  \indc{ \hat{\sigma}_i \neq \sigma_i } -1/2 \right)  | G, \sigma_{\partial G_i^t} }
 \expect{ \left(  \indc{ \hat{\sigma}_j \neq \sigma_j } -1/2 \right) \big| G, \sigma_{\partial G_j^t} }
 \indc{\calE_n}} + o(1) \\
 & \le \frac{1}{4}
 \expect{ |X_i |  |X_j |  \indc{\calE_n}} + o(1) \\
 & = \frac{1}{4}
 \expect{ |Y_i|  |Y_j|  \indc{\calE_n}} + o(1)  = \frac{1}{4}  \expect{ |Y_i | |Y_j | } + o(1)  \\
 & =  \frac{1}{4}  \expect{ |Y_i | } \expect{ | Y_j  |} + o(1)
 = \left( 1/2 - p^\ast_{T^t} \right)^2 + o(1),
\end{align*}
where the first and fourth equality follows due to $\prob{\calE_n} \to 1$; the second equality
holds due to \prettyref{eq:independtwo_X}; the first inequality holds due to the fact that
$\prob{ \sigma_i  \neq x |G, \sigma_{\partial G_i^t}  }$ is maximized at $x=-$ if $X_i \ge 0$ and at $x= +$ if $X_i<0$;
the third inequality holds due to \prettyref{eq:independtwo_XX}, \prettyref{eq:independtwo_XXX},
$(G_i^t, \sigma_{G_i^t} )= (T_i^t, \tau_{T_i^t})$, and $(G_j^t, \sigma_{G_j^t}) = (T_j^t, \tau_{T_j^t})$ ; the firth equality follows
because $(T_i, \tau_{T_i})$ and $(T_j, \tau_{T_j} )$ are independent; the last equality follows because $p^\ast_{T^t} = 1/2-\expect{|Y_u|}/2$
by definition.  Hence we get the desired \prettyref{eq:correlationbound}.
\end{proof}
\begin{proof}[Proof of  \prettyref{lmm:treelowerbound_sym}]
Fix any estimator $\hat \sigma(G)$. Notice that by definition of  $\calO(\sigma, \hat{\sigma})$,
\begin{align*}
\calO(\sigma, \hat \sigma) = \frac{1}{2} - \bigg| \frac{1}{n} \sum_{i \in [n]} \indc{ \hat{\sigma}_i \neq \sigma_i } - \frac{1}{2} \bigg|.
\end{align*}
Therefore,
\begin{align}
\expect{\calO(\sigma, \hat \sigma) } & =\frac{1}{2} -  \expect{\bigg| \frac{1}{n} \sum_{i \in [n]} \indc{ \hat{\sigma}_i \neq \sigma_i } - \frac{1}{2} \bigg|}  \ge \frac{1}{2} - \expect{\left( \frac{1}{n} \sum_{i \in [n]} \indc{ \hat{\sigma}_i \neq \sigma_i } - \frac{1}{2} \right)^2}^{1/2},
\label{eq:overlaplowerbound}
\end{align}
where we used the Cauchy-Schwartz  in the last inequality.
Furthermore,
\begin{align*}
 \expect{\left( \frac{1}{n} \sum_{i \in [n]} \indc{ \hat{\sigma}_i \neq \sigma_i } - \frac{1}{2} \right)^2}
 & = \frac{1}{4n} + \frac{n-1}{n}
 \expect{ \left(  \indc{ \hat{\sigma}_1 \neq \sigma_1 } - \frac{1}{2} \right)\left(  \indc{ \hat{\sigma}_2 \neq \sigma_2 } - \frac{1}{2} \right)  },
\end{align*}
where we used the symmetry among vertices.
Applying \prettyref{lmm:correlationbound}, we get that
\begin{align*}
 \expect{\left( \frac{1}{n} \sum_{i \in [n]} \indc{ \hat{\sigma}_i \neq \sigma_i } - \frac{1}{2} \right)^2}
\le \left( \frac{1}{2} - p^\ast_{T^t} \right)^2 + o(1).
\end{align*}
Combining the last displayed equation with \prettyref{eq:overlaplowerbound} and noticing that $p^\ast_{T^t} \le 1/2$,
we get  the desired equality $\expect{\calO(\sigma, \hat \sigma) }  \ge  p^\ast_{T^t} + o(1)$. Since $\hat \sigma$ is arbitrary,
it follows that $\inf_{\hat{\sigma}} \expect{\calO(\sigma, \hat \sigma) }  \ge  p^\ast_{T^t} + o(1) $ and the proof is complete.
\end{proof}

\subsection{Proof of \prettyref{lmm:treeupperbound_sym}}
Before the main proof, we need a key lemma, which gives a recursive formula of $\tilde{\Gamma}^t_i$ on the tree model.
Its proof  is almost identical to the proof of  \prettyref{lmm:recursion} and thus omitted.
 \begin{lemma}\label{lmm:recursion_sym}
 For $t \ge 0$,
\begin{align}
\tilde{\Gamma}^{t+1}_{i }   =  \frac{1}{2} \;  \sum_{ j \in \partial i }   \log  \frac{ \exp \left(  2 \tilde{\Gamma}_j^t  \right) a +b } {  \exp \left( 2 \tilde{\Gamma}_j^t \right) b +a}.
\label{eq:recursion_Gamma_sym}
\end{align}
 with $\tilde{\Gamma}^0_i= \frac{1}{2} \log \frac{1-\alpha}{\alpha} $ if $\tilde{\tau}_i=+$ and $\tilde{\Gamma}^0_i= -\frac{1}{2} \log \frac{1-\alpha}{\alpha} $ if $\tilde{\tau}_i=-$.
  \end{lemma}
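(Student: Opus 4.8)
The plan is to mirror the derivation of \prettyref{lmm:recursion}, now specialized to the symmetric parameters $\rho = 1/2$, $a = c$ and with the exact boundary labels replaced by their $\alpha$-noisy version. First I would dispatch the base case $t = 0$: here $\partial T_i^0 = \{i\}$, and the only observation entering $\tilde{\Gamma}_i^0$ is the noisy label $\tilde{\tau}_i$, which equals $\tau_i$ with probability $1 - \alpha$ and $-\tau_i$ with probability $\alpha$. Hence $\tilde{\Gamma}_i^0 = \frac{1}{2} \log \frac{\prob{\tilde{\tau}_i \,|\, \tau_i = +}}{\prob{\tilde{\tau}_i \,|\, \tau_i = -}}$, which equals $\frac{1}{2}\log\frac{1-\alpha}{\alpha}$ when $\tilde{\tau}_i = +$ and $-\frac{1}{2}\log\frac{1-\alpha}{\alpha}$ when $\tilde{\tau}_i = -$, matching the claimed initialization.

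For the recursive step I would reuse the Poisson-splitting device from the proof of \prettyref{lmm:recursion}: conditionally on $\tau_i$, first generate the total number of children $N_i$ and then assign each child's label independently. In the symmetric regime $d_+ = \rho a + \bar{\rho} b = (a+b)/2 = \rho b + \bar{\rho} c = d_-$, so $N_i \sim \Pois((a+b)/2)$ irrespective of $\tau_i$; consequently the observation of $N_i$ is uninformative about $\tau_i$, and the ``$\frac{-d_+ + d_-}{2}$'' offset that appears in \prettyref{eq:recursionGamma} drops out. A child $j$ of $i$ then receives label $+$ with probability $\rho a / d_+ = a/(a+b)$ if $\tau_i = +$, and with probability $\rho b / d_- = b/(a+b)$ if $\tau_i = -$ (the remaining mass going to label $-$, using $a = c$). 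Conditionally on the labels $\{\tau_j\}_{j \in \partial i}$ the noisy subtrees $(T_j^t, \tilde{\tau}_{\partial T_j^t})$ are mutually independent and each depends on $\tau_i$ only through $\tau_j$, so, writing $p_j^{\pm} = \prob{T_j^t, \tilde{\tau}_{\partial T_j^t} \,|\, \tau_j = \pm}$, one gets
\begin{align*}
\prob{T_i^{t+1}, \tilde{\tau}_{\partial T_i^{t+1}} \,|\, \tau_i = s} = \prod_{j \in \partial i} \Big( \prob{\tau_j = + \,|\, \tau_i = s}\, p_j^+ + \prob{\tau_j = - \,|\, \tau_i = s}\, p_j^- \Big).
\end{align*}
Taking the logarithm of the ratio of the $s = +$ and $s = -$ versions, the $\frac{1}{a+b}$ factors cancel and the per-child contribution becomes $\frac{1}{2}\log\frac{a\, p_j^+ + b\, p_j^-}{b\, p_j^+ + a\, p_j^-}$; dividing numerator and denominator by $p_j^-$ and using $e^{2\tilde{\Gamma}_j^t} = p_j^+/p_j^-$ (with $\tilde{\Gamma}_j^t$ read, as in \prettyref{lmm:recursion}, as the log-likelihood ratio for $\tau_j$ based on $T_j^t$ and $\tilde{\tau}_{\partial T_j^t}$) yields exactly \prettyref{eq:recursion_Gamma_sym}.

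Since this is essentially the $\rho = 1/2$, $a = c$ instance of \prettyref{lmm:recursion} with a different leaf initialization, I do not anticipate a substantive obstacle; the points needing care are (i) verifying $d_+ = d_-$ in the symmetric model, which is precisely what makes $N_i$ uninformative and kills the additive offset, and (ii) the bookkeeping of conditional independence, namely that $(T_j^t, \tilde{\tau}_{\partial T_j^t})$ depends on $\tau_i$ only through $\tau_j$, which licenses both the product over children and the two-term mixture over $\tau_j \in \{\pm\}$. A useful consistency check is that the pair (recursion, initialization) interpolates correctly between the two cases already treated: as $\alpha \to 0$ one recovers $\tilde{\Gamma}_i^0 = \pm\infty$, matching the exact-boundary recursion for $\Lambda^t$ in \prettyref{lmm:recursion}, while at $\alpha = 1/2$ one recovers $\tilde{\Gamma}_i^0 = 0$ and hence the no-side-information recursion for $\Gamma^t$.
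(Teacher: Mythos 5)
Your proof is correct and matches what the paper intends: the paper explicitly states that the proof is ``almost identical to the proof of Lemma~\ref{lmm:recursion} and thus omitted,'' and you have simply written out that omitted argument, specializing to $\rho=1/2$, $a=c$ (so that $d_+=d_-$ and the additive offset vanishes, and $F$ reduces to the symmetric form) and replacing the exact boundary condition $\Lambda_i^0\in\{\pm\infty\}$ by the noisy-leaf likelihood ratio $\tilde\Gamma_i^0=\pm\frac12\log\frac{1-\alpha}{\alpha}$. Your consistency checks at the endpoints $\alpha\to 0$ and $\alpha=1/2$ are a nice sanity confirmation but not logically required.
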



Let $V^{+}=\{ i \in V: \sigma_i=+\}$ and $V^{-}=\{ i \in V: \sigma_i=-\}$.
For an \ER random graph with edge probability $b/n$, it is well-known that if $b \to \infty$ and $b=o(\log n)$,
the maximum degree is at least $ \frac{\log n}{\log (\log n /b)}$ with high probability (see \cite[Appendix A]{HajekWuXuSDP14} for a proof).
Thus, with high probability, at least one vertex in $U$ has more than $ \frac{\log n}{2\log (\log n /b)}$
 neighbors in $V\backslash U$, so that $u_*$ is well-defined.
Due to the symmetry between $+$ and $-$,
without loss of generality, assume that $\sigma_{u_*} =+$.
By the assumption that $|\mu|>2$ and $b=n^{o(1/\log \log n)}$,
it is proved in \cite[Lemma 5.7]{MNS:2013a} that there exists an $\alpha \in (0,1/2)$
and a polynomial-time estimator such that for any $u \in V\backslash U$, when we apply the estimator in Step 3.1
of Algorithm \prettyref{alg:BPplusCorrelated},
 its output satisfies $ | W_u^{+} \Delta V^+ | \le \alpha  n $ and $ |W_u^{-} \Delta V^{-}| \le \alpha  n$
 after relabeling defined in Step 3.2 of Algorithm
 \prettyref{alg:BPplusCorrelated}.  Recall that $\alpha_u$ is the fraction of vertices misclassified by the partition $(W_u^+,  W_u^-)$.
 Thus, $\alpha_u \le \alpha$.

 Fix a vertex $u \in V\backslash U$. For all $i \in \partial G_u^t$,
 let $\tilde{\sigma}_i=+$ if $i \in W_u^{+}$
 and $\tilde{\sigma}_i=-$ if $i \in W_u^{-}$ after the labeling defined in Step 3.2 of Algorithm
 \prettyref{alg:BPplusCorrelated}.
 It is argued in \cite[Section 5.2]{MNS:2013a} that for each $i \in \partial G_u^t$, independently
 at random, $\tilde{\sigma}_i= \sigma_i$ with probability $1-\alpha_u$, and
$\tilde{\sigma}_i= - \sigma_i$ with probability $\alpha_u$.
Consider the tree model $(T_u, \tau)$ with $\rho=1/2$ and $a=c$,
where for each vertex $i \in T_u$, independently at random,
$\tilde{\tau}_i=\tau_i$ with probability $1-\alpha_u$
and $\tilde{\tau}_i= -\tau_i$ with probability $\alpha_u.$
By the coupling lemma given in \prettyref{lmm:couplingtree},
we can construct a coupling such that $(G_u^t, \sigma_{G_u^t}, \tilde{\sigma}_{\partial G_u^t} ) = (T_u^t, \tau_{T_u^t} , \tilde{\tau}_{\partial T_u^t} )$ holds with probability converging to $1$.
Moreover, on the event $(G_u^t, \sigma_{G_u^t},  \tilde{\sigma}_{\partial G_u^t}) = (T_u^t, \tau_{T^t},  \tilde{\tau}_{\partial T_u^t}  )$,
we have that  $R_u^t=\tilde{\Gamma}_u^t$ in view of the definition of $R_u^t$ given in Algorithm
 \prettyref{alg:BPplusCorrelated}, and the recursive formula of $\tilde{\Gamma}_u^t$ given in \prettyref{lmm:recursion_sym}.
Hence,
\begin{align*}
p_{G} ( \hat{\sigma}_{\rm BP}^t )  = \tilde{q}_{T^t, \alpha_u} +o(1),
\end{align*}
where the $o(1)$ term comes from the coupling error. Since  $\tilde{q}_{T^t, \alpha}$ is non-decreasing in $\alpha$,
it follows that
$$
p_{G} ( \hat{\sigma}_{\rm BP}^t )  \le \tilde{q}_{T^t, \alpha} +o(1).
$$
By the definition of $\calO(\hat{\sigma}, \sigma)$ given in \prettyref{eq:defoverlap},
\begin{align*}
\expect{ \calO(\hat{\sigma}_{\rm BP}, \sigma) } \le p_{G} ( \hat{\sigma}_{\rm BP}^t ),
\end{align*}
and the lemma follows by combining the last two displayed equations.

\subsection{Proof of \prettyref{lmm:gaussian_sym}}
Recall that $u_{t+1}=\frac{\mu^2}{4} \expect{ \tanh(u_{t} + \sqrt{u_t} Z ) }$ with $u_1=\frac{(1-2\alpha)^2 \mu^2}{4}$.
In the case $\rho=1/2$ and $\mu=\nu$,
$\theta=0$ and $\lambda = \frac{\mu^2}{4}$, and hence $u_t$ and $v_t$ satisfy the same recursion.
Also, comparing \prettyref{eq:recursion_Gamma_sym}
to \prettyref{eq:recursionGamma},
$\tilde{\Gamma}$ and $\Gamma$ satisfy the same recursion with $\rho =1/2$ and $\mu=\nu$.
Therefore, in view of the proof of \prettyref{lmm:gaussiandensityevolution}, to prove the lemma, it suffices to
show that in the base case with $t=1$,
\begin{align}
\sup_x \bigg|     \prob{  \frac{ \tilde{Z}_{\pm}^{ 1}  \mp u_{1}  }{  \sqrt{ u_1 } } \leq x } - \prob{Z \leq x}    \bigg|  = O(b^{-1/2}), \label{eq:gaussiansymmetryone}
 \end{align}
 Recall that $\tilde{\Gamma}^0_i= \frac{1}{2} \log \frac{1-\alpha}{\alpha} $ if $\tilde{\tau}_i=+$ and $\tilde{\Gamma}^0_i= -\frac{1}{2} \log \frac{1-\alpha}{\alpha} $ if $\tilde{\tau}_i=-$. Also, for all $i \in \partial u$, independently at random $\tilde{\tau}_i=\tau_i$ with probability $1-\alpha$,
 and  $\tilde{\tau}_i=\tau_i$ with probability $\alpha.$
 Let $ x^* = \frac{1}{2} \log \frac{a - \alpha(a-b) }{ b+ \alpha(a-b) }$, $d=\frac{a+b}{2}$
 and $\eta= \frac{b+(a-b)\alpha}{a+b}$.
 Thus, in view of the recursion given in \prettyref{eq:recursion_Gamma_sym},
$
 \tilde{\Gamma}^1_u = \sum_{i=1}^{N_d} X_i,
$
where conditional on $\tau_u=\pm$, $N_d \sim \Pois(d)$ is independent of $\{X_i\}$; $\{X_i\}$ are i.i.d.\ such that
conditional on $\tau_u=+$, $ X_i= x^* $ with probability $1-\eta$ and $X_i=-x^* $ with probability $\eta$;
conditional on $\tau_u=-$, $ X_i=x^* $ with probability $\eta$ and $X_i=-x^*$ with probability $1-\eta$.
Taylor expansion yields that
\begin{align*}
x^*= \frac{(1-2\alpha) (a-b) }{2b} + O(b^{-1}).
\end{align*}
By conditioning the label of $u$ is $-$, it follows that
\begin{align*}
\expect{\tilde{Z}_-^1} & = - d (1-2\eta) x  = -\frac{(1-2\alpha)(a-b)}{2} x =
-\frac{(1-2\alpha)^2 (a-b)^2}{4b} +  O(b^{-1/2})  \\
& =  - u_1 +O(b^{-1/2}), \\
\var\left( \tilde{Z}_-^1 \right) &= d x^2 = \frac{(1-2\alpha)^2 (a-b)^2}{4b} +O(b^{-1/2})=u_1 +O(b^{-1/2}) .
\end{align*}
In view of \prettyref{lmm:Poisson_BE},
we get that
$$
\sup_x \bigg|     \prob{  \frac{  \tilde{Z}_{\pm}^{ 1}  - \expect{ \tilde{Z}_{\pm}^{ 1}   } }{  \sqrt{ \var \left(  \tilde{Z}_{\pm}^{ 1}  \right)  }  } \leq x} -  \prob{Z \leq x}   \bigg|  \leq  O (b^{-1/2} ).
$$
Hence, we proved \prettyref{eq:gaussiansymmetryone} for $\tilde{Z}_-^1$. By symmetry between $-$ and $+$,
the desired \prettyref{eq:gaussiansymmetryone} also holds for $\tilde{Z}_+^1$.

\section{A Data-driven Choice of the Parameter $\alpha$ in Algorithm \prettyref{alg:BPplusCorrelated} } \label{app:estimateofalpha}

\begin{algorithm}[htb]
\caption{Estimation of $\alpha_u$}\label{alg:estimationalpha}
\begin{algorithmic}[1]
\STATE Take $U \subset V$ to be a random subset of size $\lfloor \sqrt{n} \rfloor $ and $S  \subset V$
to be a random subset of size $\lfloor n/\log b \rfloor$.
Let $u_* \in U$ be a random vertex in $U$
with at least $\frac{\log n}{2\log (\log n/b) }$neighbors in $V\backslash U\backslash S.$
\STATE For $u \in V\backslash U \backslash S$ do
\begin{enumerate}
\item Run a polynomial-time estimator capable of correlated recovery on the subgraph induced by vertices not in $G_u^{t-1}$ and
$U \cup S$, and let $W_u^+$ and $W_u^-$ denote the output of the partition.
\item Relabel $W_u^+$ and $W_u^-$ such that if $a>b$, then $u_*$ has more neighbors in $W_u^+$ than $W_u^-$; otherwise,
$u_*$ has more neighbors in $W_u^-$ than $W_u^+$.
\item Take $T \subset W_u^+ \cup W_u^-$ to be a random subset of size  $\lfloor \sqrt{n} \rfloor $. Let $T_0\subset T$ denote the set of vertices with at least $\frac{\log n}{2\log (\log n /b)  }$ neighbors in $S$. Let $T_1$ denote a random subset of
$T_0$ with size $\lfloor \frac{\log n }{b} \rfloor$.
\item Run a polynomial-time estimator capable of correlated recovery on the subgraph induced by vertices not in $G_u^{t-1}$ and
$U \cup T$.  Let $W^{+}$
and $W^{-}$ denote the output of the partition. Relabel $(W^+, W^-)$ in the same way as $(W_u^+, W_u^-)$.
\item Let $T_1^+$ consists of vertices $i \in T_1$ with more neighbors in $W^+$ than $W^-$; let $T_1^-=T_1\backslash T_1^+$.
Define $\hat{\alpha}_u= \frac{ | T_1^+ \cap W_u^- | + | T_1^- \cap W_u^+ | }{|T_1 |}.$
\end{enumerate}
\end{algorithmic}
\end{algorithm}

Algorithm \prettyref{alg:BPplusCorrelated} requires the knowledge of $\alpha_u$, which is given by $
\alpha_u=| W_u^{+} \Delta V^+ |/n = |W_u^{-} \Delta V^{-}|/n.
$
In this section, we show that there exists an efficient estimator $\hat{\alpha}_u$ such that $\hat{\alpha}_u=\alpha_u+o(1)$
with high probability. Our estimation procedure is given in Algorithm \prettyref{alg:estimationalpha}.


\begin{lemma}
Let $\hat{\alpha}_u$ be the output of Algorithm \prettyref{alg:estimationalpha}. Then with probability converging to $1$, $\hat{\alpha}_u=\alpha_u+o(1)$.
\end{lemma}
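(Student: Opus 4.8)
The plan is to exhibit $\hat\alpha_u$ as a concentrated empirical version of the misclassification rate $\alpha_u=|W_u^+\triangle V^+|/n$ of the first-run partition: the second, independently run correlated partition $(W^+,W^-)$ is used to attach to each of the $\lfloor\log n/b\rfloor$ probe vertices in $T_1$ a \emph{correct} ``reference label'' (with high probability), after which ordinary subsampling concentration along the chain $W_u^+\cup W_u^-\supseteq T\supseteq T_0\supseteq T_1$ finishes the job. First I would record the standing facts, all holding with high probability: by the max-degree estimate for sparse \ER graphs [cf.~\cite[Appendix~A]{HajekWuXuSDP14}], since $b\to\infty$ and $b=o(\log n)$, the anchor $u_*$ exists and $T_0,T_1$ are well defined (i.e.\ $|T_0|\ge\lfloor\log n/b\rfloor$); and by the correlated-recovery guarantee of \cite{Mossel13} / \cite[Lemma~5.7]{MNS:2013a}, using $|\mu|>2$, both the run producing $(W_u^+,W_u^-)$ and the run producing $(W^+,W^-)$ output balanced partitions whose misclassification fraction relative to $(V^+,V^-)$ is at most $1/2-\delta$ for a fixed $\delta>0$, and both, being relabeled via $u_*$ exactly as in the proof of \prettyref{lmm:treeupperbound_sym}, are aligned with $(V^+,V^-)$ under the convention $\sigma_{u_*}=+$.

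The core is a freshness observation: conditionally on $\sigma$, on the sampled sets $U,S,T,T_0,T_1$, and on $(W_u^+,W_u^-)$, the partition $(W^+,W^-)$ is a function of the subgraph induced on $V\setminus(G_u^{t-1}\cup U\cup T)$ alone, hence is independent of every edge incident to $T$; so for each $i\in T_1$, given in addition $(W^+,W^-)$, the edges from $i$ into $W^+\cup W^-$ are fresh $\Bern(a/n)$ or $\Bern(b/n)$ variables, and for distinct $i\in T_1$ these edge sets are disjoint. Because $i\in T_0$ has at least $\frac{\log n}{2\log(\log n/b)}$ neighbors inside $S\subseteq W^+\cup W^-$, and because --- the partition being balanced and $\delta$-correlated on \emph{each} side --- a random neighbor of $i$ carries the $W$-label $\sigma_i$ with probability $\tfrac12+\Theta(\delta/\sqrt b)$, a Hoeffding bound gives that the majority rule defining $T_1^\pm$ returns the true $\sigma_i$ except with probability $\exp\!\big(-\Omega\big(\tfrac{\log n}{b\log(\log n/b)}\big)\big)$; since $|T_1|\le\log n$, a union bound makes this $o(1)$, so with high probability the reference label equals $\sigma_i$ \emph{simultaneously} for all $i\in T_1$.

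Next I would check that $T_1$ is an asymptotically unbiased sample for $\alpha_u$. Put $\mathrm{err}(i)=\indc{\tilde\sigma_i\neq\sigma_i}$ for $i\in W_u^+\cup W_u^-$, where $\tilde\sigma$ is the relabeled first-run labeling; then $\frac{1}{|W_u^+\cup W_u^-|}\sum_i\mathrm{err}(i)=\alpha_u+o(1)$ since $|W_u^+\cup W_u^-|=(1-o(1))n$. The set $T$ is a uniform random subset of $W_u^+\cup W_u^-$ drawn after $(W_u^+,W_u^-)$, so hypergeometric concentration yields $\frac1{|T|}\sum_{i\in T}\mathrm{err}(i)=\alpha_u+o(1)$ with high probability; moreover, conditionally on $\sigma$ and $(W_u^+,W_u^-)$, membership of $i$ in $T_0$ depends only on the edges from $i$ to $S$ --- disjoint from, hence independent of, the subgraph determining $\tilde\sigma_i$, as $S$ is excluded from the first run --- and since $\rho=\tfrac12$ and $a=c$ the law of $\deg(i,S)$ is, up to $o(1)$ in total variation, the same for $\sigma_i=+$ and $\sigma_i=-$. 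Hence membership in $T_0$ is, to leading order, independent of $\mathrm{err}(i)$, giving $\frac1{|T_0|}\sum_{i\in T_0}\mathrm{err}(i)=\alpha_u+o(1)$, and then, after passing to the uniform subsample $T_1$ of size $\lfloor\log n/b\rfloor\to\infty$, $\frac1{|T_1|}\sum_{i\in T_1}\mathrm{err}(i)=\alpha_u+o(1)$, all with high probability. Finally $\hat\alpha_u=\frac1{|T_1|}\big|\{i\in T_1:\text{reference label of }i\neq\tilde\sigma_i\}\big|$, and on the high-probability event that every reference label equals $\sigma_i$ this is exactly $\frac1{|T_1|}\sum_{i\in T_1}\mathrm{err}(i)=\alpha_u+o(1)$.

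The step I expect to be the real obstacle is the amplification together with its parameter bookkeeping: verifying \emph{uniformly} over the admissible range $b\to\infty,\ b=o(\log n)$ that (i) $|T_0|\ge\lfloor\log n/b\rfloor$ with high probability, and (ii) the per-probe failure probability $\exp(-\Omega(\log n/(b\log(\log n/b))))$ outruns the union bound over the $\lfloor\log n/b\rfloor$ vertices of $T_1$ --- both of which depend delicately on the choices $|S|=\lfloor n/\log b\rfloor$, $|T|=\lfloor\sqrt n\rfloor$ and the degree threshold $\frac{\log n}{2\log(\log n/b)}$ --- and, relatedly, (iii) the ``balanced, two-sided $\delta$-correlation'' property of the black-box correlated-recovery output, which is what legitimizes the $\tfrac12+\Theta(\delta/\sqrt b)$ bias of a random neighbor's label used in step (ii).
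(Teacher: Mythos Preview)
Your proposal is correct and follows essentially the same two-step strategy as the paper: (1) show that the majority-vote reference labels $(T_1^+,T_1^-)$ recover the true labels $(T_1\cap V^+,\,T_1\cap V^-)$ with high probability via a Chernoff/Hoeffding bound exploiting the freshness of edges from $T$ to the second-run partition, and (2) show that the empirical error fraction over $T_1$ concentrates around $\alpha_u$. The parameter bookkeeping you flag as the obstacle is handled exactly as you outline, and the paper's computation $\exp(-\Omega(k^*(a-b)^2/a^2))=\exp(-\Omega(\log n/(b\log(\log n/b))))$ matches your $\tfrac12+\Theta(\delta/\sqrt b)$ bias calculation.

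One small point where your write-up is actually more careful than the paper's: for step (2) the paper simply asserts that $|T_1\cap V^+\cap W_u^-|+|T_1\cap V^-\cap W_u^+|\sim\Binom(|T_1|,\alpha_u)$ on the grounds that ``$T_1$ is randomly chosen and independent of $(W_u^+,W_u^-)$,'' glossing over the degree-in-$S$ filtering that defines $T_0$. Your explicit chain argument --- hypergeometric concentration on $T$, then conditional independence of $T_0$-membership from $\mathrm{err}(i)$ using that edges into $S$ are unused by the first run and that $\deg(i,S)$ has (to $o(1)$) the same law for both signs in the symmetric model, then uniform subsampling to $T_1$ --- fills this in properly.
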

\begin{proof}
We assume $a>b$ in the proof; the case $a<b$ can be proved similarly.
Let $k^\ast=  \frac{\log n}{2\log (\log n /b)  }$.
For any vertex $i$ in $T$, let $d_i$ denote its number of neighbors in $S$.
Then $d_i$ is stochastically lower bounded by $\Binom( |S|, b/n)$.
Since $|S| =\lfloor n/\log b \rfloor$, it follows that (see \cite[Appendix A]{HajekWuXuSDP14} for a proof)
\begin{align*}
- \log \prob{d_i \ge k^\ast } \le \frac{1}{2} \log n - \log \log n.
\end{align*}
Because $\{d_i\}_{i \in T}$ are independent,  the cardinality of set $T_0$ is stochastically lower bounded by
$\Binom( \lfloor \sqrt{n} \rfloor, \log n/\sqrt{n})$. Therefore, with high probability $|T_0| \ge \frac{1}{2} \log n $
and thus $T_1$ is well-defined.

Define the event $E_1$  to be that $T_1^+= T_1 \cap V^+$ and $T_1^- = T_1 \cap V^-$.
We claim that $\prob{E_1} \to 1$.
In fact, fix any vertex $i \in T_1$, suppose $\sigma_i=+$
without loss of generality and let $\calN(i)$ denote the set of its neighbors in $V \backslash T$.
Let
$$
 | W^{+} \Delta V^+ |/n \le \delta n, \quad   |W^{-} \Delta V^{-} | \le \delta n.
$$
Then $\delta \in [0,1/2)$.
Notice that $d_i$ is independent of the partition $W^{+}$ and $W^{-}$. Thus, conditional on $d_i$,
$|\calN(i) \cap W^+ |$ is stochastically lower bounded by $ \Binom( d_i, \frac{a}{a+b}  - \frac{(a-b) \delta} {a+b} )$
and $ | \calN(i) \cap W^-| $ is stochastically upper bounded by $ \Binom( d_i,  \frac{b}{a+b}  + \frac{(a-b) \delta} {a+b} )$.
It follows from the Chernoff bound that conditional on $d_i$,
\begin{align*}
\prob{  |\calN(i) \cap W^+ |  <  \frac{(a-b) d_i}{2(a+b)} } & \le  \eexp^{-\Omega( d_i (a-b)^2/a^2 ) },  \\
\prob{  |\calN(i) \cap W^- |  > \frac{(a-b) d_i}{2(a+b)} } & \le  \eexp^{-\Omega( d_i (a-b)^2/a^2 ) }.
\end{align*}
Due to $d_i \ge k^\ast$ for all $i \in T_0$, it yields that
\begin{align*}
\prob{  |\calN(i) \cap W^+ | < |\calN(i) \cap W^- | } \le 2 \eexp^{-\Omega( k^\ast (a-b)^2/a^2 ) }.
\end{align*}
Applying the union bound, we get that
\begin{align*}
\prob{ \exists i \in T_1 \cap V^+,   |\calN(i) \cap W^+ | < |\calN(i) \cap W^- | }  \le \frac{2\log n}{b} \eexp^{-\Omega( k^\ast (a-b)^2/a^2 ) }.
\end{align*}
By the assumption that $a-b/\sqrt{a}=O(1)$ and $b=o(\log n)$, it follows that
\begin{align*}
 \frac{\log n}{b} \eexp^{-\Omega( k^\ast (a-b)^2/a^2 ) } \le \frac{\log n}{b} \exp \left(  -  \Omega\left(\frac{\log n}{b \log (\log n /b) } \right)  \right) =o(1),
\end{align*}
Combing the last two displayed equations gives that with high probability, for all $i \in T_1 \cap V^+$,  $|\calN(i) \cap W^+ | > |\calN(i) \cap W^- | $
and thus $i \in T_1^+$. Similarly, one can show that with high probability, for all $i \in T_1 \cap V^-$, $|\calN(i) \cap W^+ | <  |\calN(i) \cap W^- | $ and thus $i \in T_1^-$.
Hence, $\prob{E_1} \to 1$.

Finally, we show $\hat{\alpha}_u=\alpha_u+o(1)$ with high probability.
Let $W_u=W_u^+ \cup W_u^-$.
Notice that $T_1$ is randomly chosen and independent of the partition $(W_u^+, W_u^-)$.
Hence for $i \in T_1 $, it lies in $(W_u^- \cap V^+) \cup (W_u^+ \cap V^-) $ with probability $\alpha_u$.
Therefore,
\begin{align*}
|T_1 \cap V^+ \cap W_u^-| + |T_1 \cap V^- \cap W_u^+ |  \sim \Binom( |T_1 |, \alpha_u).
\end{align*}
Define the event $E_2$ to be
\begin{align*}
\frac{|T_1 \cap V^+ \cap W_u^-| + |T_1 \cap V^- \cap W_u^+ | }{ |T_1 | }  =  \alpha_u \left(1+ (b/\log n )^{1/4} \right).
\end{align*}
Then it follows from the Chernoff bounds that $\prob{E_2} \to 1$. By the union bound, $\prob{E_1 \cap E_2} \to 1$.
Notice that on the event $E_1 \cap E_2$,
\begin{align*}
\hat{\alpha}_u = \frac{|T_1 \cap V^+ \cap W_u^-| + |T_1 \cap V^- \cap W_u^+ | }{ |T_1| }  =  \alpha_u \left(1+ (b/\log n)^{1/4} \right).
\end{align*}
Therefore, we conclude that $\hat{\alpha}_u=\alpha_u+o(1)$ with high probability.
\end{proof}
\end{document}